\newtheorem{theorem}{Theorem}
\newtheorem{lemma}{Lemma}
\newtheorem{assumption}{Assumption}
\newtheorem{remark}{Remark}
\newtheorem{proposition}{Proposition}
\title{Expected Sarsa($\lambda$) with Control Variate for Variance Reduction }
\begin{document}
\author{Long Yang, Yu Zhang, Jun Wen, Qian Zheng, Pengfei Li, Gang Pan
 \\
  Department of Computer Science,
  Zhejiang University\\
  \texttt{\{yanglong,hzzhangyu,junwen,qianzheng,pfl,gpan\}@zju.edu.cn} 
}
\maketitle

\begin{abstract}
Off-policy learning is powerful for reinforcement learning.
However, the high variance of off-policy evaluation is a critical challenge, which causes off-policy learning falls into an uncontrolled instability.
In this paper, for reducing the variance, we introduce control variate technique to $\mathtt{Expected}$ $\mathtt{Sarsa}$($\lambda$) and propose a tabular $\mathtt{ES}$($\lambda$)-$\mathtt{CV}$ algorithm.
We prove that if a proper estimator of value function reaches, the proposed $\mathtt{ES}$($\lambda$)-$\mathtt{CV}$ enjoys a lower variance than $\mathtt{Expected}$ $\mathtt{Sarsa}$($\lambda$).
Furthermore, to extend $\mathtt{ES}$($\lambda$)-$\mathtt{CV}$ to be a convergent algorithm with linear function approximation,
we propose the $\mathtt{GES}$($\lambda$) algorithm under the convex-concave saddle-point formulation.
We prove that the convergence rate of $\mathtt{GES}$($\lambda$) achieves $\mathcal{O}(1/T)$, which matches or outperforms lots of state-of-art gradient-based algorithms, but we use a more relaxed condition.
Numerical experiments show that the proposed algorithm performs better with lower variance than several state-of-art gradient-based TD learning algorithms: $\mathtt{GQ}$($\lambda$), $\mathtt{GTB}$($\lambda$) and $\mathtt{ABQ}$($\zeta$).
\end{abstract}

\section{Introduction}

Off-policy learning is powerful for reinforcement learning due to it learns the target policy from the data generated by another policy~\cite{sutton1998reinforcement}.
However, suffering high variance is a critical challenge for off-policy learning \cite{Tamar2016learning}, 
which roots in the discrepancy of distribution between target policy and behavior policy.
The resources of high variance of off-policy learning can be divided into two parts, \textbf{(I)} one is tabular case which has to do with the target of the update, \textbf{(II)} one is with function approximation which has to do with the distribution of the update \cite{sutton2018reinforcement}.

In this paper, we mainly focus on the variance reduce technique to an important off-policy algorithm: $\mathtt{Expected}$ $\mathtt{Sarsa}$($\lambda$).
We introduce control variate to $\mathtt{Expected}$ $\mathtt{Sarsa}$($\lambda$) and propose $\mathtt{Expected}$ $\mathtt{Sarsa}$($\lambda$) with control variate ($\mathtt{ES}$($\lambda$)-$\mathtt{CV}$) for the tabular case.
The control variate method is one of the most effective variance reduction techniques in statistical inference~\cite{rubinstein2016simulation}.
Control variate is an additional term that has zero expectation, which implies introducing control variate does not change the expectation of update.
Thus, learning with control variate does not introduce any biases, but it is potential to enjoy much lower variance~\cite{thomas2016data,de2018per,liu2018ACT}.
Sutton and Barto \shortcite{sutton2018reinforcement} (section 12.9) firstly introduces control variate to $\mathtt{Expected}$ $\mathtt{Sarsa}$($\lambda$), but their analysis is limited in linear function approximation. 
Later, De Asis and Sutton~\shortcite{de2018per} further introduce control variate to multi-step TD learning, but it constrains on off-line learning (which is extremely expensive for training).

Despite being easy to implement, competitive to the state of the art methods, and being used in practice, in RL, the TD learning with control variate technique lacks a robust theoretical analysis.
In this paper, we focus on the theoretical analysis of $\mathtt{ES}$($\lambda$)-$\mathtt{CV}$.
We prove that the tabular $\mathtt{ES}$($\lambda$)-$\mathtt{CV}$ converges at an exponential fast for off-policy evaluation without biases.
Furthermore, we analyze all the random sources lead to the variance of $\mathtt{ES}$($\lambda$)-$\mathtt{CV}$, and
we prove that if a proper estimator of value function reaches, $\mathtt{ES}$($\lambda$)-$\mathtt{CV}$ enjoys a lower variance than $\mathtt{Expected}$ $\mathtt{Sarsa}$($\lambda$).

Furthermore, we show the variance reduction way presented by~\cite{sutton2018reinforcement} (section 12.9) to extend $\mathtt{ES}$($\lambda$)-$\mathtt{CV}$ with function approximation is unstable.
Although this instability has been realized by Sutton and Barto \shortcite{sutton2018reinforcement},
it is only an intuitive guess inspired previous works \cite{maei2011gradient,mahmood2017_a_incremental}.
In this paper,
we provide a simple but rigorous theoretical analysis to illustrate the instability appears in ~\cite{sutton2018reinforcement}. 
We also demonstrate this instability by a typical example.

To extend the $\mathtt{ES}$($\lambda$)-$\mathtt{CV}$ with function approximation be a convergent and stable algorithm, we propose $\mathtt{GES}$($\lambda$) algorithm under the 
the convex-concave saddle-point formulation~\cite{liu2015finite}.
We prove the convergence rate of  $\mathtt{GES}$($\lambda$) achieves $\mathcal{O}(1/T)$, where $T$ is the number of iterations. 
Our $\mathcal{O}(1/T)$ matches or outperforms extensive state-of-art works~\cite{Nathaniel2015ontd,liu2015finite,wang2017finite,gal2018finite,gal2018finitesample,touati2018convergent}, 
with a more relaxed condition than theirs.
Besides, we prove the results of convergence rate without the assumption that the objective is strongly convex in the primal space and strongly concave in the dual space~\cite{Balamurugan2016Stochastic}.

Finally, we conduct numerical experiments to show that the proposed algorithm is stable and converges faster with lower variance than lots of state-of-art  gradient-based TD learning algorithms: $\mathtt{GQ}$($\lambda$) \cite{maei2010GQ}, $\mathtt{GTB}$ ($\lambda$) \cite{touati2018convergent}, and $\mathtt{ABQ}$ ($\zeta$) \cite{Mahmood2017_b_multi}.

\subsection{Contributions}
 \begin{itemize}
 \item 
We introduce control variate technique to $\mathtt{Expected}$ $\mathtt{Sarsa}$($\lambda$) and propose a tabular $\mathtt{ES}$($\lambda$)-$\mathtt{CV}$ algorithm.
We prove that if a proper estimator of value function reaches, the proposed $\mathtt{ES}$($\lambda$)-$\mathtt{CV}$ enjoys a lower variance than $\mathtt{Expected}$ $\mathtt{Sarsa}$($\lambda$).

  \item 
We propose the $\mathtt{GES}$($\lambda$), which extends $\mathtt{ES}$($\lambda$)-$\mathtt{CV}$ to be a convergent algorithm with linear function approximation.
We prove that the convergence rate of $\mathtt{GES}$($\lambda$) achieves $\mathcal{O}(1/T)$, which matches or outperforms lots of state-of-art gradient-based algorithms, but we use a more relaxed condition.
 
  \end{itemize}
\section{Preliminary and Some Notations}
 In this section, we introduce some necessary notations about reinforcement learning, temporal difference learning and $\lambda$-return.
 For the limitation of space, we more discussions about $\lambda$-return in Appendix A and B.

\textbf{Reinforcement Learning}
The reinforcement learning (RL) is often formalized as \emph{Markov decision processes} (MDP)~\cite{sutton1998reinforcement} which considers 5-tuples form $\mathcal{M}=(\mathcal{S},\mathcal{A},\mathcal{P},\mathcal{R},\gamma)$. $\mathcal{S}$ is the set contains all states, $\mathcal{A}$ is the set contains all actions.
$\mathcal{P} : \mathcal{S}\times\mathcal{A}\times\mathcal{S}\rightarrow[0,1]$,
$P_{s s^{'}}^a=\mathcal{P}(S_{t}=s^{'}|S_{t-1}=s,A_{t-1}=a)$ is the probability for the state transition from $s$ to $s^{'}$ under taking the action $a$.
$\mathcal{R} : \mathcal{S}\times\mathcal{A}\rightarrow\mathbb{R}^{1}$, $\mathcal{R}_{s}^{a}=\mathbb{E}[R_{t+1}|S_{t}=s,A_{t}=a]$. 
$\gamma\in(0,1)$ is the discount factor. 

A \emph{policy} is a probability distribution on $\mathcal{S}\times\mathcal{A}$. 
\emph{Target policy} $\pi$ is the policy will be learned and 
\emph{behavior policy} $\mu$ is used to generate behavior. 
$\tau=\{S_{t}, A_{t}, R_{t+1}\}_{t\ge 0}$ denotes a \emph{trajectory}, 
where $A_{t}\sim\mu(\cdot|S_{t})$ and $S_{t+1}\sim \mathcal{P}(\cdot|S_{t},A_{t})$. 
For a given policy $\pi$, its \emph{state-action value function} $ q^{\pi}(s,a) = \mathbb{E}_{\pi}[G_{t}|S_{t} = s,A_{t}=a]
$, \emph{state value function} $v^{\pi}(s) = \mathbb{E}_{\pi}[G_{t}|S_{t} = s]$,
where $G_{t}=\sum_{k=0}^{\infty}\gamma^{k}R_{k+t+1}$ and $\mathbb{E}_{\pi}[\cdot|\cdot]$ denotes an conditional  expectation on all actions which be selected according to $\pi$.
It is known that $q^{\pi}(s,a)$ is the unique fixed point ~\cite{bertsekas2012dynamic} of \emph{Bellman operator} $\mathcal{B}^{\pi}$,
\begin{flalign}
\label{bellman-equation}
\mathcal{B}^{\pi} q^{\pi}=q^{\pi},
\end{flalign}
which is known as \emph{Bellman equation}, 
where \[\mathcal{B}^{\pi}:  q\mapsto R+\gamma P^{\pi}q,\]
 $P^{\pi}$$\in\mathbb{R}^{|\mathcal{S}| \times |\mathcal{S}|}$ and $R$$\in\mathbb{R}^{|\mathcal{S}|\times|\mathcal{A}|}$, the corresponding elements of $P^{\pi}$ and $R$ are:
 \[
 P^{\pi}_{ss^{'}}= \sum_{a \in \mathcal{A}}\pi(a|s)P^{a}_{ss^{'}},R(s,a)=\mathcal{R}_{s}^{a}.
 \]

\textbf{TD Learning}~
Temporal difference (TD) learning 
\cite{sutton1988learning} 
is one of the most important methods to solve model-free RL (in which, we cannot get $\mathcal{P}$).
For the trajectory $\tau$, TD learning is defined as, $\forall ~t\ge0$
\begin{flalign}
\label{td-learning}
Q(S_{t},A_{t})\leftarrow Q(S_{t},A_{t})+\alpha_t\delta_{t},
\end{flalign}
where $Q(\cdot,\cdot)$ is an estimate of $q^{\pi}$, $\alpha_t$ is step-size and $\delta_{t}$ is TD error. 
Let $Q_{t}\overset{\text{def}}=Q(S_{t},A_{t})$,
if $\delta_{t}$ is \[\delta_{t}^{\text{S}}\overset{\text{def}}=R_{t+1}+\gamma Q_{t+1} - Q_{t},\]
above update (\ref{td-learning}) is $\mathtt{Sarsa}$ algorithm~\cite{rummery1994line}.
If $\delta_{t}$ is 
\begin{flalign}
\label{def:es-delta}
\delta_{t}^{\text{ES}}=R_{t+1}+\mathbb{E}_{\pi}[Q(S_{t+1},\cdot)]- Q_{t},
\end{flalign}
update (\ref{td-learning}) is $\mathtt{Expected~Sarsa}$~\cite{van2009theoretical}, where
$\mathbb{E}_{\pi}[Q(S_{t+1},\cdot)]=\sum_{a\in\mathcal{A}}\pi(a|S_{t+1})Q(S_{t+1},a)$. 
If $\pi$ is reduced to greedy policy, then $\mathtt{Expected~Sarsa}$ reduces to $\mathtt{Q\text{-}learning}$~\cite{watkins1989learning}.

\textbf{Expected Sarsa$(\lambda)$}~
The standard \emph{forward view} of $\lambda$-return \cite{sutton1998reinforcement} of on-policy $\mathtt{Expected~Sarsa}$ is defined as follows,
\begin{flalign}
\label{def:ES}
G_{t}^{\lambda,\text{ES}}=(1-\lambda)\sum_{n=1}^{\infty}\lambda^{n-1}G_{t}^{t+n},
\end{flalign}
where $G_{t}^{t+n}=\sum_{i=0}^{n-1}\gamma^{i}R_{t+i+1}+\gamma^{n}\bar{Q}_{t+n}$ 
is $n$-\emph{step return} of $\mathtt{Expected~Sarsa}$, and $\bar{Q}_{t+n}=\mathbb{E}_{\pi}[Q(S_{t+n},\cdot)]$.
We can write $G_{t}^{\lambda,\text{ES}}$ recursively as follows (the detail is provided in Appendix A),
\begin{flalign}
\label{on-plicy-ES-lambda-recursive}
G_{t}^{\lambda,\text{ES}}=R_{t+1}+\gamma[(1-\lambda)\bar{Q}_{t+1}+\lambda G_{t+1}^{\lambda,\text{ES}}].
\end{flalign}
Now, we introduce an unbiased
\footnote{
    How to define the $\lambda$-return of $\mathtt{Expected~Sarsa}$ for off-policy learning? 
    Can we follow the way of (\ref{def:ES}) straightforwardly? 
    Unfortunately, for the off-policy, the above idea cannot converge to $q^{\pi}$. 
    In fact, $n$-step return of $\mathtt{Expected~Sarsa}$ is sampled according to \[R_{t:t+n}=\sum_{t=0}^{n}\gamma^{t}(P^{\mu})^{t}R_{t+1}+\gamma^{n+1}(P^{\mu})^{n}P^{\pi}Q.\]
    Then according to (\ref{def:ES}), we define the $\lambda$-return of $\mathtt{Expected~Sarsa}$ as follows, 
    \[
    (1-\lambda)\sum_{n=0}^{\infty}\lambda^{n}
    R_{t:t+n}
    =((1-\lambda)\mathcal{B}^{\pi}+\lambda\mathcal{B}^{\mu})Q,
    \] which converges to $(1-\lambda)q^{\pi}+\lambda q^{\mu}\ne q^{\pi}$. 
    This is 
    the fixed point of $(1-\lambda)\mathcal{B}^{\pi}+\lambda\mathcal{B}^{\mu}\ne\mathcal{B}^{\pi}$ and it is a biased estimate of $q^{\pi}$.
}
recursive $\lambda$-return of $\mathtt{Expected~Sarsa}$ for off-policy learning,
\begin{flalign}
\label{off-es-recursive}
G_{t}^{\lambda\rho,\text{ES}}=R_{t+1}+\gamma[(1-\lambda)\bar{Q}_{t+1}+\lambda\rho_{t+1}G_{t+1}^{\lambda\rho,\text{ES}}],
\end{flalign}
where $\rho_{t+1}=\pi(A_{t+1}|S_{t+1})/\mu(A_{t+1}|S_{t+1})$ is importance sampling. 
Eq.(\ref{off-es-recursive}) firstly appears in~\cite{maei2010GQ,maei2011gradient}, but in which it is limited in function approximation. We develop (\ref{off-es-recursive}) to be a general version which is conducive to the theoretical analysis of the following paragraph.
The following Proposition \ref{prop2} illustrates that $G_{t}^{\lambda\rho,\text{ES}}$ (\ref{off-es-recursive}) is an unbiased estimate of $q^{\pi}$.
\begin{proposition}
    \label{prop2}
    Let $\mu$ and $\pi$ be the behavior and target policy, respectively. 
    For the $\lambda$-return (\ref{off-es-recursive}), 
    we have
    \[\mathbb{E}_{\mu}[G_{t}^{\lambda\rho,\emph{ES}}|(S_{t},A_{t})=(s,a)]=q^{\pi}(s,a).\]
\end{proposition}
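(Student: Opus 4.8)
The plan is to take the conditional expectation $\mathbb{E}_{\mu}[\,\cdot\mid(S_t,A_t)=(s,a)]$ of both sides of the recursive definition (\ref{off-es-recursive}) and show that the resulting quantity obeys a Bellman-type fixed-point equation whose unique solution is $q^{\pi}$. First I would set $f(s,a)\overset{\text{def}}{=}\mathbb{E}_{\mu}[G_{t}^{\lambda\rho,\text{ES}}\mid(S_t,A_t)=(s,a)]$ and, by linearity of expectation applied to (\ref{off-es-recursive}), split the right-hand side into three pieces: the one-step reward, the bootstrap term $\gamma(1-\lambda)\bar Q_{t+1}$, and the recursive term $\gamma\lambda\rho_{t+1}G_{t+1}^{\lambda\rho,\text{ES}}$. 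The reward piece gives $\mathcal R_{s}^{a}=R(s,a)$ because the conditional law of $R_{t+1}$ depends only on $(s,a)$, and the bootstrap piece gives $\gamma(1-\lambda)\sum_{s'}P_{ss'}^{a}\sum_{a'}\pi(a'|s')Q(s',a')$ after averaging over $S_{t+1}$.

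The crux is the recursive term, and this is the step I expect to be the main obstacle. Here I would condition first on $(S_{t+1},A_{t+1})$ and use the Markov property together with time-homogeneity, so that $G_{t+1}^{\lambda\rho,\text{ES}}$ contributes $f(S_{t+1},A_{t+1})$, while $\rho_{t+1}$ is measurable with respect to $(S_{t+1},A_{t+1})$ and can be pulled out. Averaging over $A_{t+1}\sim\mu(\cdot|S_{t+1})$ is exactly where the importance ratio does its work: $\sum_{a'}\mu(a'|s')\rho(s',a')f(s',a')=\sum_{a'}\pi(a'|s')f(s',a')$, converting the behaviour-policy average into a target-policy average. The care needed is the measurability and tower-property bookkeeping, plus checking that the nested products of importance ratios are integrable so the interchange of expectation with the infinite recursion is legitimate; since $\mathbb{E}_{\mu}[\rho_{t+1}\mid S_{t+1}]=1$ and each level of the recursion carries the factor $\gamma\lambda<1$, this is controlled.

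Combining the three pieces yields the affine fixed-point equation
\begin{flalign*}
f(s,a)=R(s,a)+\gamma(1-\lambda)\sum_{s'}P_{ss'}^{a}\bar Q(s')+\gamma\lambda\sum_{s'}P_{ss'}^{a}\sum_{a'}\pi(a'|s')f(s',a'),
\end{flalign*}
in which the linear operator acting on $f$ is $\gamma\lambda$ times a stochastic matrix (its row sums are $\gamma\lambda\sum_{s'}P_{ss'}^{a}\sum_{a'}\pi(a'|s')=\gamma\lambda$), hence a sup-norm contraction with modulus $\gamma\lambda<1$ and a unique fixed point. Finally I would verify that $q^{\pi}$ solves this equation: when the bootstrap estimate is exact, $\bar Q(s')=\mathbb{E}_{\pi}[q^{\pi}(s',\cdot)]=v^{\pi}(s')$, and substituting $f=q^{\pi}$ makes the inner target-policy average collapse to $v^{\pi}(s')$ as well, so the two $\gamma$-weighted sums merge into the single term $\gamma\sum_{s'}P_{ss'}^{a}v^{\pi}(s')$. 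The action-value form of the Bellman equation (\ref{bellman-equation}), $q^{\pi}(s,a)=R(s,a)+\gamma\sum_{s'}P_{ss'}^{a}v^{\pi}(s')$, then shows that $q^{\pi}$ is a fixed point, and by uniqueness $f\equiv q^{\pi}$, which is the claim.
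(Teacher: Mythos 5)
Your proposal is correct, and its engine is the same as the paper's: expand the recursion (\ref{off-es-recursive}) one step inside $\mathbb{E}_{\mu}[\,\cdot\mid(S_t,A_t)=(s,a)]$ and use the cancellation $\sum_{a'}\mu(a'|s')\tfrac{\pi(a'|s')}{\mu(a'|s')}(\cdot)=\sum_{a'}\pi(a'|s')(\cdot)$ to turn the behaviour-policy average into a target-policy one --- exactly the step the paper performs in Appendix~B. Where you part ways is in how the recursion is closed. The paper ``continues to expand,'' i.e.\ unrolls the recursion until the result is identified with the on-policy Sarsa $\lambda$-return $\mathbb{E}_{\pi}[G_t^{\lambda,\text{S}}]$, and then invokes the unbiasedness of on-policy $n$-step returns established in its Appendix~A (Proposition~\ref{prop1}, Eq.~(\ref{app-1})). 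You instead read the one-step identity as an affine fixed-point equation for $f(s,a)=\mathbb{E}_{\mu}[G_t^{\lambda\rho,\text{ES}}\mid(S_t,A_t)=(s,a)]$, note that the linear part is $\gamma\lambda P^{\pi}$, a sup-norm contraction with modulus $\gamma\lambda<1$, and conclude by verifying that $q^{\pi}$ is the unique fixed point. Your closing is arguably tighter: it replaces the paper's informal ``if we continue to expand'' with a uniqueness argument, at the cost of having to check that $f$ is well defined and that expectation commutes with the infinite recursion (which you flag, and which is unproblematic in a finite MDP under coverage). One caveat you both share, and which you state more honestly than the paper does: verifying that $q^{\pi}$ solves the fixed-point equation requires $\bar Q_{t+1}=v^{\pi}(S_{t+1})$, i.e.\ the bootstrap values must satisfy $P^{\pi}Q=P^{\pi}q^{\pi}$ (e.g.\ $Q=q^{\pi}$); for an arbitrary estimate $Q$ the stated identity fails, and the paper's Eq.~(\ref{app-1}) quietly makes the same substitution. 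Under that shared reading of the proposition, your proof is sound.
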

For the limitation of space, more discussions about $\lambda$-return of Sarsa, Eq.(\ref{on-plicy-ES-lambda-recursive})-(\ref{off-es-recursive}), and the proof of Proposition \ref{prop2} are provided in Appendix A and B.
\section{{Expected~Sarsa}($\lambda$) with Control Variate}

In this section, we firstly define $\mathtt{Expected~Sarsa}$($\lambda$) with control variate (we use $\mathtt{ES}$($\lambda$)-$\mathtt{CV}$ for short).
Then, prove its linear convergence rate of  $\mathtt{ES}$($\lambda$)-$\mathtt{CV}$ for policy evaluation.
Finally, we analyze the variance of $\mathtt{ES}$($\lambda$)-$\mathtt{CV}$.

\subsection{ES($\lambda$)-CV Algorithm }

We define $\mathtt{Expected~Sarsa}$($\lambda$) with control variate $\widetilde{G}_{t}^{\lambda\rho,\text{ES}}$ as follows
\begin{flalign}
\nonumber
\widetilde{G}_{t}^{\lambda\rho,\text{ES}}&=R_{t+1}+\gamma\Big[
(1-\lambda)\bar{Q}_{t+1}+
\lambda(\rho_{t+1}\widetilde{G}_{t+1}^{\lambda\rho,\text{ES}}
\\
\label{es-recursive-cv}
&+\underbrace{{\bar{Q}_{t+1}-\rho_{t+1}Q_{t+1}}}_{\text{control variate}})
\Big],
\end{flalign}
where the additional term $\bar{Q}_{t+1}-\rho_{t+1}Q_{t+1}$ is called control variate (CV).
The following fact \[\mathbb{E}_{\mu}[\bar{Q}_{t+1}-\rho_{t+1}Q_{t+1}]=0\]
implies that $\widetilde{G}_{t}^{\lambda\rho,\text{ES}}$ (\ref{es-recursive-cv}) extends $G_{t}^{\lambda\rho,\text{ES}}$ (\ref{off-es-recursive}) without introducing biases.
\begin{theorem}[Forward View of $\mathtt{ES}$($\lambda$)-$\mathtt{CV}$]
    \label{ES-Sarsa-CV-bias-variance}
    Let $\rho_{t:k}=\prod_{i=t}^{k}\rho_{i}$ denote the
    cumulated importance sampling from time $t$ to $k$, and we use $\rho_{t+1:t}=1$ for convention.
    The recursive  $\lambda$-return in Eq.(\ref{es-recursive-cv}) is equivalent to the following forward view: let $\delta_{l}^{\emph{ES}}$ be the TD error defined in (\ref{def:es-delta}), $G_{t}^{t}=Q_{t}$, $G_{t}^{t+n}= R_{t+1} +\gamma(\rho_{t+1}G_{t+1}^{t+n}+\bar{Q}_{t+1}-\rho_{t+1}Q_{t+1})$
    \begin{flalign}
\nonumber
    \widetilde{G}_{t}^{\lambda\rho,\emph{ES}}&=(1-\lambda)\sum_{n=1}^{\infty}\lambda^{n-1}G_{t}^{t+n}\\
        \label{ES-Gtt+n}
    &=Q_{t}+\sum_{l=t}^{\infty}(\gamma\lambda)^{l-t}\delta_{l}^{\emph{ES}}\rho_{t+1:l}.
    \end{flalign}
\end{theorem}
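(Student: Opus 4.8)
The statement packages two separate equalities: that the recursively defined return \eqref{es-recursive-cv} coincides with the $\lambda$-weighted average of the $n$-step returns $G_t^{t+n}$, and that this average collapses to the importance-weighted TD-error series. The plan is to anchor everything on a single closed form for $G_t^{t+n}$ and then assemble both equalities from it, using the recursion \eqref{es-recursive-cv} as the definition of $\widetilde G_t^{\lambda\rho,\text{ES}}$.

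First I would prove by induction on $n\ge 1$ that
\[
G_t^{t+n}=Q_t+\sum_{l=t}^{t+n-1}\gamma^{l-t}\,\rho_{t+1:l}\,\delta_l^{\text{ES}},
\qquad \delta_l^{\text{ES}}=R_{l+1}+\gamma\bar Q_{l+1}-Q_l .
\]
The base case $n=1$ uses $G_{t+1}^{t+1}=Q_{t+1}$, so the control variate cancels $\rho_{t+1}Q_{t+1}$ and leaves $G_t^{t+1}=R_{t+1}+\gamma\bar Q_{t+1}=Q_t+\delta_t^{\text{ES}}$. For the inductive step the decisive algebraic move is to regroup the defining recursion as $G_t^{t+n}=R_{t+1}+\gamma\bar Q_{t+1}+\gamma\rho_{t+1}\bigl(G_{t+1}^{t+n}-Q_{t+1}\bigr)$: the control variate is exactly what turns the leading block into the TD error $\delta_t^{\text{ES}}$ and the remainder into a singly discounted, importance-weighted continuation. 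Inserting the induction hypothesis for $G_{t+1}^{t+n}-Q_{t+1}$ and using $\rho_{t+1}\rho_{t+2:l}=\rho_{t+1:l}$ reindexes the sum to range over $l=t,\dots,t+n-1$.

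Second, I would substitute this closed form into $(1-\lambda)\sum_{n=1}^{\infty}\lambda^{n-1}G_t^{t+n}$. The constant part gives $(1-\lambda)\sum_{n\ge1}\lambda^{n-1}Q_t=Q_t$, and for the rest I would interchange the two summations so that a fixed $\delta_l^{\text{ES}}$ collects total weight $(1-\lambda)\sum_{n\ge l-t+1}\lambda^{n-1}=\lambda^{l-t}$; multiplying by the $\gamma^{l-t}$ already present produces the factor $(\gamma\lambda)^{l-t}$ and hence the right-hand side of \eqref{ES-Gtt+n}, giving the second equality. To recover the first equality I would check that $H_t:=Q_t+\sum_{l\ge t}(\gamma\lambda)^{l-t}\rho_{t+1:l}\delta_l^{\text{ES}}$ satisfies \eqref{es-recursive-cv}: peeling off $l=t$ gives $Q_t+\delta_t^{\text{ES}}=R_{t+1}+\gamma\bar Q_{t+1}$, while the tail factors as $\gamma\lambda\rho_{t+1}(H_{t+1}-Q_{t+1})$, and regrouping reproduces $R_{t+1}+\gamma[(1-\lambda)\bar Q_{t+1}+\lambda(\rho_{t+1}H_{t+1}+\bar Q_{t+1}-\rho_{t+1}Q_{t+1})]$; uniqueness of the solution of the recursion then forces $\widetilde G_t^{\lambda\rho,\text{ES}}=H_t$.

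The step I expect to be the main obstacle is not the algebra but the analytic justification of the summation interchange and the convergence of the resulting doubly infinite series, because the cumulative ratios $\rho_{t+1:l}$ need not be bounded. I would handle this by working under the standing boundedness of the rewards and of $Q$, noting that the $(\gamma\lambda)^{l-t}$ factor controls the tail so that the rearrangement is legitimate as an absolutely convergent (almost surely finite) series; the same decay is what makes the fixed point of \eqref{es-recursive-cv} well defined and hence licenses the uniqueness argument above.
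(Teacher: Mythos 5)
Your proof is correct, and it takes a genuinely different route from the paper's. The paper's Appendix C works directly on the infinite $\lambda$-weighted sum: it splits off the $n=1$ term, reindexes, substitutes the one-step recursion for $G_t^{t+n+1}$, and recognizes $\widetilde{G}_{t+1}^{\lambda\rho,\text{ES}}$ in the tail, thereby showing that the forward view satisfies the recursion in Eq.(\ref{es-recursive-cv}); notably, it never explicitly derives the second equality of Eq.(\ref{ES-Gtt+n}) (the TD-error series), which is simply used later when defining the operator in Eq.(\ref{operator-es-1}). You instead prove by induction the finite closed form $G_t^{t+n}=Q_t+\sum_{l=t}^{t+n-1}\gamma^{l-t}\rho_{t+1:l}\delta_l^{\text{ES}}$ and then exchange the order of summation, which delivers both equalities of Eq.(\ref{ES-Gtt+n}) in one pass and makes transparent where the weight $\lambda^{l-t}$ comes from; the price is that you must separately verify that the resulting series satisfies Eq.(\ref{es-recursive-cv}) and appeal to uniqueness of the solution of that recursion, whereas the paper's telescoping gets the recursive form for free. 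Two remarks: first, your uniqueness/convergence caveat is warranted but is treated no more carefully by the paper itself (for a finite-horizon trajectory, as in Theorem \ref{Variance-Analysis}, backward induction from the terminal time settles it immediately, and the cumulated ratios $\rho_{t+1:l}$ can indeed outgrow $(\gamma\lambda)^{l-t}$ in general, so some boundedness assumption is implicitly needed in either argument); second, you correctly use $\delta_l^{\text{ES}}=R_{l+1}+\gamma\bar{Q}_{l+1}-Q_l$, which is the version consistent with the theorem and with the rest of the paper, even though Eq.(\ref{def:es-delta}) as printed omits the factor $\gamma$.
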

\begin{proof}
See Appendix C.
\end{proof}
\begin{remark}
Eq.(\ref{ES-Gtt+n}) illustrates that
for a given finite horizon trajectory $\{S_{t},A_{t},R_{t+1}\}_{t=0}^{h}$, 
the total update (\ref{es-recursive-cv}) reaches
 \begin{flalign}
 \sum_{t=0}^{h}(\gamma\lambda)^{t}\delta_{t}^{\text{ES}}\rho_{1:t},
 \end{flalign}
 which is off-line update of $\mathtt{ES}$($\lambda$)-$\mathtt{CV}$.
\end{remark}

\subsection{Policy Evaluation}
For policy evaluation, our goal is to estimate $q^{\pi}$ according to the trajectory collection $\mathcal{T}=\{\tau_k\}_{k\in\mathbb{N}}$, where 
$\tau_k=\{S_{t},A_{t},R_{t+1}\}_{t\ge0}\sim\mu$, $S_{t},A_{t}$, and
$R_{t+1}$ are dependent on the index $k$ strictly, and we omit coefficient $k$ to tight the expression without ambiguity.

The following $\lambda$-operator $\mathcal{B}^{\pi}_{\lambda} $ is a high level view of $\mathtt{ES}$($\lambda$)-$\mathtt{CV}$ (\ref{ES-Gtt+n}), and it
is helpful for us to introduce policy evaluation algorithm. $\forall~q\in\mathbb{R}^{|\mathcal{S}|\times |\mathcal{A}|},t\ge0$ 
\begin{flalign}
\label{operator-es-1}
\mathcal{B}^{\pi}_{\lambda} q&\overset{\text{def}}\mapsto q+\mathbb{E}_{\mu}[\sum_{l=t}^{\infty}(\lambda\gamma)^{l-t}\delta^{\text{ES}}_{l}\rho_{t+1:l}]\\
\label{operator-es}
&\overset{\text{(a)}}=q+(I-\lambda\gamma P^{\pi})^{-1}(\mathcal{B}^{\pi}q-q),
\end{flalign}
where $\mathcal{B}^{\pi}$ is defined in Eq.(\ref{bellman-equation}).
We provide the equivalence (a) in Appendix D.

\begin{theorem}[Policy Evaluation]
    \label{theorem-ope}
    For any initial $Q_{0}$, consider the trajectory $\mathcal{T}$ generated by $\mu$, and the following $Q_{k}$ is generated according to the $k$-th trajectory $\tau_{k}\in\mathcal{T}$, $k\ge1$,
    \begin{flalign}
    \label{PO}
     Q_{k+1}=\mathcal{B}^{\pi}_{\lambda}Q_{k}.
     \end{flalign}
     By iterating over $k$ trajectories, the upper-error of policy evaluation is bounded by 
     \begin{flalign}
     \label{error-bound-es-cv}
     \|Q_{k}-q^{\pi}\|\leq\big(\frac{\gamma-\lambda\gamma}{1-\lambda\gamma}\big)^{k}\|Q_0-q^{\pi}\|.
     \end{flalign}
\end{theorem}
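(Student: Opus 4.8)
The plan is to read equation~(\ref{operator-es}) as exhibiting $\mathcal{B}^{\pi}_{\lambda}$ as an affine operator whose fixed point is $q^{\pi}$, and then to show that $\mathcal{B}^{\pi}_{\lambda}$ is a contraction in the sup-norm with modulus exactly $\frac{\gamma-\lambda\gamma}{1-\lambda\gamma}$. Once that is established, the bound~(\ref{error-bound-es-cv}) follows immediately by induction on $k$, since the recursion $Q_{k+1}=\mathcal{B}^{\pi}_{\lambda}Q_{k}$ is a fixed-point iteration of a contraction (a Banach fixed-point argument).

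First I would verify that $q^{\pi}$ is a fixed point of $\mathcal{B}^{\pi}_{\lambda}$. Because $q^{\pi}$ satisfies the Bellman equation~(\ref{bellman-equation}), we have $\mathcal{B}^{\pi}q^{\pi}-q^{\pi}=0$, and substituting into~(\ref{operator-es}) gives $\mathcal{B}^{\pi}_{\lambda}q^{\pi}=q^{\pi}$. Subtracting $q^{\pi}=\mathcal{B}^{\pi}_{\lambda}q^{\pi}$ from the iterate $Q_{k+1}=\mathcal{B}^{\pi}_{\lambda}Q_{k}$ then reduces everything to controlling the difference $\mathcal{B}^{\pi}_{\lambda}q-\mathcal{B}^{\pi}_{\lambda}q'$ for arbitrary inputs. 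Using $\mathcal{B}^{\pi}q-\mathcal{B}^{\pi}q'=\gamma P^{\pi}(q-q')$ together with~(\ref{operator-es}), routine algebra with the cancellation $(I-\lambda\gamma P^{\pi})+(\gamma P^{\pi}-I)=\gamma(1-\lambda)P^{\pi}$ (after factoring out $(I-\lambda\gamma P^{\pi})^{-1}$) yields
\[
\mathcal{B}^{\pi}_{\lambda}q-\mathcal{B}^{\pi}_{\lambda}q'=\gamma(1-\lambda)(I-\lambda\gamma P^{\pi})^{-1}P^{\pi}(q-q').
\]
It then remains to estimate the operator norm of $A:=\gamma(1-\lambda)(I-\lambda\gamma P^{\pi})^{-1}P^{\pi}$.

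The main technical step is this norm estimate. Since $\pi$ is a policy, $P^{\pi}$ is row-stochastic, so $\|P^{\pi}\|_{\infty}=1$ and every power $(P^{\pi})^{n}$ is again stochastic. Because $\lambda\gamma<1$, the Neumann series $(I-\lambda\gamma P^{\pi})^{-1}=\sum_{n\ge0}(\lambda\gamma P^{\pi})^{n}$ converges, and passing to sup-norms gives $\|(I-\lambda\gamma P^{\pi})^{-1}\|_{\infty}\le\sum_{n\ge0}(\lambda\gamma)^{n}=\frac{1}{1-\lambda\gamma}$. Combining the pieces,
\[
\|A\|_{\infty}\le\gamma(1-\lambda)\cdot\frac{1}{1-\lambda\gamma}\cdot 1=\frac{\gamma-\lambda\gamma}{1-\lambda\gamma},
\]
and since $\gamma<1$ forces $\gamma(1-\lambda)<1-\lambda\gamma$, this modulus is strictly below one, so $\mathcal{B}^{\pi}_{\lambda}$ is a genuine contraction.

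Finally, applying this bound inductively through $\|Q_{k}-q^{\pi}\|=\|\mathcal{B}^{\pi}_{\lambda}Q_{k-1}-\mathcal{B}^{\pi}_{\lambda}q^{\pi}\|\le\|A\|_{\infty}\,\|Q_{k-1}-q^{\pi}\|$ telescopes to the claimed estimate~(\ref{error-bound-es-cv}). The step I expect to be the main obstacle is keeping the norm argument clean: one must fix the relevant norm to be the sup-norm (so that row-stochasticity of $P^{\pi}$ gives $\|P^{\pi}\|_{\infty}=1$), and carefully justify both the invertibility of $I-\lambda\gamma P^{\pi}$ and the convergence of its Neumann series from the single inequality $\lambda\gamma<1$.
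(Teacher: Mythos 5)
Your proposal is correct and arrives at the same contraction modulus $\frac{(1-\lambda)\gamma}{1-\lambda\gamma}$ followed by the same fixed-point iteration argument, but the way you establish the modulus differs from the paper. The paper first invokes the identity $\mathcal{B}_{\lambda}^{\pi}=(1-\lambda)\sum_{n=0}^{\infty}\lambda^{n}(\mathcal{B}^{\pi})^{n+1}$ (cited from external references) and then sums the geometric series of the $\gamma^{n+1}$-contraction moduli of $(\mathcal{B}^{\pi})^{n+1}$; you instead work directly from the resolvent form in Eq.~(\ref{operator-es}), compute the linear part of the affine operator explicitly as $\gamma(1-\lambda)(I-\lambda\gamma P^{\pi})^{-1}P^{\pi}$ (your cancellation $(I-\lambda\gamma P^{\pi})+(\gamma P^{\pi}-I)=\gamma(1-\lambda)P^{\pi}$ checks out), and bound its sup-norm via row-stochasticity of $P^{\pi}$ and the Neumann series of $(I-\lambda\gamma P^{\pi})^{-1}$. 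The two derivations are equivalent after expanding the resolvent as $\sum_{n\ge0}(\lambda\gamma P^{\pi})^{n}$, but yours has the advantage of being self-contained (it does not rely on the cited operator identity) and of making the relevant norm explicit, whereas the paper's is more abstract and extends verbatim to any $\gamma$-contraction $\mathcal{B}^{\pi}$ regardless of its concrete matrix form. Your fixed-point verification $\mathcal{B}^{\pi}_{\lambda}q^{\pi}=q^{\pi}$ and the inductive telescoping are exactly what the paper does (modulo its slightly roundabout Cauchy-sequence phrasing), so the proof is complete as proposed.
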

\begin{proof}
See Appendix E.
\end{proof}
\begin{remark}
The forward view (off-line update) of $\mathtt{ES}$($\lambda$)-$\mathtt{CV}$ (\ref{ES-Gtt+n}) can be seen as sampled according to $Q_{t+1}=\mathcal{B}^{\pi}_{\lambda}Q_{t}$.
For any $\gamma\in(0,1),\lambda\in[0,1]$, then $\frac{\gamma-\lambda\gamma}{1-\lambda\gamma}\in(0,1)$,
thus Eq.(\ref{error-bound-es-cv}) implies (\ref{ES-Gtt+n}) converges to $q^{\pi}$ at a linear convergence rate.\end{remark}

\subsection{Variance Analysis}

\begin{theorem}[Variance Analysis of $\mathtt{ES}$($\lambda$)-$\mathtt{CV}$]
    \label{Variance-Analysis}
    Consider a single trajectory $\tau_{k}$ with ffinite horizon $H+1$, let $S_{t}=s,A_{t}=a,S_{t+1}=s^{'},A_{t+1}=a^{'}$,  $\mathbb{V}{\emph{ar}}\big[\widetilde{G}_{H+1}^{\lambda\rho,\emph{ES}}\big]=0$.
    The variance of $\widetilde{G}_{t}^{\lambda\rho,\emph{ES}}$ is given recursively as follows, 
    \begin{flalign}
        \nonumber
        \mathbb{V}{\emph{ar}}\big[\widetilde{G}_{t}^{\lambda\rho,\emph{ES}}\big]=
        &\mathbb{V}{\emph{ar}}\big[R_{t+1}+\gamma\bar{Q}_{t+1}-q^{\pi}(s,a)\big]
        \\
        \nonumber
        &+\gamma^{2}\lambda^{2}\mathbb{V}{\emph{ar}}\big[
        v^{\pi}(s^{'})-\bar{Q}_{t+1}\big]\\
        \nonumber
        &+\gamma^{2}\lambda^{2}\mathbb{V}{\emph{ar}}[\Delta_{t+1}]
        \\
        \label{variance-1}
        &
        +\gamma^{2}\lambda^{2}\mathbb{V}{\emph{ar}}\big[\rho_{t+1}\widetilde{G}_{t+1}^{\lambda\rho,\emph{ES}}\big],
    \end{flalign}
    where $\Delta_{t+1}=\bar{Q}_{t+1}-\rho_{t+1}Q_{t+1}-v^{\pi}(s^{'})+\rho_{t+1}q^{\pi}(s^{'},a^{'})$.
\end{theorem}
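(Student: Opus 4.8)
The plan is to reduce the recursion to a single clean form before touching variances. Expanding the bracket in~(\ref{es-recursive-cv}), the two copies of $\bar Q_{t+1}$ merge and the control variate cancels the extra $\rho_{t+1}Q_{t+1}$, leaving
\[
\widetilde G_{t}^{\lambda\rho,\mathrm{ES}}=R_{t+1}+\gamma\bar Q_{t+1}+\gamma\lambda\rho_{t+1}\big(\widetilde G_{t+1}^{\lambda\rho,\mathrm{ES}}-Q_{t+1}\big).
\]
Alongside this I would record the three conditional-mean identities that drive everything. Under $\mu$, both $\mathbb{E}[\rho_{t+1}Q_{t+1}\mid S_{t+1}=s']=\bar Q_{t+1}$ and $\mathbb{E}[\rho_{t+1}q^{\pi}(s',a')\mid S_{t+1}=s']=v^{\pi}(s')$ hold, since each is the importance-sampling ``un-weighting'' $\sum_{a'}\mu(a'|s')\rho_{t+1}(\cdot)=\sum_{a'}\pi(a'|s')(\cdot)$; and the unbiasedness $\mathbb{E}[\widetilde G_{t+1}^{\lambda\rho,\mathrm{ES}}\mid(S_{t+1},A_{t+1})=(s',a')]=q^{\pi}(s',a')$ is the one-step-shifted form of Proposition~\ref{prop2}. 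The first two identities immediately give $\mathbb{E}[\Delta_{t+1}\mid S_{t+1}=s']=0$.

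Next I would introduce the nested $\sigma$-fields $\mathcal G_1=\sigma(R_{t+1},S_{t+1})\subseteq\mathcal G_2=\sigma(R_{t+1},S_{t+1},A_{t+1})$ and apply the law of total variance twice, splitting $\mathbb{V}\mathrm{ar}[\widetilde G_t^{\lambda\rho,\mathrm{ES}}]$ into an outer layer $A=\mathbb{V}\mathrm{ar}\big[\mathbb{E}[\widetilde G_t^{\lambda\rho,\mathrm{ES}}\mid\mathcal G_1]\big]$, a middle layer $B=\mathbb{E}\big[\mathbb{V}\mathrm{ar}[\,\mathbb{E}[\widetilde G_t^{\lambda\rho,\mathrm{ES}}\mid\mathcal G_2]\mid\mathcal G_1]\big]$, and an inner layer $C=\mathbb{E}\big[\mathbb{V}\mathrm{ar}[\widetilde G_t^{\lambda\rho,\mathrm{ES}}\mid\mathcal G_2]\big]$. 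The middle layer is the cleanest, so I would handle it first: evaluating the two conditional means with the identities above shows the increment $\mathbb{E}[\widetilde G_t^{\lambda\rho,\mathrm{ES}}\mid\mathcal G_2]-\mathbb{E}[\widetilde G_t^{\lambda\rho,\mathrm{ES}}\mid\mathcal G_1]$ is exactly $\gamma\lambda\Delta_{t+1}$, and since $\mathbb{E}[\Delta_{t+1}\mid\mathcal G_1]=0$ this layer collapses to $\gamma^2\lambda^2\,\mathbb{V}\mathrm{ar}[\Delta_{t+1}]$. This reproduces the third term of~(\ref{variance-1}) exactly and explains why $\Delta_{t+1}$ is defined with precisely those four pieces.

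It then remains to resolve the outer layer $A$ into the reward/target term $\mathbb{V}\mathrm{ar}[R_{t+1}+\gamma\bar Q_{t+1}-q^{\pi}(s,a)]$ plus the value-gap term $\gamma^2\lambda^2\,\mathbb{V}\mathrm{ar}[v^{\pi}(s')-\bar Q_{t+1}]$, and to recombine the inner layer $C$ with the action-layer fluctuation of $\rho_{t+1}q^{\pi}(s',a')$ into the recursive term $\gamma^2\lambda^2\,\mathbb{V}\mathrm{ar}[\rho_{t+1}\widetilde G_{t+1}^{\lambda\rho,\mathrm{ES}}]$. Both steps proceed by adding and subtracting $v^{\pi}(s')$ and $q^{\pi}(s',a')$ inside the conditional means so that each increment is mean-zero at its level, after which the tower property kills the cross products between successive layers. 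The main obstacle is exactly this last bookkeeping: the conditioning order (state, then action, then future) must be chosen so that the cross-covariances vanish, and writing the final term as the \emph{full} variance $\mathbb{V}\mathrm{ar}[\rho_{t+1}\widetilde G_{t+1}^{\lambda\rho,\mathrm{ES}}]$ rather than an expected conditional variance requires folding back the $a'$-layer variance of $\rho_{t+1}q^{\pi}(s',a')$; likewise the clean separation of term one from term two leans on the one-step conditional-independence structure of the MDP. I expect verifying that all these cross terms are zero — not the algebra of the increments — to be the crux of the proof.
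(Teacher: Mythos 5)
Your route is genuinely different from the paper's. The paper expands $\mathbb{V}\text{ar}[\widetilde{G}_{t}^{\lambda\rho,\text{ES}}]=\mathbb{E}[(\widetilde{G}_{t}^{\lambda\rho,\text{ES}})^{2}]-(q^{\pi}(s,a))^{2}$, inserts $\pm v^{\pi}(s^{'})$ and $\pm\rho_{t+1}q^{\pi}(s^{'},a^{'})$ to rewrite $\widetilde{G}_{t}^{\lambda\rho,\text{ES}}-q^{\pi}(s,a)$ as a sum of increments, squares, and kills the cross terms by appealing to a lemma of Sherstan et al.; you instead condition on the nested $\sigma$-fields $\mathcal{G}_1\subseteq\mathcal{G}_2$ and apply the law of total variance twice. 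Where your plan is concrete it is correct and in fact tighter than the paper: the simplified recursion, the three conditional-mean identities, and the computation $\mathbb{E}[\widetilde{G}_{t}^{\lambda\rho,\text{ES}}\mid\mathcal{G}_2]-\mathbb{E}[\widetilde{G}_{t}^{\lambda\rho,\text{ES}}\mid\mathcal{G}_1]=\gamma\lambda\Delta_{t+1}$ are all right, and they deliver the $\gamma^{2}\lambda^{2}\mathbb{V}\text{ar}[\Delta_{t+1}]$ term with no appeal to an external lemma.

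The gap is in the two steps you defer. First, your outer layer is $\mathbb{V}\text{ar}[R_{t+1}+\gamma\bar{Q}_{t+1}+\gamma\lambda(v^{\pi}(s^{'})-\bar{Q}_{t+1})]$, and splitting it into the first two terms of (\ref{variance-1}) requires $\text{Cov}[R_{t+1}+\gamma\bar{Q}_{t+1},\,v^{\pi}(s^{'})-\bar{Q}_{t+1}]=0$. The tower property cannot give you this: both random variables are measurable with respect to the same $\sigma$-field $\mathcal{G}_1$, so there is no further conditioning level at which one of them becomes a mean-zero increment, and the covariance is generically nonzero since both are nonconstant functions of $S_{t+1}$. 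Second, your inner layer is $\gamma^{2}\lambda^{2}\mathbb{E}\big[\rho_{t+1}^{2}\mathbb{V}\text{ar}[\widetilde{G}_{t+1}^{\lambda\rho,\text{ES}}\mid s^{'},a^{'}]\big]$, which differs from the claimed $\gamma^{2}\lambda^{2}\mathbb{V}\text{ar}[\rho_{t+1}\widetilde{G}_{t+1}^{\lambda\rho,\text{ES}}]$ by exactly $\gamma^{2}\lambda^{2}\mathbb{V}\text{ar}[\rho_{t+1}q^{\pi}(s^{'},a^{'})]$; that piece cannot simply be ``folded back,'' because the law of total variance has already allocated all of the $(s^{'},a^{'})$-level fluctuation to your layers $A$ and $B$, and $B$ is the variance of the full combination $\Delta_{t+1}$, not a sum you are free to reapportion. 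To be fair, the paper's own proof glosses over precisely the same two points: its cited lemma only handles cross terms involving $\widetilde{G}_{t+1}^{\lambda\rho,\text{ES}}-q^{\pi}(s^{'},a^{'})$, and the final passage from second moments to variances is asserted rather than shown. Your decomposition is the right instrument for auditing the claim, but as written it assumes, rather than proves, the vanishing of the one cross term that does not vanish.
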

\begin{proof}
See Appendix F.
\end{proof}
Now, let's illustrate the significance of Eq.(\ref{variance-1}).

\textbf{(I)} It demonstrates total random sources lead to the variance. 
The first 3 terms reveal the variance of $\widetilde{G}_{t}^{\lambda\rho,\text{ES}}$ is cased by the following factors correspondingly: the error of one-step $\mathtt{Expected~Sarsa}$ for policy evaluation, the error between $\bar{Q}_{t+1}$ and true value $v^{\pi}$, and state-action transition randomness. The last term in (\ref{variance-1}) is the variance of future time.

\textbf{(II)} Please notice that if the CV term $\bar{Q}_{t+1}-\rho_{t+1}Q_{t+1}$ (in $\Delta_{t+1}$) vanishes, i.e. $\Delta_{t+1}=-v^{\pi}(s^{'})+\rho_{t+1}q^{\pi}(s^{'},a^{'})$, Eq.(\ref{variance-1}) is reduced to the recursive variance of $G_{t}^{\lambda\rho,\text{ES}}$ (\ref{off-es-recursive}).
Thus, by Eq.(\ref{variance-1}), comparing the variance of $\widetilde{G}_{t}^{\lambda\rho,\text{ES}}$ with $G_{t}^{\lambda\rho,\text{ES}}$ is equal to comparing the variance of $\Delta_{t+1}$.

Furthermore,
if a good estimator of $q^{\pi}$ is available, the two following events happen:
\begin{compactenum}
\item{For $\mathtt{ES}$($\lambda$)-$\mathtt{CV}$, the term $\Delta_{t+1}\approx 0$. Since for a proper estimate of $q^{\pi}$, the following happens
\[\bar{Q}_{t+1}-\rho_{t+1}Q_{t+1}\approx 0,-v^{\pi}(s^{'})+\rho_{t+1}q^{\pi}(s^{'},a^{'})\approx 0.\]
}
\item{While, for $\mathtt{ES}$($\lambda$), $\Delta_{t+1}=-v^{\pi}(s^{'})+\rho_{t+1}q^{\pi}(s^{'},a^{'})$, which is never be to $0$, no matter how good an estimate of $q^{\pi}$ we achieve.}
\end{compactenum}
Thus, if a good estimator of $q^{\pi}$ is available, we have,
\[
\underbrace{\mathbb{V}{\text{ar}}[\Delta_{t+1}]}_{\text{for $\mathtt{ES}(\lambda)$-$\mathtt{CV}$ iteration (\ref{es-recursive-cv})} }\ll\underbrace{\mathbb{V}{\text{ar}}[-v^{\pi}(s^{'})+\rho_{t+1}q^{\pi}(s^{'},a^{'})]}_{{\text{for $\mathtt{ES}(\lambda)$ iteration (\ref{off-es-recursive})} }}.
\] 
Thus $\widetilde{G}_{t}^{\lambda\rho,\text{ES}}$ enjoys a lower variance than ${G}_{t}^{\lambda\rho,\text{ES}}$.

\subsection{Numerical Analysis}
We use an experiment to verify that CV is efficient to reduce variance of $\mathtt{ES}$($\lambda$) for off-policy evaluation task.
In this experiment, the target policy $\pi$ is greedy policy, the value of $\pi$ is selected by $\mathtt{Q\text{-}learning}$ with $\epsilon_{k}$-greedy policy, where $\epsilon_{k}$ is decayed as $\epsilon_{k+1}=0.95\epsilon_{k}$, $\epsilon_{1}=0.2$.
After 150 episodes, $\epsilon_{150}\approx0$, and the value of target policy $\pi$ comes around $-20$.
We use $0.2$-greedy policy as behavior policy $\mu$. All algorithms use step-size $
        \alpha_{k}= 0.5$ and $\lambda=0.95$.
\begin{figure}[h]
    \centering   
    \includegraphics[scale=0.45]{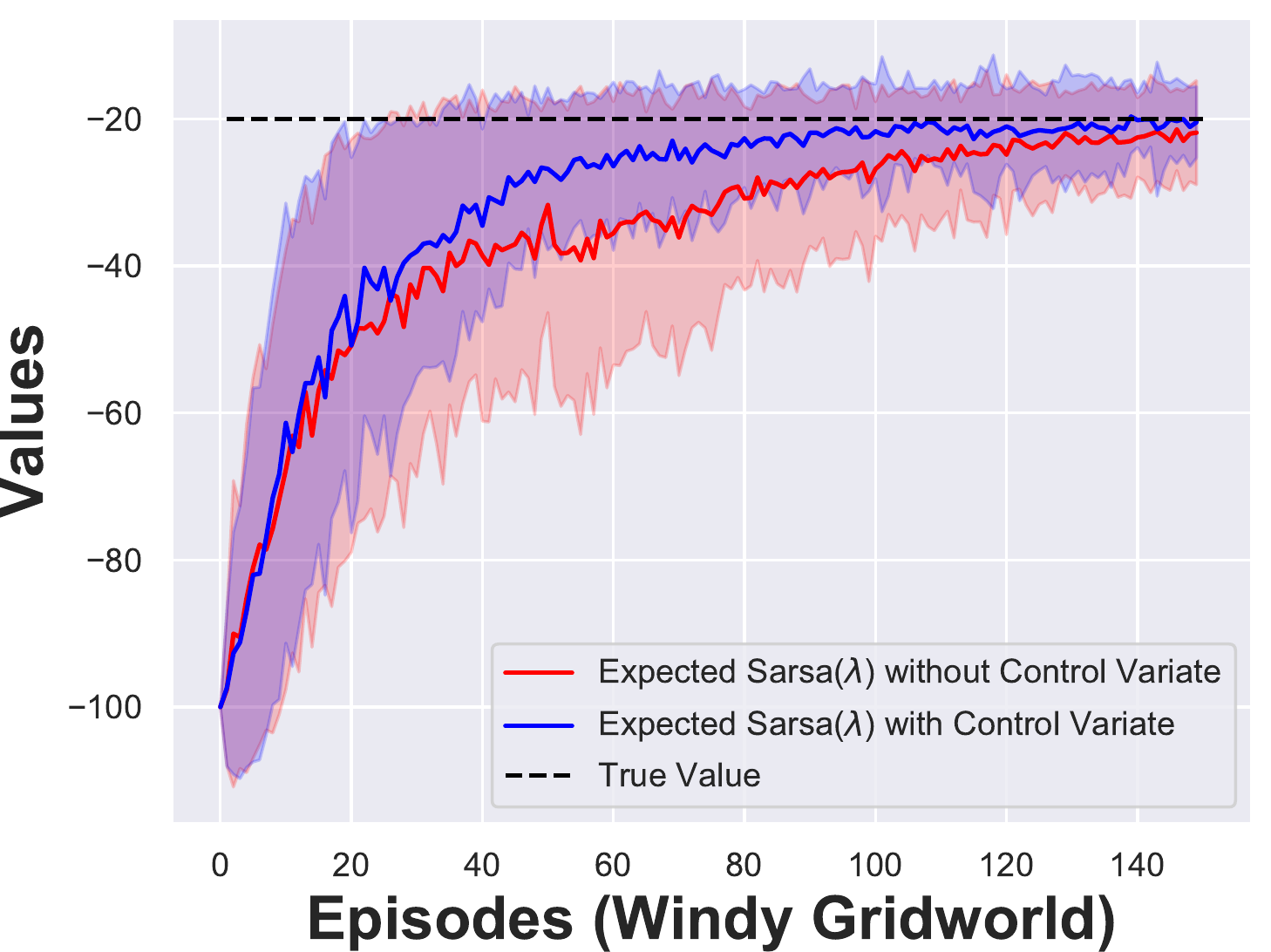}  
    \caption{
        Comparison the performance between $\mathtt{ES}$($\lambda$)-$\mathtt{CV}$ and $\mathtt{ES}$($\lambda$) for 
        off-policy evaluation task on windy gridworld.
        These unbroken lines are an average of 100 runs, and each run contains 150 episodes. 
         To preferably show variance during the learning process, we show the shadow width as the standard deviation.       
    }
\end{figure}

\section{Gradient Expected Sarsa($\lambda$)}
In this section, we extend $\mathtt{ES}$($\lambda$)-$\mathtt{CV}$ with linear function approximation.
Firstly, we prove the way to extend $\mathtt{ES}$($\lambda$)-$\mathtt{CV}$ with function approximation by \cite{sutton2018reinforcement} (section 12.9) is unstable.
Then, we propose a convergent gradient $\mathtt{Expected~Sarsa}$($\lambda$).

The Bellman equation (\ref{bellman-equation}) cannot be solved directly by tabular method for a large dimension of $\mathcal{S}$. 
We often use a parametric function to approximate
$
q^{\pi}(s,a)\approx\phi^{\top}(s,a)\theta={Q}_{\theta}(s,a),
$
where $\phi:\mathcal{S}\times\mathcal{A}\rightarrow\mathbb{R}^{p}$ is a \emph{feature map}.
Then ${Q}_{\theta}$ can be rewritten as a version of matrix
${Q}_{\theta}=\Phi\theta\approx q^{\pi},$
where $\Phi$ is a $|\mathcal{S}||\mathcal{A}|\times p$ matrix whose row is $\phi(s,a)$.
We assume that Markov chain induced by behavior policy $\mu$ is ergodic \cite{bertsekas2012dynamic}, 
i.e. there exists a stationary distribution $\xi$ such that
$\forall (S_{0},A_0)\in\mathcal{S}\times\mathcal{A}$, 
$\frac{1}{n}\sum_{k=1}^{n} P(S_{k}= s,A_{k}=a |S_{0},A_0)\overset{n\rightarrow\infty}\rightarrow \xi(s,a).$
We denote $\Xi$ as a $|\mathcal{S}|\times|\mathcal{A}|$ diagonal matrix whose diagonal element is 
$\xi(s,a)$.  
\begin{figure}[t]
    \centering
    \includegraphics[scale=0.5, trim={1mm 1mm 1mm 1mm}, clip]{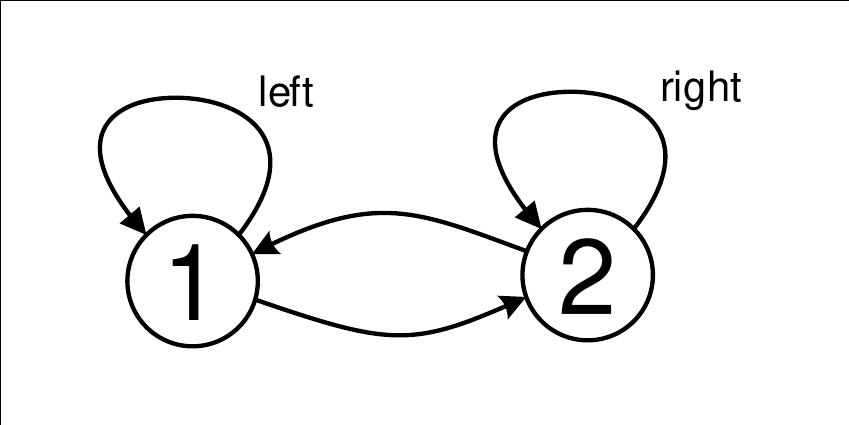}
    \caption{Two-state Example. We assign the features $ \{(1, 0)^{\top}, (2, 0)^{\top}, (0, 1)^{\top}, (0, 2)^{\top}\}$ to the state-action pairs $\{(1,\mathtt{right}),(2,\mathtt{right}),(1,\mathtt{left}),(2,\mathtt{left})\}$, $\pi(\mathtt{right} |\cdot)=1$ and $\mu(\mathtt{right} | \cdot)=0.5$.
    }
\end{figure}

\subsection{Instability of ES($\lambda$) with Function Approximation}
A typical update to extend (\ref{ES-Gtt+n}) has been presented in~\cite{sutton2018reinforcement} (section 12.9),
\begin{flalign}
\nonumber
    \theta_{t+1}&=\theta_{t}+\alpha_{t}(
    \widetilde{G}_{t,\theta}^{\lambda\rho,\text{ES}}-Q_{\theta}(S_t,A_t)
    )\nabla Q_{\theta}(S_{t},A_t)\\
        \label{gradient-ES}
    &=\theta_{t}+\alpha_{t}(\sum_{l=t}^{\infty}(\gamma\lambda)^{l-t}\delta_{l,\theta}^{\text{ES}}\rho_{t+1:l})\phi_t,
\end{flalign}
where $\alpha_{t}$ is step-size, $\delta_{l,\theta}^{\text{ES}}=R_{l+1}+\gamma\theta^{\top}_{l}\mathbb{E}_{\pi}[\phi(S_{l+1},\cdot)]-\theta^{\top}_{l}\phi_l$, $\phi_l$ is short for $\phi(S_{l},A_{l})$. Once the system (\ref{gradient-ES}) has reached a stable state, for any $\theta_t$, the expected parameter can been written as
\begin{flalign}
\label{expected-para-es}
\mathbb{E}[\theta_{t+1}|\theta_t]=\theta_{t} + \alpha_{t}(A\theta_t+b),
\end{flalign}
where
\begin{flalign}
A&= \Phi^{\top}\Xi(I-\gamma\lambda P^{\mu})^{-1}(\gamma P^{\pi}-I)\Phi,\\
b&= \Phi^{\top}\Xi(I-\gamma\lambda P^{\mu})^{-1}r, r=\mathbb{E}[R_{t+1}|S_{t},A_{t}].
\end{flalign}
If the system (\ref{expected-para-es}) converges, then $\theta_{t}$ converges to the \emph{TD fixed point} $\theta^{*}$ that satisfies
$
    A\theta^{*}+b=0.
$

\textbf{What condition guarantees the convergence of the (\ref{gradient-ES})/ (\ref{expected-para-es})?}
Unfortunately, the instability of (\ref{gradient-ES}) for off-policy is firstly realized by Sutton and Barto\shortcite{sutton2018reinforcement}, but it is only an intuitive guess inspired by previous works. Now, we provide a simple but rigorous theoretical analysis to illustrate the divergence of Eq.(\ref{gradient-ES}).
It is known that for on-policy learning $\mu=\pi$, $A$ is a negative definite matrix \cite{tsitsiklis1997analysis}.
Thus, for on-policy learning, (\ref{gradient-ES}) converges to $-A^{-1}b$.
However, for off-policy learning, since the steady state-action distribution does not match the transition probability and $P^{\pi}\xi\ne\xi$, which results in, there is no guarantee that $A$ is a negative definite matrix~\cite{tsitsiklis1997analysis}. 
Thus (\ref{gradient-ES}) may diverge.

\textbf{An Unstable Example} Now, we use a typical example \cite{touati2018convergent} to illustrate the instability of iteration (\ref{gradient-ES}).
The state transition of the example is presented in Figure 2.
After some simple algebra (the detail is provided in Appendix G), we have {\small{$A=\begin{pmatrix}
\frac{6\gamma-\gamma\lambda-5}{2(1-\gamma\lambda)} & 0 \\
\frac{3\gamma}{2} &- \frac{5}{2}
\end{pmatrix}$}}.
For any $\theta_{0}=(\theta_{0,1},\theta_{0,2})^{\top}$, a positive constant step-size $\alpha$, 
according to (\ref{expected-para-es}), 
we have 
\begin{flalign}
\label{example-iteration}
\mathbb{E}[\theta_{t+1}|\theta_t]\overset{\text{def}}=&(\theta_{t+1,1},\theta_{t+1,2})^{\top},\\
\label{example-iteration-1}
\theta_{t+1,1}=&\theta_{0,1}\prod_{l=0}^{t}(1+\alpha \frac{6\gamma-\gamma\lambda-5}{2(1-\gamma\lambda)}),\\
\label{example-iteration-2}
\theta_{t+1,2}=&\theta_{0,2}\prod_{l=0}^{t}(1-\alpha\frac{5}{2})^{\top}
\end{flalign}
For any $\lambda\in(0,1)$, $\gamma\in(\frac{5}{6-\lambda},1)$, $\frac{6\gamma-\gamma\lambda-5}{2(1-\gamma\lambda)}$ is a positive scalar. Since then $A$ cannot be a negative matrix.
Furthermore, according to (\ref{example-iteration-1}),
\[|\theta_{t+1,1}|=|\theta_{0,1}||(1+\alpha \frac{6\gamma-\gamma\lambda-5}{2(1-\gamma\lambda)})^{t+1}|\rightarrow+\infty.\]

\subsection{Convergent Algorithm}

The above discussion of the instability for off-policy learning shows that we should abandon the way presented in (\ref{gradient-ES}).
In this section, we propose a convergent gradient $\mathtt{ES}$($\lambda$) algorithm.

We solve the problem by mean square projected Bellman equation (MSPBE) \cite{sutton2009fast_a},
\[
\text{MSPBE}(\theta,\lambda)=\frac{1}{2}\|\Phi\theta-\Pi\mathcal{B}^{\pi}_{\lambda}(\Phi\theta)\|^{2}_{\Xi},   
\]
where $\Pi = \Phi(\Phi^{T}\Xi\Phi)^{-1}\Phi^{T}\Xi$ is an 
$|\mathcal{S}| \times |\mathcal{S}|$ projection matrix.
Furthermore, MSPBE$(\theta,\lambda)$ can be rewritten as,
\begin{flalign}
\label{Eq:mspbe}
\min_{\theta}\text{MSPBE}(\theta,\lambda)=\min_{\theta}\frac{1}{2}\|A\theta+b\|^{2}_{M^{-1}},
\end{flalign}
where $M=\mathbb{E}[\phi_{t}\phi_{t}^{\top}]=\Phi^{T}\Xi\Phi$.
The derivation of (\ref{Eq:mspbe}) is provided in Appendix H.

The computational complexity of the invertible matrix $M^{-1}$ is at least $\mathcal{O}(p^3)$~\cite{golub2012matrix}, where $p$ is the dimension of feature space. Thus, it is too expensive to use gradient updates to solve the problem (\ref{Eq:mspbe}) directly.
Besides, as pointed out in~\cite{szepesvari2010algorithms,liu2015finite}, we cannot get an unbiased estimate of $\nabla_{\theta}\text{MSPBE}(\theta,\lambda)=A^{\top}M^{-1}(A\theta+b)$. 
In fact, since the update law of gradient involves the product of expectations, 
the unbiased estimate cannot be obtained via a single sample. It needs to sample twice, which is a double sampling problem. Secondly, $M^{-1}=\mathbb{E}[\phi_{t} \phi_{t}^T]^{-1}$ cannot also be estimated via a single sample, which is the second bottleneck of applying stochastic gradient method to solve problem (\ref{Eq:mspbe}).

A practical way is converting (\ref{Eq:mspbe}) to be a convex-concave saddle-point problem~\cite{liu2015finite}.
For $f: \mathbb{R}^{d}\rightarrow\mathbb{R}$, its \emph{convex conjugate} \cite{bertsekas2009convex} function 
$f^{*} : \mathbb{R}^{d}\rightarrow\mathbb{R}$ is defined as
\[f^{*}(y)=\sup_{x\in\mathbb{R}^{d}}\{y^{T}x-f(x)\}.\]
By $(\frac{1}{2}\|x\|^{2}_{M})^{*}= \frac{1}{2}\|y\|^{2}_{M^{-1}}$, we have
$
\frac{1}{2}\|y\|^{2}_{M^{-1}}=\max_{\omega}(y^{T}\omega-\frac{1}{2}\|\omega\|^{2}_{M}).
$
Thus, (\ref{Eq:mspbe})
is equivalent to the next convex-concave saddle-point problem
\begin{flalign}
\label{saddle-point-problem}
\min_{\theta}\max_{\omega}\{(A\theta+b)^{\top}\omega-\frac{1}{2}\|\omega\|^{2}_{M}\}.
\end{flalign}
It is easy to see that
if $(\theta^{*},\omega^{*})$ is the solution of problem (\ref{saddle-point-problem}), then $\theta^{*}=\arg\min_{\theta}\text{MSPBE}(\theta,\lambda)$.
In fact, let $\omega^{*}=\arg\max_{\omega}{(A\theta+b)^{\top}\omega-\frac{1}{2}\|\omega\|_{M}^{2}}$, then $\omega^{*}=M^{-1}(A\theta+b)$.
Taking $\omega^{*}$ into (\ref{saddle-point-problem}), then (\ref{saddle-point-problem}) is reduced to $\min_{\theta}\frac{1}{2}\|A\theta+b\|^{2}_{M^{-1}}$, which illustrates that the solution of (\ref{Eq:mspbe}) contained in (\ref{saddle-point-problem}).
Gradient update is a natural way to solve problem (\ref{saddle-point-problem}) (ascending in $\omega$ and descending in $\theta$) as follows, 
\begin{flalign}
\label{E-al-0}
    \omega_{t+1}&=\omega_t+\beta_t(A\theta_t+b-M\omega_t),\\
    \label{E-al}
    \theta_{t+1}&=\theta_t-\alpha_t A^{\top}\omega_t,
\end{flalign}
where $\alpha_t,\beta_t$ is step-size, $t\ge0$.

\textbf{Stochastic On-line Implementation}
However, since $A, b$, and $ M$ are versions of expectations, for model-free RL, we can not get the probability of transition. A practical way is to find the unbiased estimators of them.
Let $e_{0}=0, \rho_{t}=\frac{\pi(A_{t}|S_{t})}{\mu(A_{t}|S_{t})}, e_{t}=\lambda\gamma \rho_{t}e_{t-1}+\phi_{t}, \hat{b}_{t}=R_{t+1}e_{t}, \hat{A}_{t}=e_{t}(\gamma\mathbb{E}_{\pi}[\phi(S_{t+1,\cdot})]-\phi_{t})^{\top},\hat{M}_{t}=\phi_{t}\phi^{\top}_{t}$. By Theorem 9 in \cite{maei2011gradient}, we have  
\[\mathbb{E}[\hat{A}_{t}]=A,\mathbb{E}[\hat{b}_{t}]=b,\mathbb{E}[\hat{M}_{t}]=M.\]
Replacing the expectations in (\ref{E-al-0}) and (\ref{E-al}) by corresponding unbiased estimates, we define the stochastic on-line implementation of (\ref{E-al-0}) and (\ref{E-al}) as follows,
\begin{flalign}
\label{stochastic-im-1}
\omega_{t+1}&=\omega_{t}+\beta_t(\hat A_{t}{\theta}_{t}+\hat b_{t}-\hat M_{t}\omega_{t}) ,\\
\label{stochastic-im}
\theta_{t+1}&=\theta_{t}-\alpha_t \hat A_{t}^{\top}\omega_{t}.
\end{flalign}
More details are summarized in Algorithm \ref{alg:algorithm1}.

\begin{algorithm}[tb]
    \caption{Gradient Expected Sarsa$(\lambda)$}
    \label{alg:algorithm1}
    \begin{algorithmic}
        \STATE { \textbf{Initialization}: $\omega_{0}=0$, ${\theta}_{0}=0$, $\alpha_0>0,\beta_0>0$} \\
        \FOR{$i=0$ {\bfseries to} $n$} 
        \STATE ${e}_{-1}={0}$
        \FOR{$t=0$ {\bfseries to} $T_{i}$}
        \STATE Observe $\{S_{t},A_{t},R_{t+1},S_{t+1},A_{t+1}\}\sim\mu$
        \STATE {$e_{t}=\lambda\gamma \rho_{t}e_{t-1}+\phi_{t}, \text{where}~\rho_{t}=\frac{\pi(A_{t}|S_{t})}{\mu(A_{t}|S_{t})}$}
        \STATE $\delta_{t}=R_{t+1}+\gamma{\theta}_{t}^{\top}\mathbb{E}_{\pi}\phi(S_{t+1},\cdot)-{\theta}_{t}^{\top}\phi_t$
        \STATE $\omega_{t+1}=\omega_{t}+\beta_t(e_{t}\delta_t-\phi_t\phi_{t}^{\top}\omega_t)$
        \STATE $\theta_{t+1}=\theta_{t}-\alpha_t (\gamma\mathbb{E}_{\pi}[\phi(S_{t+1,\cdot})]-\phi_{t})e_t^{\top}\omega_{t}$
        \ENDFOR
        \ENDFOR
        \STATE { \textbf{Output}:${\theta}$}
    \end{algorithmic}
\end{algorithm}

\subsection{Convergence Analysis}

\begin{table*}
\centering
 \label{table}  
    \begin{tabular}{lllll}

        \toprule
        Algorithm     & Reference &Step-size & Convergence Rate \\
        \midrule
        $\mathtt{TD}(0)$&\cite{Nathaniel2015ontd}&$\alpha_{t}=\mathcal{O}(\frac{1}{t^{\eta}})$, $\eta\in(0,1)$&$\mathcal{O}(1/\sqrt{T})$\\
        $\mathtt{TD}(0)$&\cite{gal2018finite} &$\sum_{t=1}^{\infty}\alpha_t=\infty$&$\mathcal{O}(e^{-\frac{\sigma}{2}T^{1-\eta}}+\frac{1}{T^{\eta}})$\\
        $\mathtt{GTD}(0)$&\cite{gal2018finitesample}&
        $\sum_{t=1}^{\infty}\alpha_t=\infty$,$\frac{\beta_{t}}{\alpha_{t}}\rightarrow0$
        &$\mathcal{O}(({1}/{T})^{\frac{1-\kappa}{3}})$\\
        $\mathtt{GTD}$ & \cite{liu2015finite}&constant step-size & $\mathcal{O}(1/\sqrt{T})$     \\
        $\mathtt{GTD}$     & \cite{wang2017finite} &$\sum_{t=1}^{\infty}\alpha_t=\infty$, $\frac{\sum_{t=1}^{T}\alpha^{2}_{t}}{\sum_{t=1}^{T}\alpha_{t}}\leq\infty$ & $\mathcal{O}(1/\sqrt{T})$     \\
        $\mathtt{GTB/GRetrace}$    & \cite{touati2018convergent}   & $\alpha_{t},\beta_{t}=\mathcal{O}(\frac{1}{t})$   & $\mathcal{O}(1/{T})$  \\
        {\color{red}{Ours} }    &    &    {\color{red}{constant step-size }} & ${\color{red}{\mathcal{O}(1/{T}) }}$   \\
        \bottomrule
    \end{tabular}
     \caption{
         Convergence Rate of Gradient Temporal Difference Learning
         }     

        \end{table*}

We measure the convergence rate of problem (\ref{saddle-point-problem}) by \emph{primal-dual gap error}
\cite{nemirovski2009robust}.
Let \[\Psi(\theta,\omega)=(A\theta+b)^{T}\omega-\frac{1}{2}\|\omega\|^{2}_{M},\]
    the primal-dual gap error at each solution $(\omega,\theta)$ is 
    \[
    \epsilon_{\Psi}(\theta,\omega)=\max_{\omega^{'}} \Psi(\theta,\omega^{'})- \min_{\theta^{'}} \Psi(\theta^{'},\omega).
    \]
\begin{theorem}[Convergence of Algorithm \ref{alg:algorithm1}]
    \label{theo:on-algo2-convergence}
    Consider the sequence $\{(\theta_{t},\omega_{t})\}_{t=1}^{T}$ generated by (\ref{stochastic-im}), step-size $\alpha,\beta$ are positive constants.
    Let $(\theta^*,\omega^*)$ be the optimal solution of (\ref{saddle-point-problem}), $\bar{\theta}_{T}=\frac{1}{T}(\sum_{t=1}^{T}\theta_{t})$,
    $\bar{\omega}_{T}=\frac{1}{T}(\sum_{t=1}^{T}\omega_{t})$ and we choose the step-size $\alpha,\beta$ satisfy $1-\sqrt{\alpha\beta}\|A\|_{*}>0$, where $\|A\|_{*}=\sup_{\|x\|=1}\|Ax\|$ is operator norm.
    If parameter $(\theta,\omega)$ is on a bounded
    $D_{\theta} \times D_{\omega}$, 
    i.e \emph{diam} $D_{\theta}=\sup\{\|\theta_{1}-\theta_{2}\|;\theta_{1},\theta_{2}\in D_{\theta}\}\leq\infty$, \emph{diam} $D_{\omega}$$\leq\infty$, 
    $\mathbb{E}[\epsilon_{\Psi}(\bar{\theta}_{T},\bar{\omega}_{T})]$ is upper bounded by:
    \begin{flalign}
    \nonumber
    \sup_{(\theta,\omega)}\{\dfrac{1}{T}(\dfrac{\|\theta-\theta_{0}\|^{2}}{2\alpha}+\dfrac{\|\omega-\omega_{0}\|^{2}}{2\beta})\}.
    \end{flalign}
\end{theorem}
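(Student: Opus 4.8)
The plan is to treat (\ref{saddle-point-problem}) as a convex-concave saddle-point problem and bound the averaged primal-dual gap by a Nemirovski-style argument. Writing $\Psi(\theta,\omega)=(A\theta+b)^{\top}\omega-\tfrac12\|\omega\|_M^2$, note that $\Psi$ is affine in $\theta$ and strongly concave (quadratic) in $\omega$, with exact gradients $\nabla_\theta\Psi=A^{\top}\omega$ and $\nabla_\omega\Psi=A\theta+b-M\omega$. By the cited unbiasedness $\mathbb{E}[\hat A_t]=A$, $\mathbb{E}[\hat b_t]=b$, $\mathbb{E}[\hat M_t]=M$, the stochastic directions in (\ref{stochastic-im-1})--(\ref{stochastic-im}) are conditionally unbiased estimates of these gradients given the history $\mathcal{F}_t$, so the scheme is stochastic gradient descent-ascent on $\Psi$.

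First I would pass from the averaged iterates to per-step gaps. Since $\bar\theta_T,\bar\omega_T$ are arithmetic means, $\Psi(\cdot,\omega')$ is affine (hence convex) in $\theta$, and $\Psi(\theta',\cdot)$ is concave in $\omega$, Jensen's inequality gives, for every comparator $(\theta',\omega')$,
\[
\Psi(\bar\theta_T,\omega')-\Psi(\theta',\bar\omega_T)\le\frac1T\sum_{t=1}^{T}\big[\Psi(\theta_t,\omega')-\Psi(\theta',\omega_t)\big].
\]
Maximizing over $(\theta',\omega')\in D_\theta\times D_\omega$ then bounds $\epsilon_\Psi(\bar\theta_T,\bar\omega_T)$ by the averaged right-hand side.

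Next, for each $t$ I would expand the summand exactly, using that $\Psi$ is affine in $\theta$ and quadratic-concave in $\omega$:
\[
\Psi(\theta_t,\omega')-\Psi(\theta',\omega_t)=\langle A^{\top}\omega_t,\theta_t-\theta'\rangle+\langle A\theta_t+b-M\omega_t,\omega'-\omega_t\rangle-\tfrac12\|\omega'-\omega_t\|_M^2,
\]
and convert each inner product into telescoping iterate distances via the identity induced by the updates, e.g. $\langle\hat A_t^{\top}\omega_t,\theta_t-\theta'\rangle=\frac{1}{2\alpha}(\|\theta_t-\theta'\|^2-\|\theta_{t+1}-\theta'\|^2)+\frac{\alpha}{2}\|\hat A_t^{\top}\omega_t\|^2$ and the analogue for the $\omega$-step. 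Summing over $t$ telescopes the distance differences down to $\frac{1}{2\alpha}\|\theta_0-\theta'\|^2+\frac{1}{2\beta}\|\omega_0-\omega'\|^2$, and taking conditional expectations annihilates the martingale-difference terms $\langle\nabla\Psi-g,\cdot\rangle$ by unbiasedness.

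The hard part will be disposing of the residual second-order terms $\frac{\alpha}{2}\|\hat A_t^{\top}\omega_t\|^2+\frac{\beta}{2}\|\hat A_t\theta_t+\hat b_t-\hat M_t\omega_t\|^2$ so that only the initial-distance terms survive and no $\mathcal{O}(1)$ floor remains. This is exactly where the hypothesis $1-\sqrt{\alpha\beta}\|A\|_*>0$ enters: bounding $\|A^{\top}\omega_t\|\le\|A\|_*\|\omega_t\|$ and the $\omega$-gradient norm by the corresponding quantities, a weighted Young's inequality with weight $\sqrt{\alpha/\beta}$ couples the $\theta$- and $\omega$-error terms, and the factor $\sqrt{\alpha\beta}\|A\|_*<1$ renders the accumulated quadratic contribution non-positive once combined with the strong-concavity term $-\tfrac12\|\omega'-\omega_t\|_M^2$ and the telescoped distances. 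The remaining delicacy is that the comparator $(\theta',\omega')$ sits inside a supremum, so the noise terms cannot be zeroed pointwise; I would handle this by taking the deterministic optimum as comparator where the martingale structure applies and otherwise absorbing the supremum into the finite diameters of $D_\theta,D_\omega$, then dividing by $T$ to reach the stated $\mathcal{O}(1/T)$ bound.
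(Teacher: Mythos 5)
Your overall scaffolding (Jensen on the averaged iterates, telescoping of squared distances, unbiasedness killing the martingale terms) is the standard saddle-point argument, but the step you yourself flag as "the hard part" is where the proposal breaks, and the fix you sketch does not work. With the explicit updates, the identity $\langle \hat A_t^{\top}\omega_t,\theta_t-\theta'\rangle=\frac{1}{2\alpha}(\|\theta_t-\theta'\|^2-\|\theta_{t+1}-\theta'\|^2)+\frac{\alpha}{2}\|\hat A_t^{\top}\omega_t\|^2$ leaves residuals $\frac{\alpha}{2}\|\hat A_t^{\top}\omega_t\|^2+\frac{\beta}{2}\|\hat A_t\theta_t+\hat b_t-\hat M_t\omega_t\|^2$ that depend on the \emph{iterates themselves}, not on successive differences. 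These do not telescope, are not dominated by the comparator-dependent concavity term $-\frac12\|\omega'-\omega_t\|_M^2$ (which must survive a supremum over $\omega'$, and $\Psi$ has no strong convexity in $\theta$ at all), and a Young coupling with weight $\sqrt{\alpha/\beta}$ cannot convert $\|\omega_t\|^2$ and $\|A\theta_t+b-M\omega_t\|^2$ into cancellable quantities. Generically they contribute $O(\alpha)$ per step, i.e. an $O(1)$ floor after dividing by $T$; this is exactly why plain stochastic gradient descent-ascent on a bilinear-in-$\theta$ saddle problem gives $O(1/\sqrt{T})$ with $\alpha=O(1/\sqrt{T})$, not $O(1/T)$ with constant step-size.

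The paper's proof avoids this by a structurally different per-step inequality (Proposition~\ref{app3:prop-inequality}): the optimality/subgradient relations are written at the \emph{new} iterates $(\theta_{t+1},\omega_{t+1})$ (a proximal, Chambolle--Pock-style primal-dual estimate) and an extrapolated point $\bar\theta_t=2\theta_t-\theta_{t-1}$ is introduced, so that after the law-of-cosines expansion the only non-telescoping residual is the bilinear cross term $\langle A(\theta_{t+1}-\theta_t),\omega_{t+1}-\omega\rangle$ minus its time-shifted predecessor. That difference telescopes across $t$, and the hypothesis $1-\sqrt{\alpha\beta}\|A\|_*>0$ is used only to show that the remaining quadratic forms in the successive differences $\|\theta_t-\theta_{t-1}\|^2$, $\|\omega_{t+1}-\omega_t\|^2$ have nonnegative coefficients and can be discarded, together with a final Cauchy--Schwarz to absorb the boundary cross term. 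That mechanism — gradients at $t+1$ plus extrapolation so that all residuals are products of successive differences — is the essential ingredient of the $O(1/T)$ rate and is absent from your plan; without it, your route cannot reach the stated bound. (Separately, your closing remarks on handling the supremum versus the martingale terms are too vague to constitute a proof step, though the paper is itself not careful on this point.)
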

\begin{proof} 
See Appendix I.
\end{proof}
\begin{remark}
\label{remark-4}
Theorem \ref{theo:on-algo2-convergence} illustrates \textbf{(I)}
when $\alpha=\beta=\mathcal{O}(\frac{1}{\sqrt{T}})$, then the overall convergence rate of $\mathbb{E}[\epsilon_{\Psi}(\bar{\theta}_{T},\bar{\omega}_{T})]$ is $\mathcal{O}(\frac{1}{\sqrt{T}})$, which reaches the worst rate of black box oriented sub-gradient methods \cite{nesterovintroductory2004}; \textbf{(II)} when $\alpha=\beta =\mathcal{O}(1)$, a positive scalar, then $\mathbb{E}[\epsilon_{\Psi}(\bar{\theta}_{T},\bar{\omega}_{T})]=\mathcal{O}(\dfrac{1}{T}).$
\end{remark}

\subsection{Related Works and Comparison}
Liu et al.\shortcite{liu2015finite} firstly derives $\mathtt{GTD}$ via convex-concave saddle-point formulation, and they prove the convergence rate reaches $\mathbb{E}[\epsilon_{\Psi}(\tilde{\theta}_{T},\tilde{\omega}_{T})]=\mathcal{O}(\frac{1}{\sqrt{T}})$, where $\tilde{\theta}_{T}$ is Polyak-average: $\tilde{\theta}_{T}=\dfrac{\sum_{t=1}^{T}\alpha_{t}\theta_{t}}{\sum_{t=1}^{T}\alpha_{t}}$, $\tilde{\omega}_{T}=\dfrac{\sum_{t=1}^{T}\alpha_{t}\omega_{t}}{\sum_{t=1}^{T}\alpha_{t}}$. 
Their $\mathtt{GTD}$ requires each $\theta_{t},\omega_t$ is projected into the space $D_{\theta},D_{\omega}$. 
Later, Wang et al.\shortcite{wang2017finite} extends the work of Liu et al.\shortcite{liu2015finite}, they suppose the data is generated from Markov processes rather than I.I.D assumption. Wang et al.\shortcite{wang2017finite} prove the convergence rate $\mathbb{E}[\epsilon_{\Psi}(\tilde{\theta}_{T},\tilde{\omega}_{T})]=\mathcal{O}(\frac{\sum_{t=1}^{T}\alpha^{2}_{t}}{\sum_{t=1}^{T}\alpha_{t}})$, the best convergence rate reaches $\mathcal{O}(\frac{1}{\sqrt{T}})$, where the step-size satisfies $\sum_{t=1}^{\infty}\alpha_t=\infty$, $\frac{\sum_{t=1}^{T}\alpha^{2}_{t}}{\sum_{t=1}^{T}\alpha_{t}}\leq\infty$ and $(\tilde{\theta}_{T},\tilde{\omega}_{T})$ 
is also Polyak-average, the same as \cite{liu2015finite}.
Besides, the $\mathtt{GTD}$ of Wang et al.\shortcite{wang2017finite} also require projecting the parameter into the space $D_{\theta},D_{\omega}$.

Both Polyak-averaging and projection make the implementation of gradient TD learning more difficult. 
Comparing with \cite{liu2015finite,wang2017finite} , our $\mathtt{GES(\lambda)}$ removes Polyak-averaging and projection, while reaches a faster convergence rate.

Recently, \cite{gal2018finitesample} proves $\mathtt{GTD(0)}$ family \cite{sutton2009fast_a,sutton2009convergent_b} converges at $\mathcal{O}((\frac{1}{T})^{\frac{1-\kappa}{3}})$, but nerve reach $\mathcal{O}(\frac{1}{T})$, where $\kappa\in(0,1)$.
Nathaniel and Prashanth  \shortcite{Nathaniel2015ontd} proves $\mathtt{TD(0)}$ \cite{sutton1988learning} converges at $\mathcal{O}(\frac{1}{\sqrt{T}})$ with step-size $\alpha_{t}=\mathcal{O}(\frac{1}{t^{\eta}})$, where $\eta\in(0,1)$.
Then, Dalal et al.\shortcite{gal2018finite} further explores the property of $\mathtt{TD(0)}$, and they prove the convergence rate achieves
$\mathcal{O}(e^{-\frac{\sigma}{2}T^{1-\eta}}+\frac{1}{T^{\eta}})$, but never reach $\mathcal{O}(\frac{1}{T})$, where $\eta\in(0,1)$, $\sigma$ is the minimum eigenvalue of the matrix $A^{\top}+A$.

 Comparing to the all above works, we improve the optimal convergence rate to $\mathcal{O}(\dfrac{1}{T})$ with 
 a more relaxed step-size than theirs.
 Besides, although the $\mathtt{GTB}(\lambda)$/$\mathtt{GRetrace}(\lambda)$ \cite{touati2018convergent} reaches the same convergence rate as ours, their result depends on a decay step-size.

 More details of the convergence rate of gradient temporal difference learning are summarized in Table 1.

\section{Experiments}

\begin{figure*}[t!]
    \centering
    \subfigure
    {\includegraphics[width=5.5cm,height=4cm]{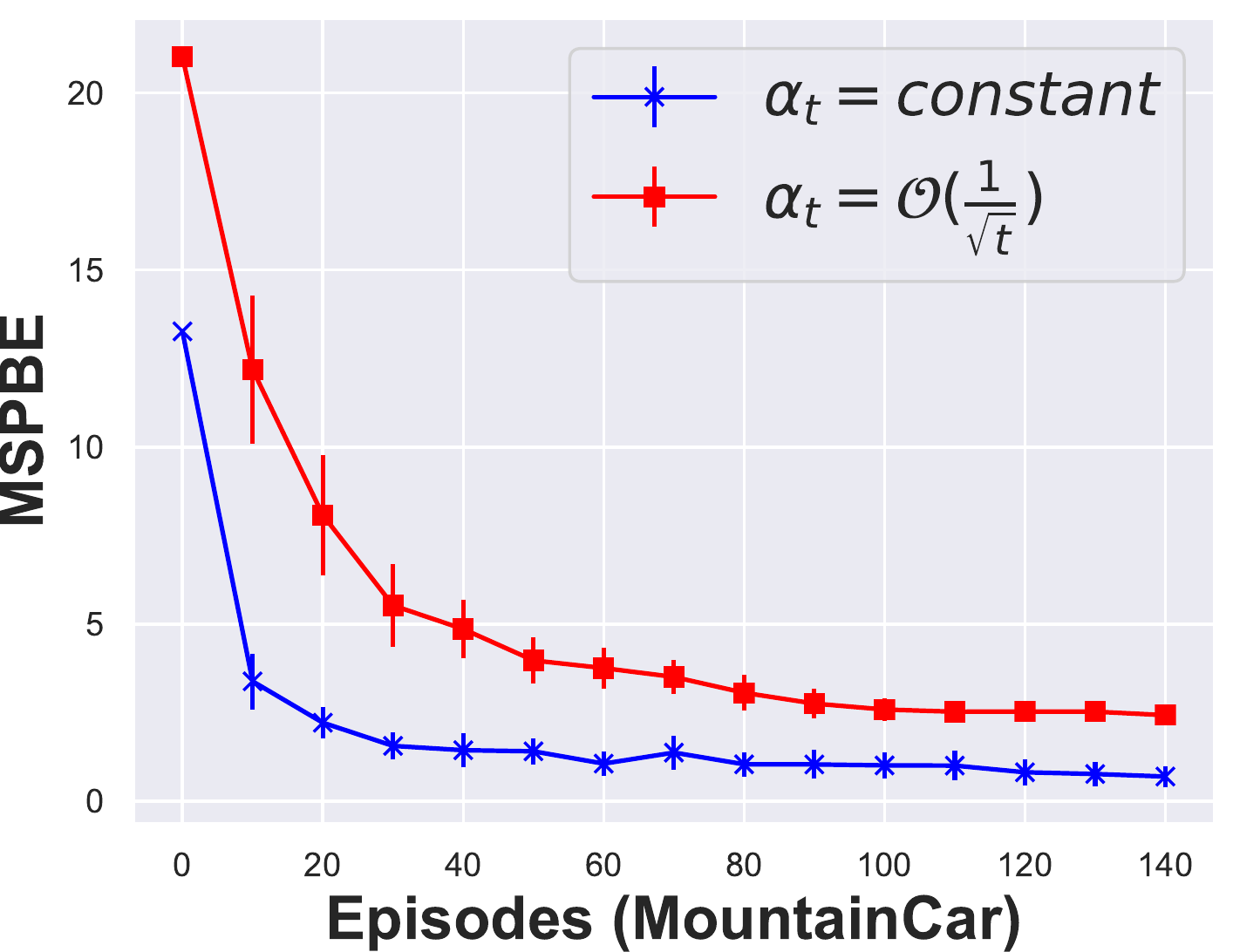}}
    \subfigure
    {\includegraphics[width=5.5cm,height=4cm]{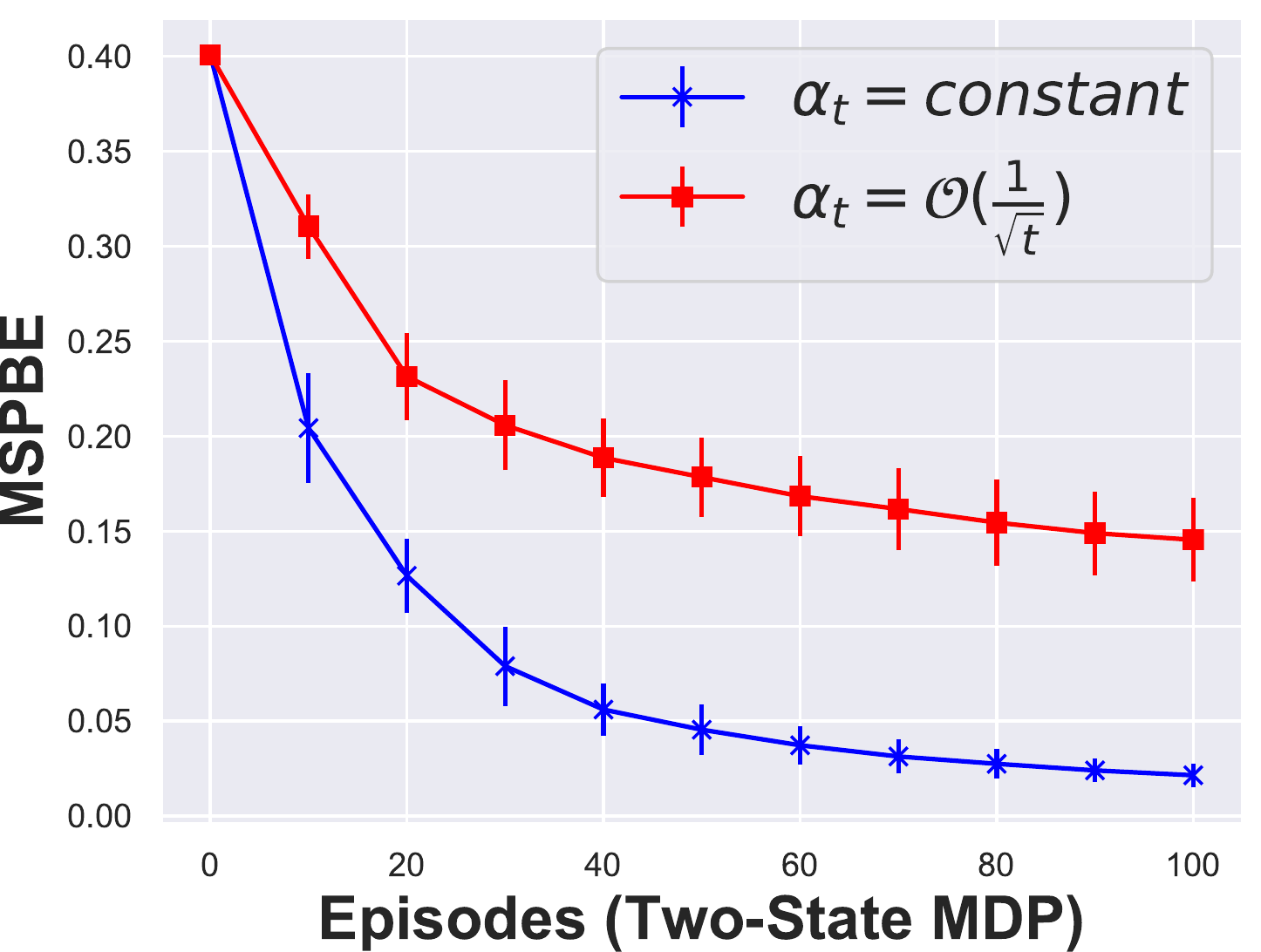}}
    \subfigure
    {\includegraphics[width=5.5cm,height=4cm]{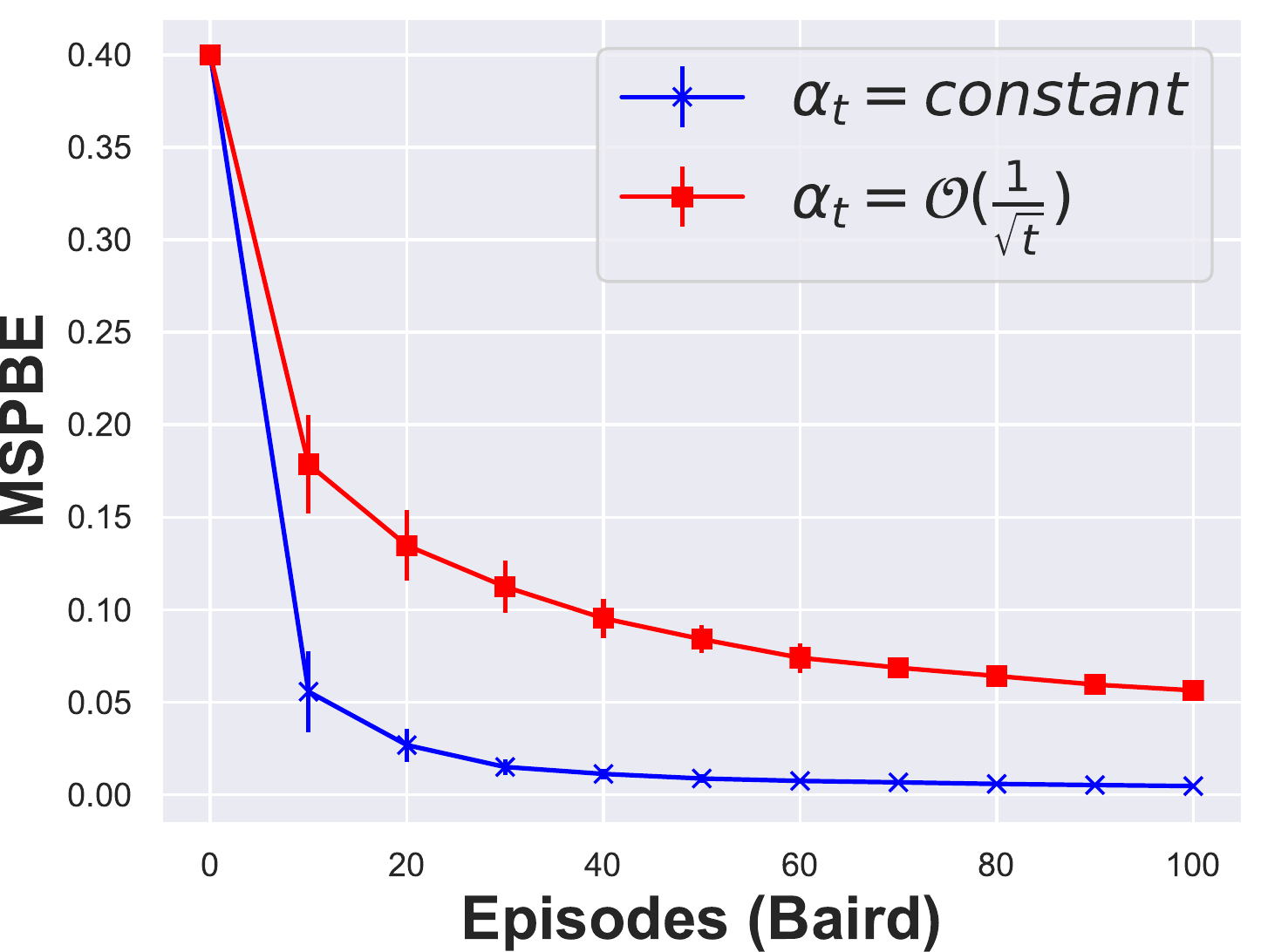}}
    \caption
    {
         Comparison between the  constant step-size and the decay step-size $\frac{1}{\sqrt{t}}$ for $\mathtt{GES}(\lambda)$.
    }
\end{figure*}
\begin{figure*}[t!]
    \centering
    \subfigure
    {\includegraphics[width=5.5cm,height=4cm]{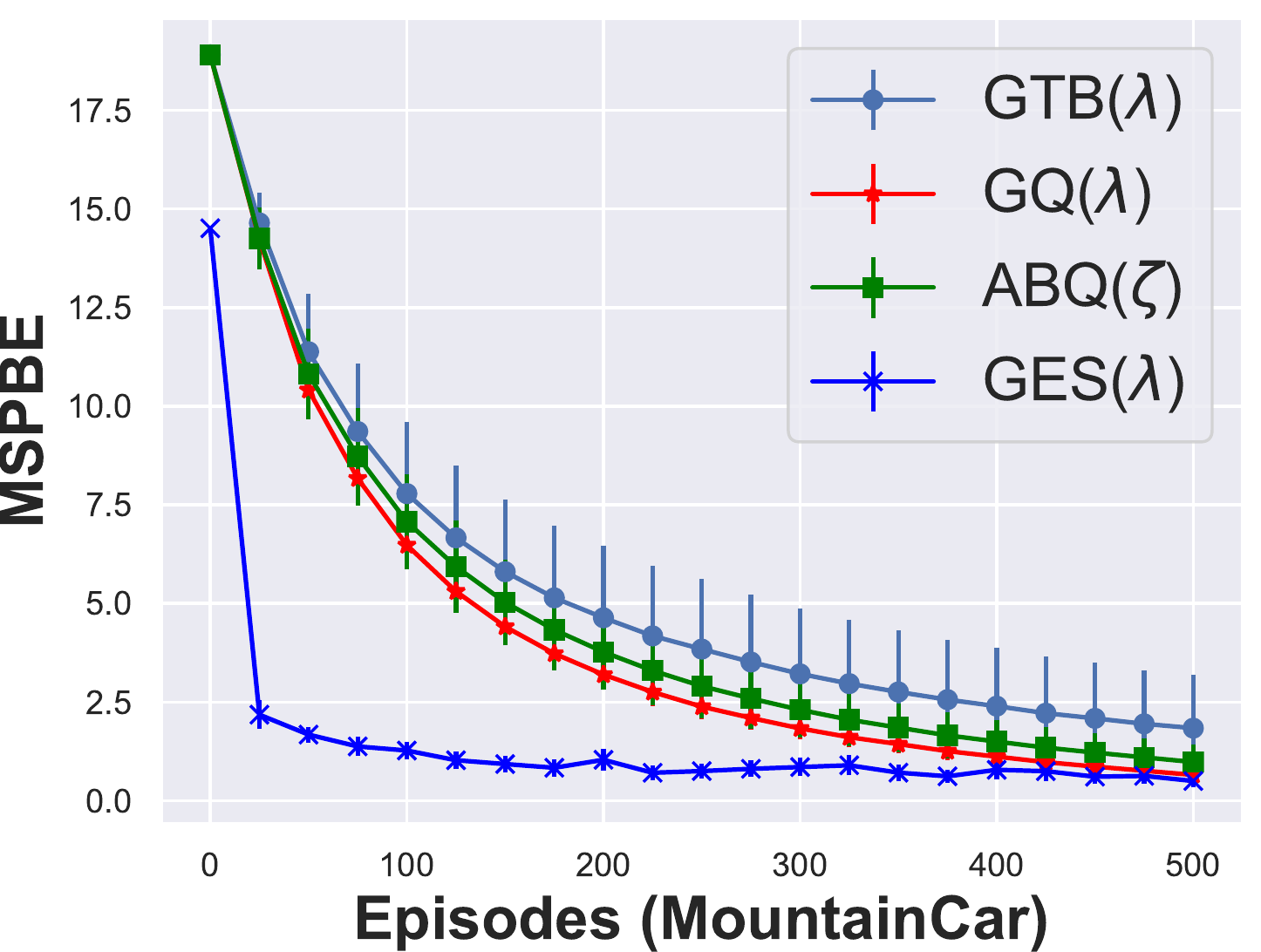}}
    \subfigure
    {\includegraphics[width=5.5cm,height=4cm]{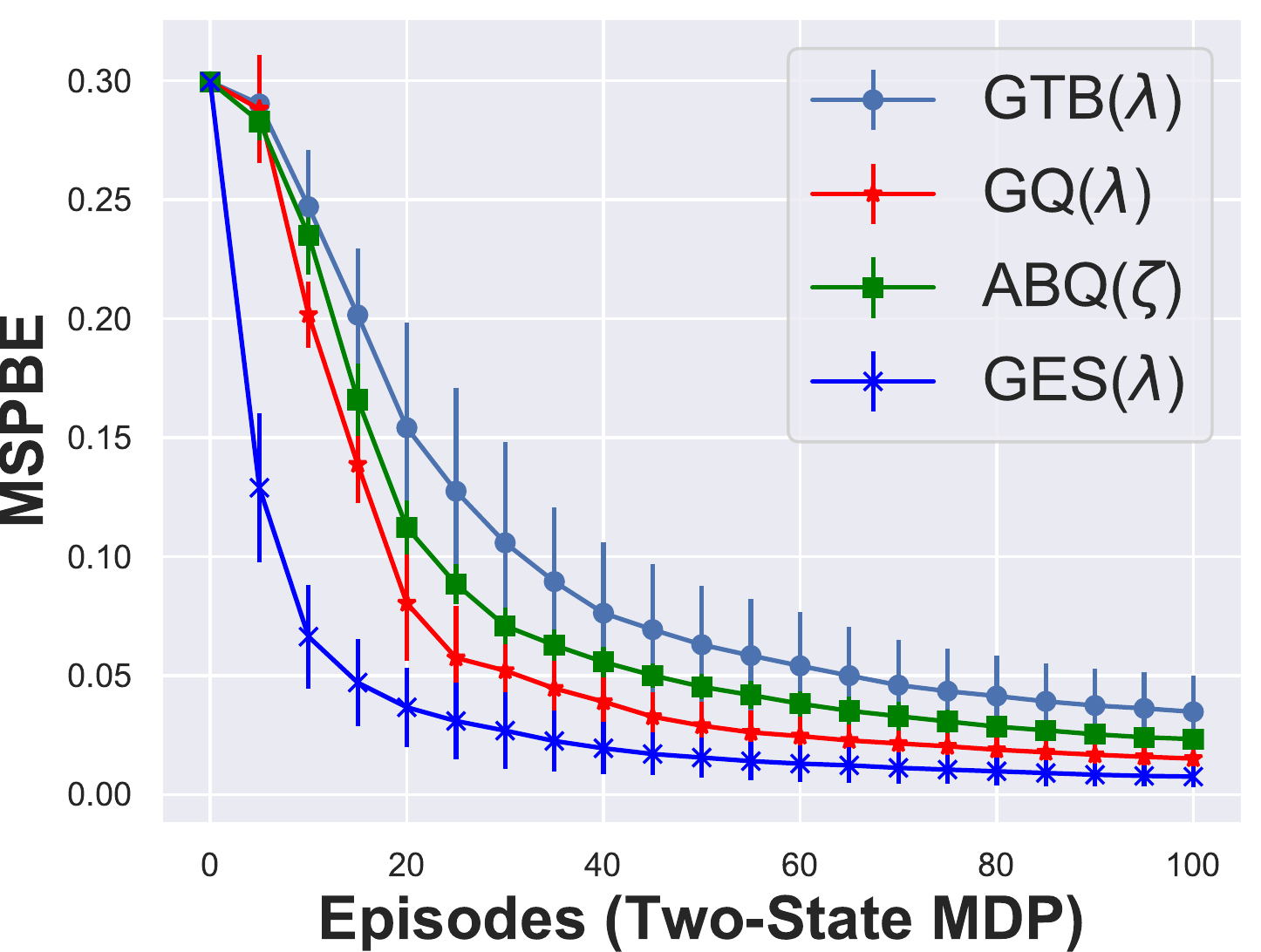}}
    \subfigure
    {\includegraphics[width=5.5cm,height=4cm]{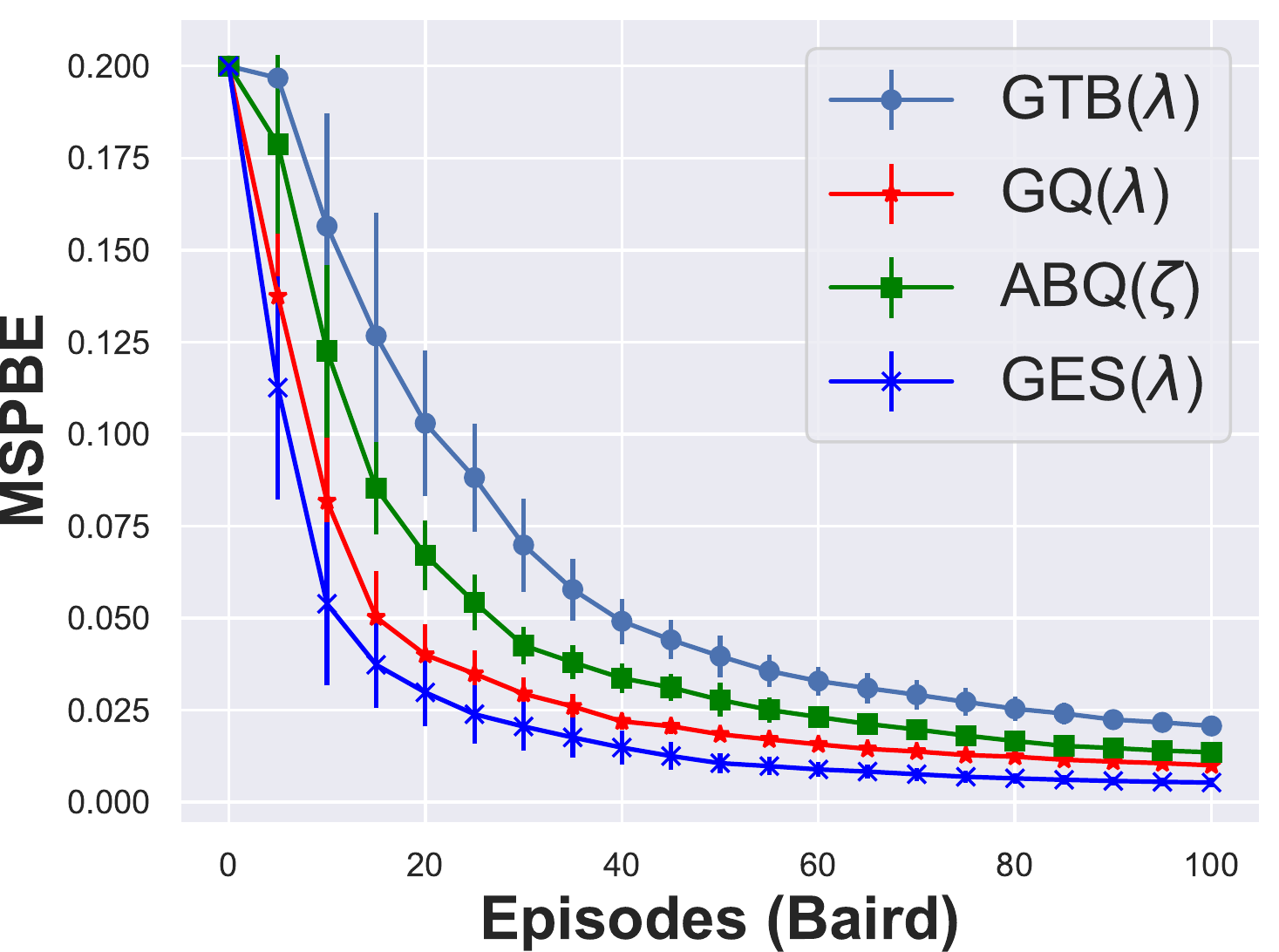}}
    \caption
    {
        MSPBE comparison over episode.
    }
\end{figure*}
\begin{figure*}[t!]
    \centering
    \subfigure
    {\includegraphics[width=5.5cm,height=4cm]{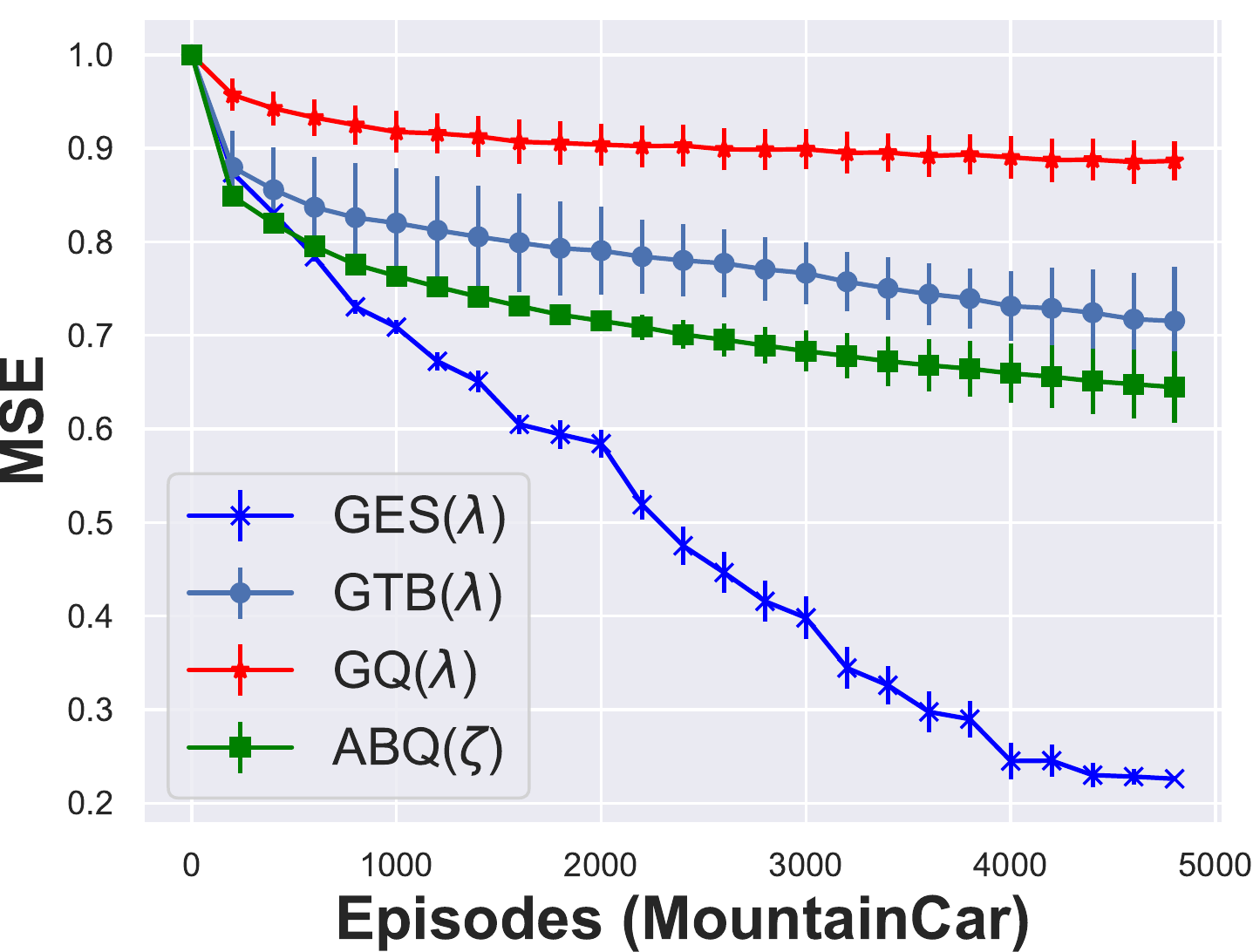}}
    \subfigure
    {\includegraphics[width=5.5cm,height=4cm]{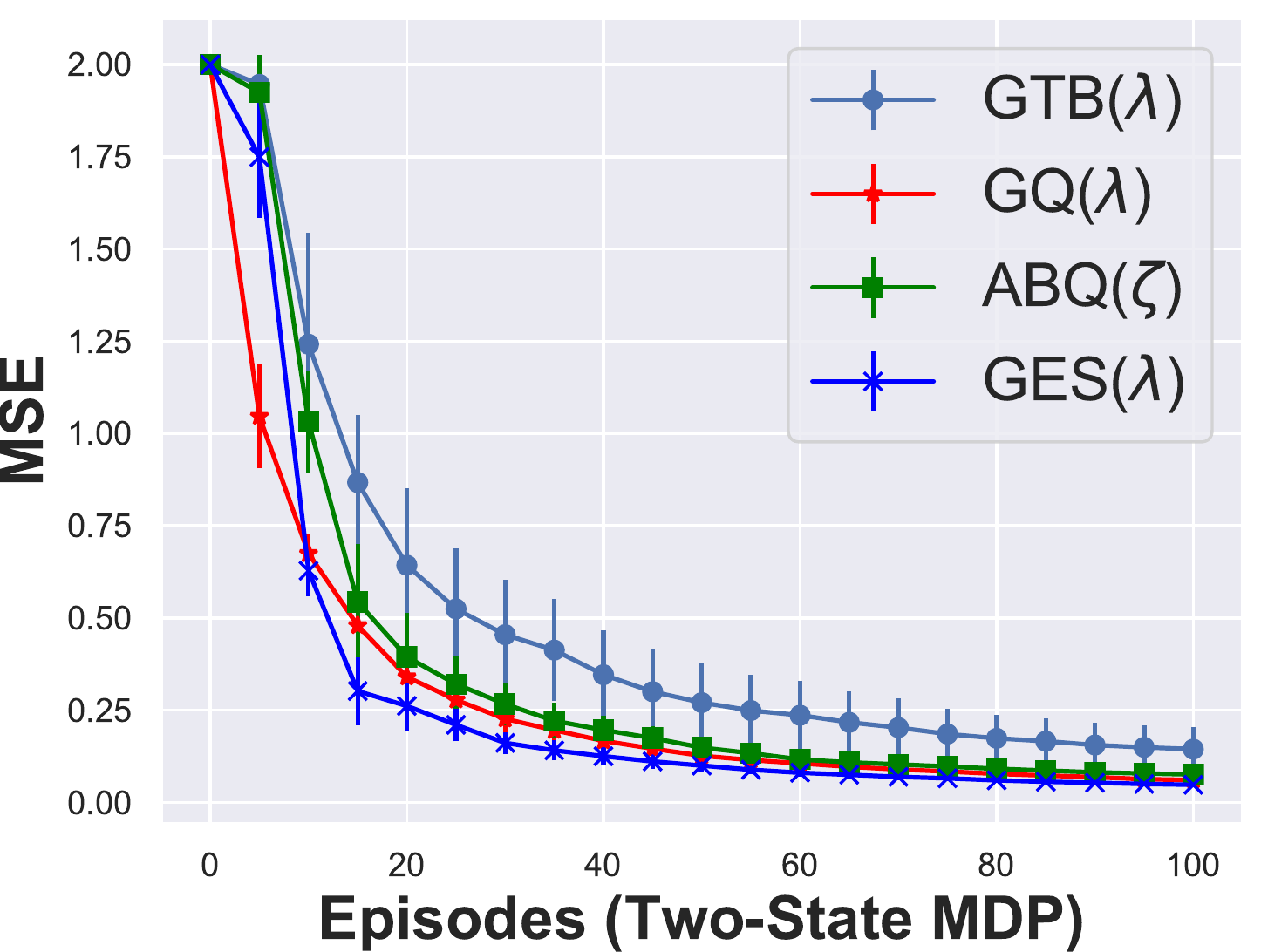}}
    \subfigure
    {\includegraphics[width=5.5cm,height=4cm]{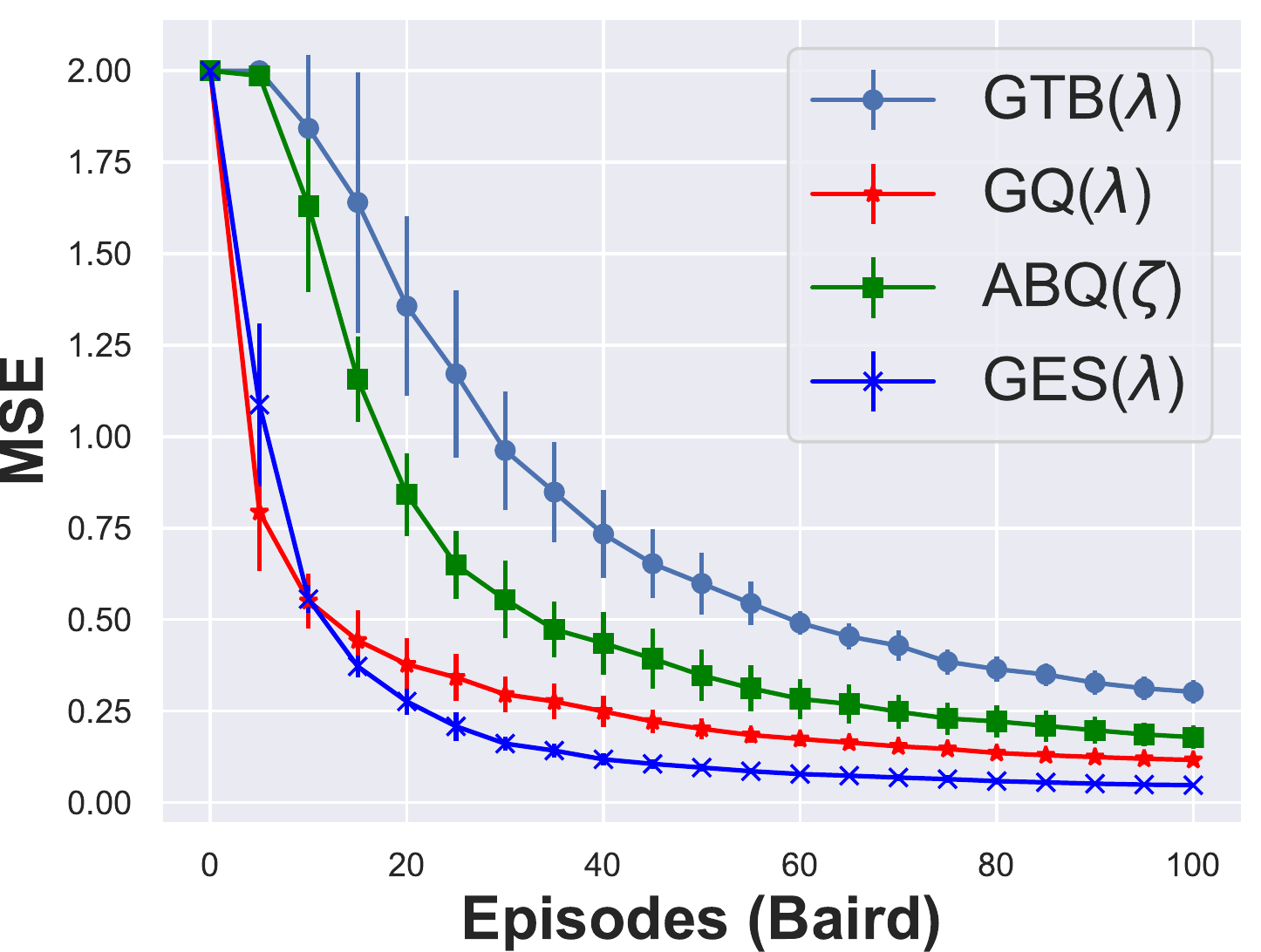}}
    \caption
    {
        MSE comparison over episode.
    }
\end{figure*}

In this section, we employ three typical domains to test the capacity of $\mathtt{GES}(\lambda)$ for off-policy evaluation,
\emph{Mountaincar}, \emph{Baird Star} \cite{baird1995residual}, and {Two-state MDP} \cite{touati2018convergent}. We compare $\mathtt{GES}(\lambda)$ with the three state-of-art algorithms:
$\mathtt{GQ}(\lambda)$ \cite{maei2010GQ}, 
$\mathtt{ABQ}(\zeta)$ \cite{Mahmood2017_b_multi},
$\mathtt{GTB}(\lambda)$ \cite{touati2018convergent}.
We choose the above three methods as baselines due to they are all learning by expected TD-error $\delta_{t}^{\text{ES}}$, which is the same as $\mathtt{GES}(\lambda)$. 
For the limitation of space, we present some details of the experiments in Appendix J.

\subsection{The Effect of Step-size} In this section, we verify the convergence result presented in Theorem \ref{theo:on-algo2-convergence}/Remark \ref{remark-4}.
We use 
the empirical \[\text{MSPBE}=\frac{1}{2}\|\hat{b}-\hat{A}\theta\|^{2}_{\hat{M}^{-1}}\] 
to evaluate the performance of all the algorithms, where we evaluate $\hat{A}$, $\hat{b}$, and $\hat{M}$ according to their unbiased estimates by Monte Carlo method with 5000 episodes.
Particular, for Mountaincar,
to collect the samples, we run $\mathtt{Sarsa}$ with $p=128$ features to obtain a stable policy. 
Then, we use this policy to collect trajectories that comprise the samples.

Figure 3 shows the comparison of the empirical MSPBE performance between a constant step-size and the decay step-size $\frac{1}{\sqrt{t}}$. 
Result (in Figure 3) illustrates that the $\mathtt{GES}(\lambda)$with a proper constant step-size converges significantly faster than the learning with step-size $\frac{1}{\sqrt{t}}$, which support our theory analysis in Remark \ref{remark-4}.

\subsection{Comparison of Empirical MSPBE}
The MSPBE distribution is computed over the combination of step-size,
$(\alpha_{k},\frac{\beta_{k}}{\alpha_{k}})\in[0.1\times 2^{j}|j = -10,-9,\cdots,-1, 0]^{2}$, and we set $\lambda=0.99$, $\zeta=0.95$ for $\mathtt{ABQ}(\zeta)$..
All the result showed in Figure 4 is an average of 100 runs.

Result in Figure 4 shows that our $\mathtt{GES}(\lambda)$ learns significantly faster with better performance than $\mathtt{GQ}(\lambda)$, $\mathtt{ABQ}(\zeta)$ and $\mathtt{GTB}(\lambda)$ in all domains. 
Besides, $\mathtt{GES}(\lambda)$ converges with a lower variance.
We also notice that although Touati et al\shortcite{touati2018convergent} claim their $\mathtt{GTB}(\lambda)$ reaches the same convergence rate as our $\mathtt{GES}(\lambda)$,
result in Figure 4 shows that our $\mathtt{GES}(\lambda)$ outperforms their $\mathtt{GTB}(\lambda)$ siginificantly.

\subsection{Comparison of Empirical MSE}
We use the following empirical $\text{MSE}$ according to \cite{adam2016investigating},
\[
\text{MSE}=\|\Phi\theta-q^{\pi}\|_{\Xi}
,\] where $q^{\pi}$ is estimated by simulating the target policy and
averaging the discounted cumulative rewards overs trajectories.
The combination of step-size for MSE is the same as previous empirical MSPBE.
All the result showed in Figure 5 is an average of 100 runs and we set $\lambda=0.99$, $\zeta=0.95$ for $\mathtt{ABQ}(\zeta)$.

The result in Figure 5 shows that $\mathtt{GES}(\lambda)$ converges significantly faster than all the three baselines with lower variance in Mountaincar domain. 
For the Two-state MDP and Baird domain, $\mathtt{GES}(\lambda)$ also achieves a better performance.
This conclusion further verifies the effectiveness of the proposed $\mathtt{GES}(\lambda)$.

\section{Conclusion}
In this paper, we introduce control variate technique to $\mathtt{Expected~Sarsa}$($\lambda$) and propose $\mathtt{ES}(\lambda)\text{-}\mathtt{CV}$ algorithm.
We analyze all the random sources lead to the variance of $\mathtt{ES}(\lambda)\text{-}\mathtt{CV}$. 
We prove that if a good estimator of value function achieves, the $\mathtt{ES}(\lambda)\text{-}\mathtt{CV}$ enjoys a lower variance than Expected Sarsa($\lambda$) without control variate.
Then, we extend $\mathtt{ES}(\lambda)\text{-}\mathtt{CV}$ to be a convergent algorithm with function approximation and propose $\mathtt{GES}(\lambda)$ algorithm.
We prove that the convergence rate of $\mathtt{GES}(\lambda)$ achieves $\mathcal{O}(1/T)$, which matches or outperforms several state-of-art gradient-based algorithms, but we use a more relaxed step-size.
Finally, we use numerical experiments to demonstrate the effectiveness of the proposed algorithm. Results show that the proposed algorithm converges faster and with lower variance than three typical algorithms $\mathtt{GQ}$($\lambda$), $\mathtt{GTB}$($\lambda$) and $\mathtt{ABQ}$($\zeta$).

\bibliographystyle{aaai}
\bibliography{reference}
\clearpage

\onecolumn
\clearpage
\appendix
\section{Appendix A: $\lambda$-Return of Sarsa for Off-policy Learning}
\label{EQ-for-rec}

For the discussion of off-policy learning, we need the background of importance sampling. Thus, the basic common conclusion about importance sampling (IS) and pre-decision importance sampling (PDIS) \cite{precup2000eligibility} is necessary.
\subsection{Off-Policy Learning via Importance Sampling}
\label{app:prop1-1}

Usually, we require that every action taken by $\pi$ is also taken by $\mu$,
which is often called \emph{coverage}~\cite{sutton2018reinforcement} in reinforcement learning.
\begin{assumption}[Coverage]
    \label{ass:Coverage}
    $\forall~(s,a)\in \mathcal{S}\times\mathcal{A}$, we require that
    $\pi(a|s) > 0 \Rightarrow \mu(a|s) > 0$.
\end{assumption}
The difficulty of off-policy roots in the discrepancy between target policy $\pi$ and behavior policy $\mu$ 
----we want to learn the target policy while we only get the data generated by behavior policy. 
One technique to hand this discrepancy is \emph{importance sampling} (IS)~\cite{rubinstein2016simulation}.
Let $\tau_{t}^{h}=\{S_{t},A_{t},R_{t+1}\}_{t\ge0}^{h}$ be a trajectory with finite horizon $h<\infty$.
Let $\rho_{t:k}=\prod_{i=t}^{k}\rho_{i}$ denote the
\emph{cumulated importance sampling ratio}, 
where $\rho_{i}=\frac{\pi(A_{i}|S_{i})}{\mu(A_{i}|S_{i})}$ and $k\leq h$.
Let $G_{t}^{h}=\sum_{k=0}^{h-t-1}\gamma^{k}R_{k+t+1}$, under Assumption~\ref{ass:Coverage} the IS estimator $G_{t}^{\text{IS}}=\rho_{t:h-1}G^{h}_{t}$ is a unbiased estimation of $q^{\pi}$.
However, it is known that IS estimator suffers from large variance of the product $\rho_{t:h-1}$ \cite{sutton1998reinforcement}.
Pre-decision importance sampling (PDIS) \cite{precup2000eligibility} $G_{t}^{\text{PDIS}}=\sum_{k=0}^{h-t-1}\gamma^{k}\rho_{t:t+k}R_{t+k+1}$ is a practical variance reduction method without introducing bias, i.e.
$\mathbb{E}_{\mu}[G_{t}^{\text{PDIS}}|S_{t}=s,A_{t}=a]=q^{\pi}(s,a)$.

\begin{flalign}
\nonumber
\mathbb{E}_{\mu}[\rho_{t:h-1}G_{t}^{h}]
=&
\mathbb{E}_{\mu}[\underbrace{\rho_{t:h-1}R_{t+1}+\rho_{t:h-1}\gamma R_{t+2}+\cdots+\rho_{t:h-1}\gamma^{h-t-1}R_{h}}_{\overset{\text{def}}=G_{t}^{\text{IS}}~~\text{IS-return}}]\\
\nonumber
=&\mathbb{E}_{\mu}[\underbrace{\rho_{t}R_{t+1}+\rho_{t:t+1}\gamma R_{t+2}+\cdots+\rho_{t:h-1}\gamma^{h-t-1}R_{h}}_{\overset{\text{def}}=G_{t}^{\text{PDIS}}~~\text{PDIS-return}}]
=\mathbb{E}_{\mu}[\sum_{k=0}^{h-t-1}\gamma^{k}\rho_{t:t+k}R_{t+k+1}].
\end{flalign}
For the equation $\mathbb{E}_{\mu}[G_{t}^{\text{IS}}]=\mathbb{E}_{\mu}[G_{t}^{\text{PDIS}}]$, please see\cite{precup2000eligibility} or section 5.9 in \cite{sutton2018reinforcement}.
\begin{lemma}[Section 3.10, \cite{thomas2015safe}; Section 5.9,~\cite{sutton2018reinforcement}]
\label{app_A_lemma1}
Let $\tau_{t}^{h}=\{S_{k},A_{k},R_{k+1}\}_{k=t}^{h}$ be the trajectory generated by behavior policy $\mu$, for a given policy $\pi$ and under Assumption~\ref{ass:Coverage}, the following holds,
\begin{flalign}
\label{PDIS-Reward}
\mathbb{E}_{\mu}[\rho_{t:h-1}R_{t+k}]=\mathbb{E}_{\mu}[\rho_{t:t+k-1}R_{t+k}].
\end{flalign}
\end{lemma}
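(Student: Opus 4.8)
The plan is to prove the identity by peeling off, one at a time, the importance–sampling ratios $\rho_{t+k},\rho_{t+k+1},\dots,\rho_{h-1}$ that sit \emph{after} the reward $R_{t+k}$ in the product $\rho_{t:h-1}$, showing that each contributes a multiplicative factor of $1$ inside the expectation. The engine of the whole argument is the single–step identity
\[
\mathbb{E}_{\mu}[\rho_i \mid S_i] = \sum_{a\in\mathcal{A}} \mu(a|S_i)\frac{\pi(a|S_i)}{\mu(a|S_i)} = \sum_{a\in\mathcal{A}}\pi(a|S_i) = 1,
\]
which is well defined under the coverage Assumption~\ref{ass:Coverage} (terms with $\pi(a|S_i)=0$ drop out, and $\mu(a|S_i)>0$ elsewhere) and says that, conditioned on the current state, the expected importance weight is exactly one.

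First I would split the cumulated ratio as $\rho_{t:h-1} = \rho_{t:t+k-1}\,\rho_{t+k:h-1}$ and fix the filtration generated by the trajectory in its natural temporal order $S_t, A_t, R_{t+1}, S_{t+1}, A_{t+1}, \dots$. The key structural fact is that both the prefix product $\rho_{t:t+k-1}$ and the reward $R_{t+k}$ are determined by the portion of the trajectory up to and including $S_{t+k-1}, A_{t+k-1}$ (recall $\mathcal{R}_s^a = \mathbb{E}[R_{t+1}|S_t=s,A_t=a]$), hence they are measurable \emph{before} the action $A_{t+k}$ is drawn, and therefore before every action $A_j$ with $j\ge t+k$.

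The main computation is a backward induction on the index $j$ running from $h$ down to $t+k$, with inductive hypothesis $\mathbb{E}_{\mu}[\rho_{t:h-1}R_{t+k}] = \mathbb{E}_{\mu}[\rho_{t:j-1}R_{t+k}]$. To pass from $j$ to $j-1$ I would write $\rho_{t:j-1} = \rho_{t:j-2}\,\rho_{j-1}$, condition on the $\sigma$-algebra $\mathcal{H}_{j-1}$ generated by everything up to and including $S_{j-1}$ (but excluding $A_{j-1}$), and apply the tower property:
\[
\mathbb{E}_{\mu}[\rho_{t:j-2}\,\rho_{j-1}\,R_{t+k}] = \mathbb{E}_{\mu}\!\big[\rho_{t:j-2}\,R_{t+k}\,\mathbb{E}_{\mu}[\rho_{j-1}\mid \mathcal{H}_{j-1}]\big].
\]
Since $j-1\ge t+k$, the factors $\rho_{t:j-2}$ and $R_{t+k}$ are $\mathcal{H}_{j-1}$-measurable and come out of the inner expectation, while the single–step identity gives $\mathbb{E}_{\mu}[\rho_{j-1}\mid\mathcal{H}_{j-1}] = \mathbb{E}_{\mu}[\rho_{j-1}\mid S_{j-1}] = 1$; this reduces the running index to $j-1$. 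Iterating down to $j = t+k$ yields exactly $\mathbb{E}_{\mu}[\rho_{t:t+k-1}R_{t+k}]$, which is the claim.

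I expect the main obstacle to be bookkeeping rather than depth: one must pin down the temporal order of the conditioning precisely so that at each peeling step the reward $R_{t+k}$ and the remaining prefix product are genuinely in the past relative to the ratio $\rho_{j-1}$ being removed. This is exactly where the constraint $t+k\le j-1$ is used, and it explains why the prefix $\rho_{t:t+k-1}$ cannot be collapsed: those ratios depend on $A_{t},\dots,A_{t+k-1}$, the very same variables that govern $R_{t+k}$, so their expected value is not $1$ once conditioned together with $R_{t+k}$, whereas every ratio of index $\ge t+k$ can be averaged out independently.
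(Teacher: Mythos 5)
Your proof is correct. Note that the paper itself does not prove this lemma at all: it states it with citations to Thomas (Section~3.10) and Sutton and Barto (Section~5.9) and simply displays the resulting identity $\mathbb{E}_{\mu}[G_{t}^{\text{IS}}]=\mathbb{E}_{\mu}[G_{t}^{\text{PDIS}}]$, deferring the argument to those references. Your backward-induction argument --- peeling off $\rho_{h-1},\dots,\rho_{t+k}$ one at a time via the tower property and the single-step identity $\mathbb{E}_{\mu}[\rho_{j-1}\mid S_{j-1}]=1$ --- is exactly the standard proof given in those sources, and you correctly identify both why the constraint $j-1\ge t+k$ is needed (so that $R_{t+k}$ and the remaining prefix are $\mathcal{H}_{j-1}$-measurable) and why the prefix $\rho_{t:t+k-1}$ cannot be averaged out. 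One small imprecision: you say $R_{t+k}$ is ``determined by'' the trajectory up to $(S_{t+k-1},A_{t+k-1})$; for stochastic rewards it is only realized jointly with $S_{t+k}$, so strictly it is measurable with respect to the history up to time $t+k$ rather than a function of $(S_{t+k-1},A_{t+k-1})$. This does not affect your induction, since $\mathcal{H}_{j-1}$ with $j-1\ge t+k$ already contains $R_{t+k}$, but the phrasing should be tightened.
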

Lemma \ref{app_A_lemma1} implies that for any time $t+k~(k\ge0)$, 
the importance sampling factors after $t+k$ have no effect in the expectation, thus the following holds: for all $k\ge0$,
\begin{flalign}
\label{app1_lemma_corr}
\mathbb{E}_{\mu}[\rho_{t:h-1}R_{t+k}]=\mathbb{E}_{\mu}[\rho_{t:t+k-1}R_{t+k}]=\mathbb{E}_{\pi}[R_{t+k}].
\end{flalign}

\subsection{$\lambda$-Return of Sarsa}
\label{app:prop1-2}

The \emph{$\lambda$-return}~\cite{sutton1998reinforcement} is 
an average contains all the \emph{$n$-step return} by weighting proportionally to $\lambda^{n-1}$, $\lambda\in[0,1]$. 
For example, let $G_{t}^{t+n}=\sum_{i=0}^{n-1}\gamma^{i}R_{t+i+1}+\gamma^{n}Q_{t+n}$ be $n$-step return, then the \emph{standard forward view} of Sarsa$(\lambda)$ is
$
G_{t}^{\lambda,\text{S}}=(1-\lambda)\sum_{n=1}^{\infty}\lambda^{n-1}G_{t}^{t+n}
$, 
which is equivalent to the following recursive version
\begin{flalign}
\nonumber
G_{t}^{\lambda,\text{S}}=R_{t+1}+\gamma[(1-\lambda)Q_{t+1}+\lambda G_{t+1}^{\lambda,\text{S}}].
\end{flalign}

We only discuss the case of off-policy learning. On-Policy is a particular case of off-policy learning if $\rho_{t}=1$.
One version of $\lambda$-return of off-policy Sarsa$(\lambda)$
via importance sampling is defined as the following recursive iteration (Section 12.8,~\cite{sutton2018reinforcement}):
\begin{flalign}
\label{l_return_offpolicysarsa-1}
G_{t}^{\lambda\rho,\text{S}}=\rho_{t}(R_{t+1}+\gamma[(1-\lambda)Q_{t+1}+\lambda G_{t+1}^{\lambda\rho,\text{S}}]).
\end{flalign}
The next Proposition \ref{prop1} gives a forward view of Eq.(\ref{l_return_offpolicysarsa-1}),
and $G_{t}^{\lambda\rho,\text{S}}$ is an unbiased estimate of $q^{\pi}$.
\begin{proposition}
    \label{prop1}
    Let $\mu$ be behavior policy and $\pi$ be the target policy. 
    $G_{t}^{t}=Q_{t}$, $G_{t}^{t+n}=\rho_{t}(R_{t+1}+\gamma G_{t+1}^{t+n})$, and
    $G_{t}^{\lambda\rho}=(1-\lambda)\sum_{n=1}^{\infty}\lambda^{n-1}G_{t}^{t+n}$,
    then $G_{t}^{\lambda\rho}$ is equivalent to $G_{t}^{\lambda\rho,\emph{S}}$ defined in Eq.(\ref{l_return_offpolicysarsa-1}).
    Furthermore, $\mathbb{E}_{\mu}[G_{t}^{\lambda\rho}|(S_{t},A_{t})=(s,a)]=q^{\pi}(s,a)$.
\end{proposition}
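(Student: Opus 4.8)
The plan is to prove the two assertions of Proposition~\ref{prop1} in turn: the equivalence of the forward view $G_t^{\lambda\rho}$ with the recursive return $G_t^{\lambda\rho,\text{S}}$ of Eq.~(\ref{l_return_offpolicysarsa-1}), and then the unbiasedness $\mathbb{E}_\mu[G_t^{\lambda\rho}\mid(S_t,A_t)=(s,a)]=q^\pi(s,a)$. For the equivalence I would show that the forward-view object $F_t\overset{\text{def}}{=}(1-\lambda)\sum_{n=1}^\infty\lambda^{n-1}G_t^{t+n}$ obeys the very same one-step recursion as $G_t^{\lambda\rho,\text{S}}$; since the two also agree in the terminal/boundary case, they must coincide.

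Concretely, I would insert the defining relation $G_t^{t+n}=\rho_t(R_{t+1}+\gamma G_{t+1}^{t+n})$ into $F_t$ and factor out $\rho_t$. The reward part contributes $\rho_t R_{t+1}$ because $(1-\lambda)\sum_{n=1}^\infty\lambda^{n-1}=1$. In the remaining term $\gamma\rho_t(1-\lambda)\sum_{n=1}^\infty\lambda^{n-1}G_{t+1}^{t+n}$ I would peel off the $n=1$ summand, which equals $(1-\lambda)G_{t+1}^{t+1}=(1-\lambda)Q_{t+1}$, and reindex the tail $n\ge2$ by $m=n-1$ to identify $\lambda\,(1-\lambda)\sum_{m=1}^\infty\lambda^{m-1}G_{t+1}^{(t+1)+m}=\lambda F_{t+1}$. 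Collecting terms gives
\begin{flalign}
\nonumber
F_t=\rho_t\big(R_{t+1}+\gamma[(1-\lambda)Q_{t+1}+\lambda F_{t+1}]\big),
\end{flalign}
which is exactly Eq.~(\ref{l_return_offpolicysarsa-1}); hence $F_t=G_t^{\lambda\rho,\text{S}}$.

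For the unbiasedness I would use that $G_t^{\lambda\rho}$ is a convex combination of the $n$-step returns $G_t^{t+n}$ with weights $(1-\lambda)\lambda^{n-1}$ summing to one, so it suffices to establish $\mathbb{E}_\mu[G_t^{t+n}\mid(S_t,A_t)=(s,a)]=q^\pi(s,a)$ for each $n$ and then average. Unrolling the recursion for $G_t^{t+n}$ puts it in per-decision form $\sum_{j=0}^{n-1}\gamma^j\rho_{t:t+j}R_{t+j+1}+\gamma^n\rho_{t:t+n-1}Q_{t+n}$, after which I would take $\mathbb{E}_\mu$ and apply Lemma~\ref{app_A_lemma1} and its corollary~(\ref{app1_lemma_corr}) termwise: each $\mathbb{E}_\mu[\rho_{t:t+j}R_{t+j+1}]$ reweights to the on-policy expectation $\mathbb{E}_\pi[R_{t+j+1}]$, and the bootstrapped term reweights analogously. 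The resulting sum $\sum_{j=0}^{n-1}\gamma^j\mathbb{E}_\pi[R_{t+j+1}]+\gamma^n\mathbb{E}_\pi[q^\pi(S_{t+n},A_{t+n})]$ is precisely the $n$-fold unfolding of the Bellman equation~(\ref{bellman-equation}) and collapses to $q^\pi(s,a)$. Equivalently, one may argue by backward induction on a finite horizon directly from~(\ref{l_return_offpolicysarsa-1}), invoking Assumption~\ref{ass:Coverage} so the ratios are well defined and using the inductive hypothesis $\mathbb{E}_\mu[G_{t+1}^{\lambda\rho}\mid S_{t+1},A_{t+1}]=q^\pi(S_{t+1},A_{t+1})$.

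The step I expect to be the main obstacle is the bookkeeping of the importance-sampling ratios so that Lemma~\ref{app_A_lemma1} legitimately applies at each stage: I must verify that the cumulated ratio $\rho_{t:t+j}$ attached to $R_{t+j+1}$ is exactly the one whose trailing factors are inert in expectation (Eq.~(\ref{PDIS-Reward})), and track carefully how the leading ratio interacts with conditioning on the pair $(S_t,A_t)$ versus on $S_t$ alone, so that the reweighting lands on $q^\pi(s,a)$ rather than $v^\pi(s)$ or a spuriously scaled quantity. A secondary technical point is justifying the interchange of $\mathbb{E}_\mu$ with the infinite sum over $n$, which follows from the geometric factor $\lambda^{n-1}$ together with coverage (Assumption~\ref{ass:Coverage}) via dominated convergence; the final collapse to $q^\pi$ then rests on bootstrapping being consistent with the fixed point $\mathcal{B}^\pi q^\pi=q^\pi$.
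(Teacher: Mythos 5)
Your proposal is correct and follows essentially the same route as the paper's own proof: the equivalence is obtained by the same peel-off-the-$n=1$-term-and-reindex manipulation of the geometric $\lambda$-weights, and the unbiasedness by unrolling $G_t^{t+n}$ into per-decision form and applying Lemma~\ref{app_A_lemma1} termwise before averaging over $n$. Your bookkeeping of the bootstrap ratio as $\rho_{t:t+n-1}$ is in fact the correct one (the paper's expansion writes $\rho_{t:t+n}$, which carries a harmless extra factor that is inert in expectation by Eq.~(\ref{app1_lemma_corr})).
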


\begin{proof}
We restate the complete calculation process of off-policy $\lambda$-return $G_{t}^{\lambda\rho}$ as belowing
\begin{flalign}
\label{app1:Gttn}
G_{t}^{t}&=Q_{t},G_{t}^{t+n}=\rho_{t}(R_{t+1}+\gamma G_{t+1}^{t+n}),\\
\label{app1:Gtlambda_rho}
G_{t}^{\lambda\rho}&=(1-\lambda)\sum_{n=1}^{\infty}\lambda^{n-1}G_{t}^{t+n}\\
\nonumber
&=(1-\lambda)G_{t}^{t+1}+\lambda(1-\lambda)\sum_{n=1}^{\infty}\lambda^{n-1}G_{t}^{t+n+1}\\
\nonumber
&=(1-\lambda)\rho_{t}(R_{t+1}+\gamma Q_{t+1})+\lambda(1-\lambda)\sum_{n=1}^{\infty}\lambda^{n-1}
\Big(
\underbrace{\rho_{t}(R_{t+1}+\gamma G_{t+1}^{t+n+1})}_{=G_{t}^{t+n+1};\text{Eq.(\ref{app1:Gttn})}|_{n\leftarrow n+1}}
\Big)\\
\nonumber
&=(1-\lambda)\rho_{t}(R_{t+1}+\gamma Q_{t+1})+\lambda\rho_{t}R_{t+1}
+\gamma\lambda\Big[\underbrace{(1-\lambda)\sum_{n=1}^{\infty}\lambda^{n-1}G_{t+1}^{t+n+1}}_{=G_{t+1}^{\lambda\rho};\text{Eq.(\ref{app1:Gtlambda_rho})}|_{t\leftarrow t+1}}\Big]\\
\label{app1_gtlamda_expend}
&=\rho_{t}\Big(R_{t+1}+\gamma[(1-\lambda)Q_{t+1}+\lambda G_{t+1}^{\lambda\rho}]\Big).
\end{flalign}
The last Eq.(\ref{app1_gtlamda_expend}) implies that from the definition of standard $\lambda$-return Eq.(\ref{app1:Gttn}) and Eq.(\ref{app1:Gtlambda_rho}), we can get the recursive form of Eq.(\ref{l_return_offpolicysarsa-1}).
 
Expanding Eq.(\ref{app1:Gttn}), we get the complete $n$-step return as follows
\begin{flalign}
G_{t}^{t+n}=\sum_{k=1}^{n}\gamma^{k-1}\rho_{t:t+k-1}R_{t+k}+\gamma^{n}\rho_{t:t+n}Q(S_{t+n},A_{t+n}).
\end{flalign}
By Eq.(\ref{PDIS-Reward}) and Eq.(\ref{app1_lemma_corr}), we have 
\begin{flalign}
\nonumber
&~~~~~\mathbb{E}_{\mu}[G_{t}^{t+n}|(S_{t},A_{t})=(s,a)]\\
\nonumber
&=\mathbb{E}_{\mu}[\sum_{k=1}^{n}\gamma^{k-1}\rho_{t:t+k-1}R_{t+k}+\gamma^{n}\rho_{t:t+n}Q(S_{t+n},A_{t+n})|(S_{t},A_{t})=(s,a)]\\
\label{app-1}
&=\mathbb{E}_{\pi}[\sum_{k=1}^{n}\gamma^{k-1}R_{t+k}+\gamma^{n}Q(S_{t+n},A_{t+n})|(S_{t},A_{t})=(s,a)]=q^{\pi}(s,a),
\end{flalign}
thus,
$
\mathbb{E}_{\mu}[G_{t}^{\lambda\rho}|(S_{t},A_{t})=(s,a)]
=\mathbb{E}_{\mu}[(1-\lambda)\sum_{n=1}^{\infty}\lambda^{n-1}G_{t}^{t+n}|(S_{t},A_{t})=(s,a)]
=q^{\pi}(s,a).
$
\end{proof}

\section{ Appendix B: Proof of Eq.(\ref{on-plicy-ES-lambda-recursive}) and  Proposition \ref{prop2}}

\label{app-pro-1}
\subsection{Eq.(\ref{on-plicy-ES-lambda-recursive}): Recursive $\lambda$-Return of Expected Sarsa for On-policy Case}
\label{app:ES-lambda-on-policy}

In this section, we prove \textbf{(I)} the forward view of  Eq.(\ref{on-plicy-ES-lambda-recursive}); 
\textbf{(II)} Eq.(\ref{on-plicy-ES-lambda-recursive}) is 
an unbiased estimate of $q^{\pi}$.

\emph{
Let
$
G_{t}^{\lambda,\emph{ES}}=(1-\lambda)\sum_{n=1}^{\infty}\lambda^{n-1}G_{t}^{t+n},
$
where $G_{t}^{t+n}=\sum_{i=0}^{n-1}\gamma^{i}R_{t+i+1}+\gamma^{n}\bar{Q}_{t+n}$ 
is $n$-step return of Expected Sarsa
and $\bar{Q}_{t+n}=\mathbb{E}_{\pi}[Q(S_{t+n},\cdot)]$, then
$G_{t}^{\lambda,\emph{ES}}$ can be written  recursively as: 
$
G_{t}^{\lambda,\emph{ES}}=R_{t+1}+\gamma[(1-\lambda)\bar{Q}_{t+1}+\lambda G_{t+1}^{\lambda,\emph{ES}}].
$
Besides,
$
\mathbb{E}_{\pi}[G_{t}^{\lambda,\emph{ES}}|(S_{t},A_{t})=(s,a))]=q^{\pi}(s,a).
$
}
\begin{proof}
By the definition of $n$-step return of Expected Sarsa: $G_{t}^{t+n}=\sum_{i=0}^{n-1}\gamma^{i}R_{t+i+1}+\gamma^{n}\bar{Q}_{t+n}$,
then $G_{t}^{t+n}$ can be written as  the following recursive form:
\begin{flalign}
\label{n-step-recursive-on-policy}
G_{t}^{t+n+1}=R_{t+1}+\gamma G_{t+1}^{t+n+1}.
\end{flalign}
Now, we turn to analyses $G_{t}^{\lambda,\text{ES}}$:
\begin{flalign}
\nonumber
G_{t}^{\lambda,\text{ES}}=&(1-\lambda)\sum_{n=1}^{\infty}\lambda^{n-1}G_{t}^{t+n}\\
\nonumber
=&(1-\lambda)G_{t}^{t+1}+(1-\lambda)\sum_{n=2}^{\infty}\lambda^{n-1}G_{t}^{t+n}\\
\nonumber
=&(1-\lambda)(R_{t+1}+\gamma\bar{Q}_{t+1})+\lambda(1-\lambda)\sum_{n=1}^{\infty}\lambda^{n-1}G_{t}^{t+n+1}\\
\nonumber
\overset{\text{Eq.(\ref{n-step-recursive-on-policy})}}=&(1-\lambda)(R_{t+1}+\gamma\bar{Q}_{t+1})+\lambda(1-\lambda)\sum_{n=1}^{\infty}\lambda^{n-1}[R_{t+1}+\gamma G_{t+1}^{t+n+1}]\\
\nonumber
=&(1-\lambda)(R_{t+1}+\gamma\bar{Q}_{t+1})+\lambda R_{t+1}+\gamma\lambda
\underbrace{\big[(1-\lambda)\sum_{n=1}^{\infty}\lambda^{n-1} G_{t+1}^{t+n+1}\big]}_{=G_{t+1}^{\lambda,\text{ES}}}\\
\nonumber
=&R_{t+1}+\gamma[(1-\lambda)\bar{Q}_{t+1}+\lambda G_{t+1}^{\lambda,\text{ES}}],
\end{flalign}
which is the result in Eq.(\ref{on-plicy-ES-lambda-recursive}).

For on-policy learning, the following is obvious
\begin{flalign}
\label{app-2-1}
	\mathbb{E}_{\pi}[G_{t}^{t+n}]=\mathbb{E}_{\pi}[\sum_{i=0}^{n-1}\gamma^{i}R_{t+i+1}+\gamma^{n}\bar{Q}_{t+n}]=\mathbb{E}_{\pi}[\sum_{i=0}^{n-1}\gamma^{i}R_{t+i+1}+\gamma^{n}{Q}_{t+n}].
\end{flalign}
It is similar to the Eq.(\ref{app-1}), we have
\begin{flalign}
	\mathbb{E}_{\pi}[G_{t}^{\lambda,\text{ES}}|(S_{t},A_{t})=(s,a)]&=\mathbb{E}_{\pi}[(1-\lambda)\sum_{n=1}^{\infty}\lambda^{n-1}G_{t}^{t+n}|(S_{t},A_{t})=(s,a)]\\
	&\overset{(\ref{app-2-1})}=\mathbb{E}_{\pi}\Big[(1-\lambda)\sum_{n=1}^{\infty}\lambda^{n-1}(\sum_{i=0}^{n-1}\gamma^{i}R_{t+i+1}+\gamma^{n}{Q}_{t+n})|(S_{t},A_{t})=(s,a)\Big]
	\\
	&\overset{(\ref{app-1})}=q^{\pi}(s,a),
\end{flalign}
which implies $G_{t}^{\lambda,\text{ES}}$ is an unbiased estimate of $q^{\pi}$.
\end{proof}

\subsection{Proof of Proposition \ref{prop2}}
\label{app:prop2}

\textbf{Proposition} \ref{prop2}
~~\emph
{
	Let $\mu$ and $\pi$ be the behavior and target policy, respectively. 
	Consider the $\lambda$-return of Sarsa and Eq.(\ref{off-es-recursive}), 
	then
	$
	\mathbb{E}_{\mu}[G_{t}^{\lambda\rho,\emph{ES}}|(S_{t},A_{t})=(s,a)]=\mathbb{E}_{\pi}[G_{t}^{\lambda,\emph{S}}|(S_{t},A_{t})=(s,a)]=q^{\pi}(s,a).
	$
}
\begin{proof}
	We expand $\mathbb{E}_{\mu}[G_{t}^{\lambda\rho,\text{ES}}|(S_{t},A_{t})=(s,a)]$ as follows
	\begin{flalign}
	\nonumber
	&\mathbb{E}_{\mu}[G_{t}^{\lambda\rho,\text{ES}}|(S_{t},A_{t})=(s,a)]\\
	\nonumber
	=&\mathbb{E}_{\mu}\Big[R_{t+1}+\gamma[(1-\lambda)\bar{Q}_{t+1}+\lambda\rho_{t+1}G_{t+1}^{\lambda\rho,\text{ES}}]|(S_{t},A_{t})=(s,a)\Big]\\
	\label{app-B2-1}
	=&\mathbb{E}_{\pi}\Big[R_{t+1}+\gamma[(1-\lambda)Q_{t+1}]|(S_{t},A_{t})=(s,a)\Big]
	+\mathbb{E}_{\mu}\Big[\gamma\lambda\rho_{t+1}G_{t+1}^{\lambda\rho,\text{ES}}|(S_{t+1},A_{t+1})=(s^{'},a^{'})\Big]\\
	\nonumber
	=&\mathbb{E}_{\pi}\Big[R_{t+1}+\gamma[(1-\lambda)Q_{t+1}]|(S_{t},A_{t})=(s,a)\Big]\\
	\nonumber
	&~~~~~~~~~~~~~~+\gamma\lambda\sum_{s^{'}\in\mathcal{S}}P^{a}_{ss^{'}}\sum_{a^{'}\in\mathcal{A}}\mu(a^{'}|s^{'})\frac{\pi(a^{'}|s^{'})}{\mu(a^{'}|s^{'})}
	\mathbb{E}_{\mu}[G_{t+1}^{\lambda\rho,\text{ES}}|(S_{t+1},A_{t+1})=(s^{'},a^{'})]\\
		\label{app-B2-2}
	=&\mathbb{E}_{\pi}\Big[R_{t+1}+\gamma(1-\lambda)Q_{t+1}+\gamma\lambda\mathbb{E}_{\mu}[G_{t+1}^{\lambda\rho,\text{ES}}|(S_{t+1},A_{t+1})=(s^{'},a^{'})]\Big|(S_{t},A_{t})=(s,a)\Big],
	\end{flalign}
	where Eq.(\ref{app-B2-1}) holds by the following facts: recall $\bar{Q}_{t+1}=\sum_{a\in\mathcal{A}}\pi(a|S_{t+1})Q_{t+1}(S_{t+1},a)$, thus
	\[\mathbb{E}_{\mu}[\bar{Q}_{t+1}]=\sum_{a\in\mathcal{A}}\mu(a|S_{t+1})\bar{Q}_{t+1}=\bar{Q}_{t+1}\underbrace{\sum_{a\in\mathcal{A}}\mu(a|S_{t+1})}_{=1}=\mathbb{E}_{\pi}[Q_{t+1}].\]
	If we continue to expand Eq.(\ref{app-B2-2}), then we have
	\[
	\mathbb{E}_{\mu}[G_{t}^{\lambda\rho,\text{ES}}|(S_{t},A_{t})=(s,a)]
	=\mathbb{E}_{\pi}[G_{t}^{\lambda,\text{S}}|(S_{t},A_{t})=(s,a)]=q^{\pi}(s,a).
	\]
\end{proof}
\section{Appendix C: Proof of  Theorem \ref{ES-Sarsa-CV-bias-variance}}
\label{app:control_variate}

\textbf{Theorem} \ref{ES-Sarsa-CV-bias-variance} 
(Forward View and Variance Analysis of Expected Sarsa$(\lambda)$ with Control Variate)
\emph{
Let $\mu$ and $\pi$ denote the behavior and target policy, respectively. 
The $\lambda$-return with control variate defined in Eq.(\ref{es-recursive-cv}) is equivalent to the following forward view: let $G_{t}^{t}=Q_{t}$,
\begin{flalign}
\label{app-2}
G_{t}^{t+n}&=R_{t+1} +\gamma\bar{Q}_{t+1}+\gamma(\rho_{t+1}G_{t+1}^{t+n}-\rho_{t+1}Q_{t+1}),\\
\label{app-3}
\widetilde{G}_{t}^{\lambda\rho,\emph{ES}}&=(1-\lambda)\sum_{n=1}^{\infty}\lambda^{n-1}G_{t}^{t+n}.
\end{flalign}
}
 \begin{proof} Firstly, we prove Eq.(\ref{app-2}),(\ref{app-3}) is equivalent to Eq.(\ref{es-recursive-cv}).
Let's expand $\widetilde{G}_{t}^{\lambda\rho,\text{ES}}$ (in Eq.(\ref{app-3})),
\begin{flalign}
\widetilde{G}_{t}^{\lambda\rho,\text{ES}}
=&(1-\lambda)G_{t}^{t+1}+(1-\lambda)\sum_{n=2}^{\infty}\lambda^{n-1}G_{t}^{t+n}\\
		\nonumber
		=&(1-\lambda)(\underbrace{R_{t+1}+\gamma\bar{Q}_{t+1}}_{=G_{t}^{t+1}; \text{Eq}.(\ref{app-2}),n=1} )+(1-\lambda)\lambda\sum_{n=1}^{\infty}\lambda^{n-1}G_{t}^{t+n+1}\\
		\nonumber
		=&(1-\lambda)(R_{t+1}+\gamma\bar{Q}_{t+1} )\\
		\nonumber
		&~~~~~~+(1-\lambda)\lambda\sum_{n=1}^{\infty}\lambda^{n-1}\Big(
		\underbrace{R_{t+1} +\gamma(\rho_{t+1}G_{t+1}^{t+n+1}+\bar{Q}_{t+1}-\rho_{t+1}Q_{t+1})}_{=G_{t}^{t+n+1};\text{Eq}.(\ref{app-2}),n\leftarrow n+1}
		\Big)\\
		\nonumber
		=&(1-\lambda)(R_{t+1}+\gamma\bar{Q}_{t+1} ) \\
			\nonumber
		&~~~~~+ \lambda(R_{t+1}+\gamma\bar{Q}_{t+1}-\gamma\rho_{t+1}Q_{t+1}))+\gamma\lambda\rho_{t+1}
		\underbrace{(1-\lambda)\sum_{n=1}^{\infty}\lambda^{n-1}G_{t+1}^{t+n+1}}_{=\widetilde{G}_{t+1}^{\lambda\rho,\text{ES}}; \text{Eq}.(\ref{app-3})t\leftarrow t+1}\\
		\label{app-4}
		&=R_{t+1}+\gamma\Big(\bar{Q}_{t+1}+\lambda\rho_{t+1}(\widetilde{G}_{t+1}^{\lambda\rho,\text{ES}}-Q_{t+1})\Big),
\end{flalign}
the last Eq.(\ref{app-4}) implies
\begin{flalign}
\label{app-5}
\widetilde{G}_{t}^{\lambda\rho,\text{ES}}=R_{t+1}+\gamma\big[
(1-\lambda)\bar{Q}_{t+1}+\lambda\big(\rho_{t+1}\widetilde{G}_{t+1}^{\lambda\rho,\text{ES}}
+\bar{Q}_{t+1}-\rho_{t+1}Q_{t+1}\big)
\big],
\end{flalign}
which is the 
Eq.(\ref{es-recursive-cv})
\end{proof}

\section{Appendix D: Proof of Eq.(\ref{operator-es})}

\subsection{The Equivalence (a) for Eq.(\ref{operator-es}) }
\label{app-(a)}
\begin{proof}
    \begin{flalign}
    \nonumber
        q+\mathbb{E}_{\mu}[\sum_{l=t}^{\infty}(\lambda\gamma)^{l-t}\delta^{\text{ES}}_{l}\rho_{t+1:l}]\overset{(\ref{app1_lemma_corr})}=&q+\mathbb{E}_{\pi}[\sum_{l=t}^{\infty}(\lambda\gamma)^{l-t}\delta^{\text{ES}}_{l}]\\
        \label{app-7}
        =&q+(I-\lambda\gamma P^{\pi})^{-1}(\mathcal{B}^{\pi}q-q),
    \end{flalign}
    Eq. (\ref{app-7}) is a common result in RL, the details of $\mathbb{E}_{\mu}[\sum_{l=t}^{\infty}(\lambda\gamma)^{l-t}\delta^{\text{ES}}_{l}\rho_{t+1:l}]=\mathbb{E}_{\pi}[\sum_{l=t}^{\infty}(\lambda\gamma)^{l-t}\delta^{\text{ES}}_{l}]$ please refer to  \cite{geist2014off} or Section 6.3.9 in \cite{bertsekas2012dynamic}.
\end{proof}

\section{Appendix E: Proof of Theorem \ref{theorem-ope}}
\label{proof-theorem-ope}

\textbf{Theorem} \ref{theorem-ope} (Policy Evaluation)
\emph{
    For any initial $Q_{0}$, consider the sequential trajectory collection $\mathcal{T}$, and the following $Q_{k}$ is learned according to the $k$-th trajectory $\tau_{k}$, $k\ge1$,
    \begin{flalign}
    \nonumber
    Q_{k+1}=\mathcal{B}^{\pi}_{\lambda}Q_{k}.
    \end{flalign}
    By iterating over $k$ trajectories, the error of policy evaluation is upper bounded by 
    \begin{flalign}
    \nonumber
    \|Q_{k}-q^{\pi}\|\leq\big(\frac{\gamma-\lambda\gamma}{1-\lambda\gamma}\big)^{k}\|Q_0-q^{\pi}\|.
    \end{flalign}
}
\begin{proof}(Proof of Theorem \ref{theorem-ope})
By Eq.(\ref{operator-es}), the following equation holds~\cite{geist2014off,bertsekas2017abstract},
\begin{flalign}
\label{app-bell-1}
\mathcal{B}_{\lambda}^{\pi}=(1-\lambda)\sum_{n=0}^{\infty}\lambda^{n}(\mathcal{B}^{\pi})^{n+1}.
\end{flalign}
It is known that Bellman operator $\mathcal{B}^{\pi}$ is a $\gamma$-contraction
\cite{bertsekas2017abstract},  \[\|\mathcal{B}^{\pi}Q_1-\mathcal{B}^{\pi}Q_2\|\leq\gamma\|Q_1-Q_2\|.\]
Thus we have
\begin{flalign}
    \nonumber
    \|\mathcal{B}^{\pi}_{\lambda}Q_1-\mathcal{B}^{\pi}_{\lambda}Q_2\|&\overset{(\ref{app-bell-1})}\leq(1-\lambda)\sum_{n=0}^{\infty}\lambda^{n}\|(\mathcal{B}^{\pi})^{n+1}(Q_{1}-Q_{2})\|\\
    \nonumber
    &\leq(1-\lambda)\sum_{n=0}^{\infty}\lambda^{n}\gamma\|(\mathcal{B}^{\pi})^{n}(Q_{1}-Q_{2})\|\\
     \nonumber
     &\cdots
     \\
     \nonumber
     &\leq(1-\lambda)\sum_{n=0}^{\infty}\lambda^{n}\gamma^{n+1}\|Q_{1}-Q_{2}\|\\
     \label{app-6}
     &=\dfrac{(1-\lambda)\gamma}{1-\lambda\gamma}\|Q_{1}-Q_{2}\|.
\end{flalign}
Since $0<\dfrac{(1-\lambda)\gamma}{1-\lambda\gamma}<1$, Eq.(\ref{app-6})
implies that $\mathcal{B}^{\pi}_{\lambda}$ is a $\dfrac{(1-\lambda)\gamma}{1-\lambda\gamma}$-contraction.
By Banach fixed point theorem \cite{conway2013course}, $\{Q_{k}\}_{k\ge0}$ generated by $Q_{k+1}=\mathcal{B}^{\pi}_{\lambda}Q_{k}$
converges to the fixed point of $\mathcal{B}^{\pi}_{\lambda}$.

By Eq.(\ref{operator-es}), $q^{\pi}$ is the unique fixed point of $\mathcal{B}^{\pi}_{\lambda}$.
Thus, $Q_{k+1}$ converges to $q^{\pi}$.

Now, we turn to consider the convergence rate. According to (\ref{app-6}), it is easy to see $\forall k \in\mathbb{N}$,
$
\|Q_{k+1}-Q_{k}\|\leq \dfrac{(1-\lambda)\gamma}{1-\lambda\gamma}\|Q_{k}-Q_{k-1}\|
.$
Then, $\forall k,n\in\mathbb{N}$,
\begin{flalign}
\nonumber
    \|Q_{k+n}-Q_{k}\|&\leq\dfrac{(1-\lambda)\gamma}{1-\lambda\gamma}\|Q_{k+n-1}-Q_{k-1}\|\\
    \nonumber
    &\leq(\dfrac{(1-\lambda)\gamma}{1-\lambda\gamma})^{2}\|Q_{k+n-2}-Q_{k-2}\|\\
    \nonumber
    &\cdots\\
    \nonumber
    &\leq(\dfrac{(1-\lambda)\gamma}{1-\lambda\gamma})^{k}\|Q_{n}-Q_{0}\|,
\end{flalign}
let $n\rightarrow\infty$, we have 
\begin{flalign}
\nonumber
\|Q_{k}-q^{\pi}\|\leq\big(\frac{\gamma-\lambda\gamma}{1-\lambda\gamma}\big)^{k}\|Q_0-q^{\pi}\|.
\end{flalign}
\end{proof}

\section{Appendix F: Proof of Theorem \ref{Variance-Analysis}}

\textbf{Theorem \ref{Variance-Analysis}}
\emph{
    $\widetilde{G}_{t}^{\lambda\rho,\emph{ES}}$ is an unbiased estimator of $q^{\pi}$, whose variance is given recursively as follows, 
    \begin{flalign}
    \nonumber
    \mathbb{V}{\emph{ar}}\big[\widetilde{G}_{t}^{\lambda\rho,\emph{ES}}\big]=
    &\mathbb{V}{\emph{ar}}\big[R_{t+1}+\gamma\bar{Q}_{t+1}-q^{\pi}(s,a)\big]
    +\gamma^{2}\lambda^{2}\mathbb{V}{\emph{ar}}\big[
    v^{\pi}(s^{'})-\bar{Q}_{t+1}\big]
    \\
    \nonumber
    &~~~+\gamma^{2}\lambda^{2}\mathbb{V}{\emph{ar}}[\Delta_{t+1}]
    +\gamma^{2}\lambda^{2}\mathbb{V}{\emph{ar}}\big[\rho_{t+1}\widetilde{G}_{t+1}^{\lambda\rho,\emph{ES}}\big],
    \end{flalign}
    where $t\ge0$, $\Delta_{t+1}=\bar{Q}_{t+1}-\rho_{t+1}Q_{t+1}-v^{\pi}(s^{'})+\rho_{t+1}q^{\pi}(s^{'},a^{'})$.
}

\begin{lemma}
    \label{app-var}
    The expectation of the cross-term between the TD error at $t$ and the difference between the return and value at $t+1$ is zero: 
    for any $q(s,a) = \mathbb{E}[G_{t+1}|S_t=s,A_t=a]$, i.e., satisfying the Bellman equation, for any bounded
    function $b: \mathcal{S} \times \mathcal{A} \times \mathcal{R} \times \mathcal{S} \rightarrow \mathbb{R}$,
    \begin{flalign}
    \label{app-8}
    \mathbb{E}[b(S_t, A_t, R_{t+1}, S_{t+1})(G_{t+1}-q(S_{t+1},A_{t+1}))|S_t=s,A_t=a]=0.
    \end{flalign}
\end{lemma}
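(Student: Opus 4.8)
The plan is to establish the identity by the tower property of conditional expectation, peeling off the randomness that occurs strictly after time $t+1$. To this end I introduce the $\sigma$-algebra $\mathcal{F}_{t+1}=\sigma(S_t,A_t,R_{t+1},S_{t+1},A_{t+1})$ generated by the transition up to and including the state--action pair at time $t+1$. The crucial structural observation is that the multiplier $b(S_t,A_t,R_{t+1},S_{t+1})$ is $\mathcal{F}_{t+1}$-measurable and that $q(S_{t+1},A_{t+1})$ is likewise $\mathcal{F}_{t+1}$-measurable, whereas the only genuinely ``future'' quantity appearing in (\ref{app-8}) is the return $G_{t+1}$ accumulated from time $t+1$ onward.

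First I would condition on $\mathcal{F}_{t+1}$ and use iterated expectation to rewrite the left-hand side of (\ref{app-8}) as $\mathbb{E}\big[\,\mathbb{E}[\,b\,(G_{t+1}-q(S_{t+1},A_{t+1}))\mid\mathcal{F}_{t+1}]\,\big|\,S_t=s,A_t=a\big]$. Since $b$ and $q(S_{t+1},A_{t+1})$ are $\mathcal{F}_{t+1}$-measurable, they factor out of the inner conditional expectation, leaving $b\,\big(\mathbb{E}[G_{t+1}\mid\mathcal{F}_{t+1}]-q(S_{t+1},A_{t+1})\big)$. Next I would invoke the Markov property: because $G_{t+1}$ depends only on $\{S_{t+1+k},A_{t+1+k},R_{t+2+k}\}_{k\ge0}$ and the process is Markov, conditioning on the full history $\mathcal{F}_{t+1}$ coincides with conditioning on $(S_{t+1},A_{t+1})$, so $\mathbb{E}[G_{t+1}\mid\mathcal{F}_{t+1}]=\mathbb{E}[G_{t+1}\mid S_{t+1},A_{t+1}]$. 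By the defining hypothesis $q(S_{t+1},A_{t+1})=\mathbb{E}[G_{t+1}\mid S_{t+1},A_{t+1}]$, the bracketed difference vanishes identically; hence the inner conditional expectation is $0$ almost surely and the outer expectation of the zero random variable is $0$, which is exactly (\ref{app-8}).

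The main obstacle I anticipate is not the algebra but making the two structural facts airtight: (i) the measurability claim, namely that $b$ and the one-step-ahead value $q(S_{t+1},A_{t+1})$ are constant given $\mathcal{F}_{t+1}$ and may therefore be pulled outside the inner expectation; and (ii) the Markov reduction $\mathbb{E}[G_{t+1}\mid\mathcal{F}_{t+1}]=\mathbb{E}[G_{t+1}\mid S_{t+1},A_{t+1}]$, which is precisely where the underlying MDP structure is used. A minor but worth-stating point is the interpretation of the hypothesis $q(s,a)=\mathbb{E}[G_{t+1}\mid\cdot]$: by time-homogeneity this is the usual value-function consistency read one step forward, so it requires no separate argument beyond the Bellman/unbiasedness relations already established (e.g. Proposition~\ref{prop2}). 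Boundedness of $b$ enters only to guarantee that the relevant expectations are finite, so that the tower property and the factoring step are legitimate.
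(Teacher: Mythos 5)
Your argument is correct. Note, though, that the paper does not actually prove this lemma at all: it states that the state-value-function version appears in Sherstan et al.\ (2018), asserts that the extension to state-action values is routine, and refers the reader there. So your proposal is not so much a different route as a self-contained filling of a gap the paper leaves open, and the tower-property/Markov-property argument you give is exactly the standard one underlying the cited result. The two points you flag as needing care are indeed the right ones, and both go through: $b(S_t,A_t,R_{t+1},S_{t+1})$ and $q(S_{t+1},A_{t+1})$ are $\mathcal{F}_{t+1}$-measurable, so they factor out of $\mathbb{E}[\,\cdot\mid\mathcal{F}_{t+1}]$, and the Markov property collapses $\mathbb{E}[G_{t+1}\mid\mathcal{F}_{t+1}]$ to $\mathbb{E}[G_{t+1}\mid S_{t+1},A_{t+1}]=q(S_{t+1},A_{t+1})$ once the lemma's hypothesis is read, via time-homogeneity, as the one-step-forward value-consistency condition (the statement's literal conditioning on $(S_t,A_t)$ is evidently a typo). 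One further remark worth making explicit if you write this up: in the paper's actual application (Eq.~(\ref{app-9}) in the proof of Theorem~\ref{Variance-Analysis}) the multiplier effectively also involves $\rho_{t+1}$, hence $A_{t+1}$, and the expectation is taken under the behavior policy while $q^{\pi}$ is the target-policy value. Your proof covers this without change, since $A_{t+1}$ is already in your $\mathcal{F}_{t+1}$ and the only fact used about $q$ is that $\mathbb{E}[G_{t+1}\mid S_{t+1},A_{t+1}]=q(S_{t+1},A_{t+1})$ under the sampling measure, which is precisely the hypothesis; so your version is in fact slightly more general than the lemma as stated, in a way the paper needs.
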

A similar result of state value function appears in \cite{sherstan2018directly}, and Lemma \ref{app-var} expends it to state-action value function. Thus,we omit its proof, and for the details 
please refer to  \cite{sherstan2018directly}.

\begin{remark}
    \label{app-remark-1}
    If $G_{t+1}$ is replaced by Expected Sarsa estimator $R_{t+1}+\gamma\bar{Q}_{t+1}$, Eq.(\ref{app-8}) holds.
\end{remark}
\begin{proof}
    \textbf{(Proof of Theorem \ref{Variance-Analysis})}
    \begin{flalign}
    \nonumber
    &\mathbb{V}\text{ar}\big[\widetilde{G}_{t}^{\lambda\rho,\text{ES}}\big]\\
    \nonumber
    =&\mathbb{E}\big[(\widetilde{G}_{t}^{\lambda\rho,\text{ES}})^{2}\big]-\underbrace{\big(\mathbb{E}\big[\widetilde{G}_{t}^{\lambda\rho,\text{ES}}\big]\big)^{2}}_{=(q^{\pi}(s,a))^{2}; \text{Proposition} \ref{prop2},(\ref{es-recursive-cv})}\\
    \nonumber
    \overset{(\ref{es-recursive-cv})}=&\mathbb{E}\Big[\Big(R_{t+1}+\gamma\big[
    (1-\lambda)\bar{Q}_{t+1}+\lambda\big(\rho_{t+1}\widetilde{G}_{t+1}^{\lambda\rho,\text{ES}}
    +\bar{Q}_{t+1}-\rho_{t+1}Q_{t+1}\big)
    \big]\Big)^{2}-(q^{\pi}(s,a))^{2}\Big]\\
    \nonumber
    =&\mathbb{E}\Big[\Big(R_{t+1}+\gamma\big[
    (1-\lambda)\bar{Q}_{t+1}
    \\
    \nonumber
    &~+\lambda\big(\rho_{t+1}\widetilde{G}_{t+1}^{\lambda\rho,\text{ES}}
    +\underbrace{\bar{Q}_{t+1}-v^{\pi}(s^{'})-\rho_{t+1}Q_{t+1}+\rho_{t+1}q^{\pi}(s^{'},a^{'})}_{\Delta_{t+1}}+v^{\pi}(s^{'})-\rho_{t+1}q^{\pi}(s^{'},a^{'})\big)
    \big]\Big)^{2}
        \\
    \nonumber
    &~~~~~~~~~~~~~~~~~~~~~~~~~~~~~~~~~~~~~~~~~~~~~~~~~~~~~~~~~~~~~~~~~~~~~~~~-(q^{\pi}(s,a))^{2}\Big]
    \\
    \nonumber
    =&\mathbb{E}\bigg[\bigg(R_{t+1}+\gamma\Big[
    (1-\lambda)\bar{Q}_{t+1}
    +\lambda\Big(\rho_{t+1}\big(\widetilde{G}_{t+1}^{\lambda\rho,\text{ES}}-q^{\pi}(s^{'},a^{'})\big)
    +\Delta_{t+1}+v^{\pi}(s^{'})\Big)
    \Big]\bigg)^{2}\\
    \nonumber
    &~~~~~~~~~~~~~~~~~~~~~~~~~~~~~~~~~~~~~~~~~~~~~~~~~~~~~~~~~~~~~~~~~~~~~~~~-(q^{\pi}(s,a))^{2}\bigg]
    \\
    \nonumber
    =&\mathbb{E}\bigg[\bigg(
    R_{t+1}+\gamma\bar{Q}_{t+1}+
    \gamma\lambda(v^{\pi}(s^{'})-\bar{Q}_{t+1})
    +\gamma\lambda\Big(\rho_{t+1}\big(\widetilde{G}_{t+1}^{\lambda\rho,\text{ES}}-q^{\pi}(s^{'},a^{'})\big)
    +\Delta_{t+1})\Big)
    \bigg)^{2}
    \\
    \nonumber
    &~~~~~~~~~~~~~~~~~~~~~~~~~~~~~~~~~~~~~~~~~~~~~~~~~~~~~~~~~~~~~~~~~~~~~~~~-(q^{\pi}(s,a))^{2}\bigg]
    \\
    \nonumber
    =&\mathbb{E}\bigg[\bigg(
    R_{t+1}+\gamma\bar{Q}_{t+1}-q^{\pi}(s,a)+
    \gamma\lambda(v^{\pi}(s^{'})-\bar{Q}_{t+1})
    \\
    \nonumber
    &~~~~~~~~~~~~~~~~~~~~~~~~~~~~~~~~~+\gamma\lambda\Big(\rho_{t+1}\big(\widetilde{G}_{t+1}^{\lambda\rho,\text{ES}}-q^{\pi}(s^{'},a^{'})\big)
    +\Delta_{t+1})\Big)+q^{\pi}(s,a)
    \bigg)^{2}-(q^{\pi}(s,a))^{2}\bigg]\\
    \nonumber
    =&\mathbb{E}\Big[\Big(    R_{t+1}+\gamma\bar{Q}_{t+1}-q^{\pi}(s,a)\Big)^{2}\Big]
    +\gamma^{2}\lambda^{2}\mathbb{E}\Big[
    \big(v^{\pi}(s^{'})-\bar{Q}_{t+1}\big)^{2}\Big]+\gamma^{2}\lambda^{2}\mathbb{E}[\Delta^{2}_{t+1}]
    \\
    \label{app-9}
    &~~~~~~~~~~~~~~~~~~~~~~~~~~~~~~~~~~~~~~~~~~~~~~~~~~~~~~~~~~~~~+\gamma^{2}\lambda^{2}\mathbb{E}\Big[\rho^{2}_{t+1}\big(\widetilde{G}_{t+1}^{\lambda\rho,\text{ES}}-q^{\pi}(s^{'},a^{'})\big)^{2}\Big]
    \end{flalign}
    Eq.(\ref{app-9}) holds due to Remark \ref{app-remark-1} and Lemma 1 in \cite{sherstan2018directly}.
    By the definition of variance, 
    Eq.(\ref{app-9}) is equivalent to Eq.(\ref{variance-1}), which is the result we want to prove.
\end{proof}

\section{Appendix G: Two-State MDP Example }
\label{app-example}
\begin{flalign}
\nonumber
P^{\pi} = \begin{pmatrix}
0 & 1 & 0 & 0\\
0 & 1 & 0 & 0\\
1 & 0 & 0 & 0\\
1 & 0 & 0 & 0
\end{pmatrix}
\Longrightarrow
(I-\gamma\lambda P^{\pi}) = \begin{pmatrix}
1 & -\gamma\lambda & 0 & 0\\
0 & 1-\gamma\lambda & 0 & 0\\
-\gamma\lambda & 0 & 1 & 0\\
-\gamma\lambda & 0 & 0 & 1
\end{pmatrix},
\end{flalign}
then, we have
\begin{flalign}
\nonumber
(I-\gamma\lambda P^{\pi})^{-1} = \begin{pmatrix}
1 & \frac{\gamma\lambda}{1-\gamma\lambda} & 0 & 0\\
0 & \frac{1}{1-\gamma\lambda}& 0 & 0\\
\gamma\lambda & \frac{\gamma^2\lambda^2}{1-\gamma\lambda} & 1 & 0\\
\gamma\lambda & \frac{\gamma^2\lambda^2}{1-\gamma\lambda} & 0 & 1
\end{pmatrix}.
\end{flalign}
\begin{flalign}
\nonumber
A&= \underbrace{\begin{pmatrix}
1 & 2& 0 & 0\\
0 & 0& 1& 2
\end{pmatrix}}_{=\Phi^{\top}}
\underbrace{\frac{1}{2}I}_{=\Xi}
\underbrace{
\begin{pmatrix}
1 & \frac{\gamma\lambda}{1-\gamma\lambda} & 0 & 0\\
0 & \frac{1}{1-\gamma\lambda}& 0 & 0\\
\gamma\lambda & \frac{\gamma^2\lambda^2}{1-\gamma\lambda} & 1 & 0\\
\gamma\lambda & \frac{\gamma^2\lambda^2}{1-\gamma\lambda} & 0 & 1
\end{pmatrix}
}_{=(I-\gamma\lambda P^{\pi})^{-1}}
\underbrace{
 \begin{pmatrix}
-1 & \gamma & 0 & 0\\
0 & \gamma-1 & 0 & 0\\
\gamma & 0 & -1 & 0\\
\gamma & 0 & 0 & -1
\end{pmatrix}
}_{=\gamma P^\pi -I}
\underbrace{
	\begin{pmatrix}
1 & 0\\
2 & 0\\
0 & 1\\
0 &  2
\end{pmatrix}
}_{=\Phi}\\
&= \begin{pmatrix}
\dfrac{6\gamma-\gamma\lambda-5}{2(1-\gamma\lambda)} & 0 \\
\dfrac{3\gamma}{2} &- \dfrac{5}{2} 
\end{pmatrix}.
\end{flalign}

\section{Appendix H: Proof of Eq.(\ref{Eq:mspbe})}
\label{app-mspbe}

For a given policy $\pi$, $Q_{\theta}=\Phi\theta$, then by the definition of MSPBE objection function, we have,
\begin{flalign}
\nonumber
\text{MSPBE}(\theta,\lambda)&=\|Q	_{\theta}-\Pi\mathcal{B}_{\lambda}^{\pi}Q	_{\theta}\|^{2}_{\Xi}\\
\nonumber
&=\|\Pi Q	_{\theta}-\Pi\mathcal{B}_{\lambda}^{\pi}Q	_{\theta}\|^{2}_{\Xi}\\
\nonumber
&=\|\Phi^{T}\Xi(Q	_{\theta}-\mathcal{B}_{\lambda}^{\pi}Q	_{\theta})\|^{2}_{({\Phi^{T}\Xi\Phi})^{-1}}\\
\nonumber
&=\|\Phi^{T}\Xi(I-\lambda\gamma P^{\pi})^{-1}(\Phi\theta-\gamma P^{\pi}\Phi\theta-R^{\pi})\|^{2}_{({\Phi^{T}\Xi\Phi})^{-1}}\\
\nonumber
&=\|\Phi^{T}\Xi(I-\lambda\gamma P^{\pi})^{-1}\big((I-\gamma P^{\pi})\Phi\theta-R^{\pi}\big)\|^{2}_{({\Phi^{T}\Xi\Phi})^{-1}}\\
&=\|b+A\theta\|^{2}_{({\Phi^{T}\Xi\Phi})^{-1}},
\end{flalign}
where $A=\Phi^{T}\Xi(I-\lambda\gamma P^{\pi})^{-1}(\gamma P^{\pi}-I)\Phi,b =\Phi\Xi (I-\lambda\gamma P^{\pi})^{-1}r.$

\section{Appendix I: Proof of Theorem \ref{theo:on-algo2-convergence}}
\label{them5}
\textbf{Theorem \ref{theo:on-algo2-convergence}}
\emph{
Consider the sequence $\{(\theta_{t},\omega_{t})\}_{t=1}^{T}$ generated by (\ref{stochastic-im}), step-size $\alpha,\beta$ are positive constants.
Let $\bar{\theta}_{T}=\frac{1}{T}(\sum_{t=1}^{T}\theta_{t})$,
$\bar{\omega}_{T}=\frac{1}{T}(\sum_{t=1}^{T}\omega_{t})$ and we chose the step-size $\alpha,\beta$ satisfy $1-\sqrt{\alpha\beta}\|A\|_{*}>0$, where $\|A\|_{*}=\sup_{\|x\|=1}\|Ax\|$ is operator norm.
If parameter $(\theta,\omega)$ is on a bounded
$D_{\theta} \times D_{\omega}$, 
i.e \emph{diam} $D_{\theta}=\sup\{\|\theta_{1}-\theta_{2}\|;\theta_{1},\theta_{2}\in D_{\theta}\}\leq\infty$, \emph{diam} $D_{\omega}$$\leq\infty$, 
$\mathbb{E}[\epsilon_{\Psi}(\bar{\theta}_{T},\bar{\omega}_{T})]$ is upper bounded by:
\begin{flalign}
\nonumber
\sup_{(\theta,\omega)}\Big\{\dfrac{1}{T}(\dfrac{\|\theta-\theta_{0}\|^{2}}{2\alpha}+\dfrac{\|\omega-\omega_{0}\|^{2}}{2\beta})\Big\}.
\end{flalign}
}
The proof of Theorem \ref{theo:on-algo2-convergence} uses a inequality (in Eq.(\ref{app3: inequality})) , we present it in the next Proposition \ref{app3:prop-inequality}.
\begin{proposition}
	\label{app3:prop-inequality}
	Consider the update of expection version in Eq.(\ref{E-al}), 	
	\[
	\omega_{t+1}=\omega_t+\beta(A\theta_t+b-M\omega_t),\theta_{t+1}=\theta_t-\alpha A^{\top}\omega_t.
	\]
	Let $F(\omega)=\frac{1}{2}\|\omega\|^{2}_{M}-b^{\top}\omega$, then for any $(\theta,\omega)\in D_{\theta}\times D_{\omega}$, the following hlods
	\begin{flalign}
	\nonumber
	&\frac{1}{2\alpha}\|\theta-\theta_{t}\|^{2}+\frac{1}{2\beta}\|\omega-\omega_{t}\|^{2}\\
	\nonumber
	\ge&\frac{1}{2\alpha}(\|\theta_{t}-\theta_{t+1}\|^{2}+\|\theta_{t+1}-\theta\|^{2})+\frac{1}{2\beta}(\|\omega_{t}-\omega_{t+1}\|^{2}+\|\omega_{t+1}-\omega\|^{2})\\
	\nonumber
	&+\Big(\langle -A\theta,\omega_{t+1}\rangle+F(\omega_{t+1})\Big)-\Big(\langle -A\theta_{t+1},\omega\rangle+F(\omega)\Big)\\
	\label{app3: inequality}
	&+\Big\langle A(\theta_{t+1}-{\theta}_{t}),\omega_{t+1}-\omega\Big\rangle.
	\end{flalign}
\end{proposition}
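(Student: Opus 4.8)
The plan is to generate the bound by adding two elementary three-point identities, one coming from the primal step and one from the dual step, and then re-packaging the surviving linear terms into the primal--dual gap plus the stated bilinear correction. The first thing I would record is the reformulation $\langle -A\theta,\omega\rangle+F(\omega)=-\Psi(\theta,\omega)$, valid for $F(\omega)=\tfrac12\|\omega\|^2_M-b^\top\omega$; this identifies the two bracketed quantities on the right-hand side as $-\Psi(\theta,\omega_{t+1})$ and $-\Psi(\theta_{t+1},\omega)$, so the quantity I must reproduce is $\Psi(\theta_{t+1},\omega)-\Psi(\theta,\omega_{t+1})+\langle A(\theta_{t+1}-\theta_t),\omega_{t+1}-\omega\rangle$. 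Having the target in this form tells me exactly which monomials in $(\theta,\theta_{t+1},\omega,\omega_{t+1})$ the combination of identities has to match.

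Next I would treat each update as an unconstrained gradient step and apply the exact identity $\langle u-v,u-w\rangle=\tfrac12(\|u-v\|^2+\|u-w\|^2-\|v-w\|^2)$. Substituting $A^\top\omega_t=\tfrac1\alpha(\theta_t-\theta_{t+1})$ into the primal block gives
\[
\tfrac{1}{2\alpha}\|\theta-\theta_t\|^2=\tfrac{1}{2\alpha}\|\theta_{t+1}-\theta_t\|^2+\tfrac{1}{2\alpha}\|\theta_{t+1}-\theta\|^2+\langle A^\top\omega_t,\theta_{t+1}-\theta\rangle,
\]
and substituting $A\theta_t+b-M\omega_t=\tfrac1\beta(\omega_{t+1}-\omega_t)$ into the dual block gives the analogous identity, in which the four squared terms of the claim appear and the residual is the inner product $\langle A\theta_t+b-M\omega_t,\omega-\omega_{t+1}\rangle$. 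Adding the two identities reproduces the left-hand side together with the four squared terms on the right exactly, so the entire problem collapses to comparing the two residual inner products with the target gap-plus-correction.

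The third step is the regrouping. I would expand $\langle A^\top\omega_t,\theta_{t+1}-\theta\rangle=\langle\omega_t,A\theta_{t+1}-A\theta\rangle$ and split the dual residual into its $A\theta_t$, $b$, and $M\omega_t$ pieces. The purely bilinear monomials then assemble, after telescoping, into the bilinear parts of $\Psi(\theta_{t+1},\omega)-\Psi(\theta,\omega_{t+1})$ and the correction $\langle A(\theta_{t+1}-\theta_t),\omega_{t+1}-\omega\rangle$, while the pieces carrying $M\omega_t$ and $b$ must be turned into $F(\omega_{t+1})-F(\omega)$. For this last conversion I would invoke convexity of $F$, which holds because $M\succeq0$, in the form $F(\omega)\ge F(\omega_{t+1})+\langle M\omega_{t+1}-b,\omega-\omega_{t+1}\rangle$, together with the $M$-weighted three-point identity $\langle M(\omega_{t+1}-\omega_t),\omega_{t+1}-\omega\rangle=\tfrac12(\|\omega_{t+1}-\omega_t\|_M^2+\|\omega_{t+1}-\omega\|_M^2-\|\omega_t-\omega\|_M^2)$; discarding the surviving nonnegative $M$-square is what downgrades the identity to the claimed ``$\ge$''.

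The main obstacle will be controlling the leftover bilinear cross term $\langle A(\theta_{t+1}-\theta),\omega_t-\omega_{t+1}\rangle$ that the regrouping does not absorb into the gap, since it couples the distance-to-comparison $\theta_{t+1}-\theta$ with the dual increment $\omega_{t+1}-\omega_t$ and is not sign-definite. This is precisely the gradient-mismatch that arises because the dual step differentiates at $\omega_t$ (and $\theta_t$) rather than at the updated iterates, and it is where the hypothesis $1-\sqrt{\alpha\beta}\|A\|_*>0$ has to be used: I would bound the cross term by $\|A\|_*\|\theta_{t+1}-\theta\|\,\|\omega_t-\omega_{t+1}\|$ and absorb it into the increment squares $\tfrac{1}{2\alpha}\|\theta_{t+1}-\theta_t\|^2$ and $\tfrac{1}{2\beta}\|\omega_{t+1}-\omega_t\|^2$ via a weighted Young inequality, the step-size condition guaranteeing that the absorption leaves the remaining squares nonnegative. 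Keeping the signs of these cross terms and of the $M$-square consistent throughout is the place where an error is most likely to creep in, so that is where I would concentrate the verification.
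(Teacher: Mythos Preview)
Your three-point-identity skeleton and the plan to repackage the residual linear terms into the $\Psi$-gap are exactly the paper's, but the last paragraph is where the proposal breaks down. The condition $1-\sqrt{\alpha\beta}\|A\|_{*}>0$ is \emph{not} a hypothesis of this proposition---it appears only in Theorem~\ref{theo:on-algo2-convergence}---so you cannot invoke it here. And even if you could, absorbing the cross term $\langle A(\theta_{t+1}-\theta),\omega_t-\omega_{t+1}\rangle$ via a weighted Young inequality would shave off part of the squared terms on the right-hand side (note also that this cross term carries $\theta_{t+1}-\theta$, the distance to the comparison point, not the increment $\theta_{t+1}-\theta_t$ you propose to absorb it into). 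Either way you end up with a strictly weaker inequality than the one stated, so this route cannot close the proposition as written.

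The paper avoids any leftover by working one step ``later'' in each block. On the dual side it asserts the subgradient inclusion $\tfrac{1}{\beta}(\omega_t-\omega_{t+1})+A\theta_t\in\partial F(\omega_{t+1})$, so convexity of $F$ gives directly
\[
F(\omega)\ge F(\omega_{t+1})+\tfrac{1}{2\beta}\big(\|\omega_t-\omega_{t+1}\|^{2}+\|\omega_{t+1}-\omega\|^{2}-\|\omega_t-\omega\|^{2}\big)+\langle A\theta_t,\omega-\omega_{t+1}\rangle,
\]
with no $M$-square residual to discard. On the primal side it writes the law-of-cosines identity with $A^{\top}\omega_{t+1}$ rather than $A^{\top}\omega_t$. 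Summing these two relations then leaves precisely the correction $\langle A(\theta_{t+1}-\theta_t),\omega_{t+1}-\omega\rangle$ and nothing else; no Young inequality and no step-size restriction are needed at this stage. The step-size condition is used only later, in the proof of Theorem~\ref{theo:on-algo2-convergence}, to control the extrapolated term $\langle A(\theta_{t+1}-\bar\theta_t),\omega_{t+1}-\omega\rangle$ with $\bar\theta_t=2\theta_t-\theta_{t-1}$. (You may notice that the subgradient inclusion at $\omega_{t+1}$ and the use of $A^{\top}\omega_{t+1}$ in the primal identity are literally the optimality conditions for \emph{implicit} proximal steps rather than for the explicit gradient updates written in the statement; the paper does not comment on this mismatch, but that is the argument it presents.)
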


\begin{proof} (\textbf{Proof of Proposition \ref{app3:prop-inequality}})
	Let sub-gradients of $f$ at $x$ be denoted as $\partial f(x)$, $\partial f(x)=\{g| f(x)-f(y)\leq g^{T}(x-y) ,\forall y \in \textbf{dom}(f)\}$.
By the definition of sub-gradient , we have 
	$
	\frac{\omega_{t}-\omega_{t+1}}{\beta}+A{\theta}_{t}\in\partial F(\omega_{t+1}).
	$
	Since $F$ is convex,
	then for any$(\theta,\omega)\in D_{\theta}\times D_{\omega}$ the following holds
	\begin{flalign}
	\nonumber
	F(\omega)\ge F(\omega_{t+1})+\langle\frac{\omega_{t}-\omega_{t+1}}{\beta}+A{\theta}_{t},\omega-\omega_{t+1}
	\rangle.
	\end{flalign}
	By the \emph{law of cosines}: $2\langle a-b,c-b \rangle=\|a-b\|^{2}+\|b-c\|^{2}-\|a-c\|^{2}$, we have
	\begin{flalign}
	\nonumber
	0&=\frac{1}{2\alpha}\Big(\|\theta_{t}-\theta_{t+1}\|^{2}+\|\theta_{t+1}-\theta\|^{2}-\|\theta_{t}-\theta\|^{2}\Big)+\underbrace{\langle -A^{\top}\omega_{t+1},\theta-\theta_{t+1}\rangle}_{=-\langle A(\theta-\theta_{t+1}),\omega_{t+1}\rangle},\\
	\nonumber
	F(\omega)&\ge F(\omega_{t+1})+\frac{1}{2\beta}\Big(\|\omega_{t}-\omega_{t+1}\|^{2}+\|\omega_{t+1}-\omega\|^{2}-\|\omega_{t}-\omega\|^{2}\Big)+\langle A{\theta}_{t},\omega-\omega_{t+1}
	\rangle,
	\end{flalign}
	summing them implies the following inequality,
	\begin{flalign}
	\nonumber
	&\frac{1}{2\alpha}\|\theta-\theta_{t}\|^{2}+\frac{1}{2\beta}\|\omega-\omega_{t}\|^{2}\\
	\nonumber
	\ge&\frac{1}{2\alpha}(\|\theta_{t}-\theta_{t+1}\|^{2}+\|\theta_{t+1}-\theta\|^{2})+\frac{1}{2\beta}(\|\omega_{t}-\omega_{t+1}\|^{2}+\|\omega_{t+1}-\omega\|^{2})\\
	\nonumber
	&+\Big(\langle -A\theta,\omega_{t+1}\rangle+F(\omega_{t+1})\Big)-\Big(\langle -A\theta_{t+1},\omega\rangle+F(\omega)\Big)\\
	\nonumber
	&-\Big\langle -A(\theta_{t+1}-{\theta}_{t}),\omega_{t+1}-\omega\Big\rangle,
	\end{flalign}
	which is we want to prove.
\end{proof}
\begin{proof}(\textbf{Proof of Theorem \ref{theo:on-algo2-convergence}})  
Let $\bar{\theta}_{t}=2\theta_{t}-\theta_{t-1}$,$\epsilon=\sqrt{\frac{\beta}{\alpha}}$. 
	then for any $(\theta,\omega)\in D_{\theta}\times D_{\omega}$:
	\begin{flalign}
	\nonumber
	\Big\langle {A}(\theta_{t+1}-\bar{\theta}_{t}),\omega_{t+1}-\omega\Big\rangle=&\Big\langle A\Big((\theta_{t+1}-\theta_{t})-(\theta_{t}-\theta_{t-1})\Big),\omega_{t+1}-\omega\Big\rangle\\
	\nonumber
	=&\Big\langle A(\theta_{t+1}-\theta_{t}),\omega_{t+1}-\omega\Big\rangle-\Big\langle A(\theta_{t}-\theta_{t-1}),\omega_{t+1}-\omega_{t}\Big\rangle
	\\
	\nonumber
	&~~~
	-\Big\langle A(\theta_{t}-\theta_{t-1}),\omega_{t}-\omega\Big\rangle\\
	\nonumber
	\ge&\Big\langle A(\theta_{t+1}-\theta_{t}),\omega_{t+1}-\omega\Big\rangle
	-\|A\|_{*}\|\theta_{t}-\theta_{t-1}\|\|\omega_{t+1}-\omega_{t}\|
	\\
	\nonumber
	&~~~
	-\Big\langle A(\theta_{t}-\theta_{t-1}),\omega_{t}-\omega\Big\rangle\\
	\nonumber
	\ge&\Big\langle A(\theta_{t+1}-\theta_{t}),\omega_{t+1}-\omega\Big\rangle-\|A\|_{*}\Big(\frac{\epsilon}{2}\|\theta_{t}-\theta_{t-1}\|^{2}+\frac{1}{2\epsilon}\|\omega_{t+1}-\omega_{t}\|^{2}\Big)
	\\
	\nonumber
	&~~~-\Big\langle A(\theta_{t}-\theta_{t-1}),\omega_{t}-\omega\Big\rangle.
	\end{flalign}
	By the inequality in Proposition \ref{app3:prop-inequality}, we have
	\begin{flalign}
	\nonumber
	\frac{1}{2\alpha}\|\theta-\theta_{t}\|^{2}+\frac{1}{2\beta}\|\omega-\omega_{t}\|^{2}
	\ge&\frac{1}{2\alpha}\Big(\|\theta_{t+1}-\theta\|^{2}+
	\|\theta_{t}-\theta_{t+1}\|^{2}\Big)-\sqrt{\alpha\beta}\|A\|_{*}\frac{\|\theta_{t}-\theta_{t-1}\|^{2}}{2\alpha}\\
	\nonumber
	&+(1-\sqrt{\alpha\beta}\|A\|_{*})\frac{1}{2\beta}\|\omega_{t}-\omega_{t+1}\|^{2}+\frac{1}{2\beta}\|\omega_{t+1}-\omega\|^{2}\\
	\nonumber
	&+\Big(\langle -A\theta,\omega_{t+1}\rangle+F(\omega_{t+1})\Big)-\Big(\langle -A\theta_{t+1},\omega\rangle+F(\omega)\Big)\\
	\label{app:inequality-1}
	&+\Big\langle -A(\theta_{t}-\theta_{t-1}),\omega_{t}-\omega\Big\rangle-\Big\langle -A(\theta_{t+1}-\theta_{t}),\omega_{t+1}-\omega\Big\rangle.
	\end{flalign}
	Summing the Eq.(\ref{app:inequality-1}) from $t=0:T-1$
	\begin{flalign}
	\nonumber
	\frac{1}{2\alpha}\|\theta-\theta_{0}\|^{2}+\frac{1}{2\beta}\|\omega-\omega_{0}\|^{2}
	\ge
	&\frac{1}{2\alpha}\Big(\|\theta_{T}-\theta\|^{2}+
	\|\theta_{T}-\theta_{T-1}\|^{2}\Big)-\sqrt{\alpha\beta}\|A\|_{*}\sum_{t=1}^{T-1}\frac{\|\theta_{t}-\theta_{t-1}\|^{2}}{2\alpha}\\
	\nonumber
	&+(1-\sqrt{\alpha\beta}\|A\|_{*})\sum_{t=0}^{T-1}\frac{1}{2\beta}\|\omega_{t}-\omega_{t+1}\|^{2}+\frac{1}{2\beta}\|\omega_{T}-\omega\|^{2}\\
	\nonumber
	&+\sum_{t=0}^{T-1}\big [\big(\langle- A\theta,\omega_{t+1}\rangle+F(\omega_{t+1})\big)-\big(\langle -A\theta_{t+1},\omega\rangle+F(\omega)\big)\big]\\
	\nonumber
	&-\big\langle A(\theta_{T}-\theta_{T-1}),\omega_{T}-\omega\big\rangle.
	\end{flalign}
	By the Cauchy-Schwarz inequality ${\displaystyle |\langle \mathbf {u} ,\mathbf {v} \rangle |\leq \|\mathbf {u} \|\|\mathbf {v} \|}
	\leq\dfrac{1}{2}(\|\mathbf {u} \|^{2}+\|\mathbf {v} \|^{2})$, we have \[\Big\langle A(\theta_{T}-\theta_{T-1}),\omega_{T}-\omega\Big\rangle\leq\dfrac{1}{2\alpha}\|\theta_{T}-\theta_{T-1}\|^{2}+\alpha\beta\|A\|_{*}^{2}\dfrac{1}{2\beta}\|\omega_{T}-\omega\|^{2},\] then the following holds, for any $(\theta,\omega)\in D_{\theta}\times D_{\omega}$:
	\begin{flalign}
	\nonumber
	&\frac{1}{2\alpha}\|\theta-\theta_{0}\|^{2}+\frac{1}{2\beta}\|\omega-\omega_{0}\|^{2}\\
	\nonumber
	\ge&
	\sum_{t=0}^{T-1}\big [\big(\langle -A\theta,\omega_{t+1}\rangle+F(\omega_{t+1})\big)-\big(\langle -A\theta_{t+1},\omega\rangle+F(\omega))\big)\big]\\
	\nonumber
	&+(1-\sqrt{\alpha\beta}\|A\|_{*})\sum_{t=0}^{T-1}\frac{1}{2\beta}\|\omega_{t}-\omega_{t+1}\|^{2}+(1-\alpha\beta\|A\|_{*}^{2})\frac{1}{2\beta}\|\omega_{T}-\omega\|^{2}\\
	\label{app:inequality-2}
	&+\frac{1}{2\alpha}\|\theta_{T}-\theta\|^{2}+(1-\sqrt{\alpha\beta}\|A\|_{*})\sum_{t=1}^{T-1}\frac{\|\theta_{t}-\theta_{t-1}\|^{2}}{2\alpha}.
	\end{flalign}
	Let $\bar{\theta}_{T}=\dfrac{\sum_{t=0}^{T-1}\theta_{t}}{T}$,
	$\bar{\omega}_{T}=\dfrac{\sum_{t=0}^{T-1}\omega_{t}}{T}$ and we chose the step-size $\alpha,\beta$ satisfy $1-\sqrt{\alpha\beta}\|A\|>0$. By the convexity of $F(\omega)$ and $G(\theta)$, then we deduce from (\ref{app:inequality-2}):
	\begin{flalign}
	\label{error-bound}
	\Big(\underbrace{\langle -A\theta,\bar{\omega}_{T}\rangle+F(\bar{\omega}_{T})}_{-\Psi(\theta,\bar{\omega}_{T})}\Big)-\Big(\underbrace{\langle -A\bar{\theta}_{T},\omega\rangle+F(\omega)}_{-\Psi(\bar{\theta}_{T},\omega)}\Big)
	\leq\dfrac{1}{T}\bigg(\dfrac{\|\theta-\theta_{0}\|^{2}}{2\alpha}+\dfrac{\|\omega-\omega_{0}\|^{2}}{2\beta}\bigg).
	\end{flalign}
	By Eq.(\ref{error-bound}), we have 
	\[
	\mathbb{E}[\epsilon_{\Psi}(\bar{\theta}_{T},\bar{\omega}_{T})]\leq\sup_{(\theta,\omega)}\Bigg\{\dfrac{1}{T}\bigg(\dfrac{\|\theta-\theta_{0}\|^{2}}{2\alpha}+\dfrac{\|\omega-\omega_{0}\|^{2}}{2\beta}\bigg)\Bigg\}.
	\]
\end{proof}
\clearpage
\section{Appendix J: Details of Experiments}

\label{ex-detail}

\textbf{MountainCar}
Since the state space of mountaincar domain is continuous, we use the open tile coding software 
\url{http://incompleteideas.net/rlai.cs.ualberta.ca/RLAI/RLtoolkit/tilecoding.html} to extract feature of states.

In this experiment, we set the number of tilings to be 4 and there are no white noise features. 
The performance is an average 5 runs and each run contains 5000 episodes. 
We set $\lambda=0.99$, $\gamma=0.99$.  
The MSPBE/MSE distribution is computed over the combination of step-size,
$(\alpha_{k},\frac{\beta_{k}}{\alpha_{k}})\in[0.1\times 2^{j}|j = -10,-9,\cdots,-1, 0]^{2}$, and $\lambda=0.99$.
Following suggestions from Section10.1 in \cite{sutton2018reinforcement}, we set all the initial state-action values to be  0, which is optimistic to cause extensive exploration.

\textbf{Baird Example} The Baird example considers the episodic seven-state, two-action MDP.
The $\mathtt{dashed}$ action takes the system to one of the six upper states with equal probability, whereas the $\mathtt{solid}$ action takes the system to the seventh state. 
The behavior policy $b$ selects the $\mathtt{dashed}$ and $\mathtt{solid}$ actions with probabilities $\frac{6}{7}$ and $\frac{1}{7}$, so that the next-state distribution under it is uniform (the
same for all nonterminal states), which is also the starting distribution for each episode. The target policy $\pi$ always takes the solid action, and so the on-policy distribution (for $\pi$) is concentrated in the seventh state. The reward is zero on all transitions. The discount rate is  $\gamma=0.99$.
The feature $\phi(\cdot,{\mathtt{dashed}})$ and $\phi(\cdot,{\mathtt{solid}})$ are defined as follows,
\begin{flalign}
\nonumber
\phi(\mathtt{s_1},{\mathtt{dashed}})& = 
(2,0,0,0,0,0,0,1,0,0,0,0,,0,,0,0,0,0 )\\
\nonumber
\phi(\mathtt{s_2},{\mathtt{dashed}}) &= 
(0,2,0,0,0,0,0,1,0,0,0,0,,0,,0,0,0,0 )\\
\nonumber
&\cdots\\
\phi(\mathtt{s_7},{\mathtt{dashed}}) &= 
(0,0,0,0,0,0,2,1,0,0,0,0,,0,,0,0,0,0 ),
\end{flalign}
\begin{flalign}
\nonumber
\phi(\mathtt{s_1},{\mathtt{solid}})& = 
(0,0,0,0,0,0,0,0,2,0,0,0,,0,,0,0,0,1 )\\
\nonumber
\phi(\mathtt{s_2},{\mathtt{solid}}) &= 
(0,0,0,0,0,0,0,0,0,2,0,0,,0,,0,0,0,1 )\\
\nonumber
&\cdots\\
\phi(\mathtt{s_7},{\mathtt{solid}}) &= 
(0,0,0,0,0,0,0,0,0,0,0,0,,0,,0,0,2,1 ).
\end{flalign}

\end{document}